\setlist{noitemsep}
\definecolor{firebrick}{rgb}{0.7, 0.13, 0.13}
\definecolor{darkblue}{rgb}{0,0,0.55}
\definecolor{grey}{rgb}{0.8,0.8,0.8}
\def\eg{\emph{e.g.,}\xspace}
\def\ie{\emph{i.e.,}\xspace}
\def\eqref#1{equation~\ref{#1}}
\def\floor#1{\lfloor #1 \rfloor}
\def\1{\bm{1}}
\def\rvr{{\mathbf{r}}}
\def\rvz{{\mathbf{z}}}
\def\rmR{{\mathbf{R}}}
\def\rmZ{{\mathbf{Z}}}
\DeclareMathAlphabet{\mathsfit}{\encodingdefault}{\sfdefault}{m}{sl}
\SetMathAlphabet{\mathsfit}{bold}{\encodingdefault}{\sfdefault}{bx}{n}
\newcommand{\tens}[1]{\bm{\mathsfit{#1}}}
\def\tE{{\tens{E}}}
\def\gD{{\mathcal{D}}}
\def\gR{{\mathcal{R}}}
\def\gT{{\mathcal{T}}}
\def\gZ{{\mathcal{Z}}}
\def\sA{{\mathbb{A}}}
\def\sS{{\mathbb{S}}}
\newcommand{\E}{\mathbb{E}}
\newcommand{\R}{\mathbb{R}}
\newcommand{\N}{\mathcal{N}}
\DeclareMathOperator*{\argmax}{arg\,max}
\DeclareMathOperator*{\argmin}{arg\,min}
\def\eg{\emph{e.g.,}\xspace}
\def\ie{\emph{i.e.,}\xspace}
\begin{document}
%-------------------------------------------------------------------------------

%don't want date printed
\date{}

\title{\Large \bf CAMP in the Odyssey: \\
Provably Robust Reinforcement Learning with Certified Radius Maximization}

\author[$\heartsuit$,$\spadesuit$]{Derui Wang}
\author[$\heartsuit$,$\spadesuit$]{Kristen Moore}
\author[$\heartsuit$,$\spadesuit$]{Diksha Goel}
\author[$\heartsuit$,$\spadesuit$]{Minjune Kim}
\author[$\clubsuit$]{Gang Li}
\author[$\clubsuit$]{Yang Li}
\author[$\clubsuit$]{Robin Doss}
\author[$\heartsuit$,$\spadesuit$]{Minhui Xue}
\author[$\lozenge$]{Bo Li}
\author[$\heartsuit$,$\spadesuit$]{Seyit Camtepe}
\author[$\heartsuit$]{Liming Zhu}
\affil[$\heartsuit$]{\textit{CSIRO's Data61, Australia}}
\affil[$\spadesuit$]{\textit{Cyber Security Cooperative Research Centre, Australia}}
\affil[$\clubsuit$]{\textit{Deakin University, Australia}}
\affil[$\lozenge$]{\textit{University of Chicago, USA}}

\maketitle

%-------------------------------------------------------------------------------
\begin{abstract}
%-------------------------------------------------------------------------------
Deep reinforcement learning (DRL) has gained widespread adoption in control and decision-making tasks due to its strong performance in dynamic environments. 
However, DRL agents are vulnerable to noisy observations and adversarial attacks, and concerns about the adversarial robustness of DRL systems have emerged. 
Recent efforts have focused on addressing these robustness issues by establishing rigorous theoretical guarantees for the returns achieved by DRL agents in adversarial settings. 
Among these approaches, policy smoothing has proven to be an effective and scalable method for certifying the robustness of DRL agents.
%by calculating the worst-case expected return under a specified amount of observation perturbations.
Nevertheless, existing certifiably robust DRL relies on policies trained with simple Gaussian augmentations, resulting in a suboptimal trade-off between certified robustness and certified return. 
To address this issue, we introduce a novel paradigm dubbed \texttt{C}ertified-r\texttt{A}dius-\texttt{M}aximizing \texttt{P}olicy (\texttt{CAMP}) training. 
\texttt{CAMP} is designed to enhance DRL policies, achieving better utility without compromising provable robustness. 
By leveraging the insight that the global certified radius can be derived from local certified radii based on training-time statistics, \texttt{CAMP} formulates a surrogate loss related to the local certified radius and optimizes the policy guided by this surrogate loss. 
We also introduce \textit{policy imitation} as a novel technique to stabilize \texttt{CAMP} training.
Experimental results demonstrate that \texttt{CAMP} significantly improves the robustness-return trade-off across various tasks. 
Based on the results, \texttt{CAMP} can achieve up to twice the certified expected return compared to that of baselines.
Our code is available at \href{https://github.com/NeuralSec/camp-robust-rl}{https://github.com/NeuralSec/camp-robust-rl}.
\end{abstract}

\section{Introduction}
Deep reinforcement learning (DRL) has widespread applications in various areas, including robotics~\cite{levine2016end,andrychowicz2020learning,fernandez2023deep}, games~\cite{silver2017mastering,mnih2015human}, dialogue systems~\cite{li2017end}, and finance~\cite{yang2020deep}.
However, DRL agents are vulnerable to adversaries who actively perturb the agents' observations. These perturbations cause agents to take sub-optimal actions, potentially leading to severe consequences in critical missions~\cite{huang2017adversarial,behzadan2017vulnerability,pattanaik2018robust,gleave2020adversarial,sun2020stealthy,lin2020robustness}. 
In robotics applications, attackers manipulating DRL agents's observations can cause permanent damage to the robots or even create life-threatening situations. 
On the other hand, attacks against DRL in dialogue or trading systems may also lead to biased or harmful decisions by the system.
Adversarial attacks against DRL agents typically involve perturbing state observations of the agents or directly modifying their actions. In experimental settings and simulations, the impact of these attacks is often quantified by the sub-optimal rewards obtained by the compromised agents.

Empirical defenses have been developed for DRL agents to mitigate the risk posed by deliberate adversaries.
One line of work focuses on training robust DRL policies, aiming to make agents less sensitive to changes in their observations~\cite{zhang2020robust,shen2020deep, yang2022rorl}.
Methods also exist that model robust Markov Decision Processes (MDPs) to enhance the performance of DRL in noisy or adversarial environments~\cite{bander1999markov,zhang2020robust,liu2023robustness}.
However, the empirical defenses fail to provide rigid guarantees of the agents' adversarial robustness, leaving them vulnerable to cross-attack generalization and adaptive attacks. 

In efforts to verify DRL robustness, randomized smoothing (RS) has emerged as a powerful and scalable tool for both enhancing and certifying the robustness of black-box functions in real-world applications~\cite{cohen2019certified}.
RS has been applied to DRL policies through policy smoothing, achieving provable robustness against adversarial attacks~\cite{kumar2021policy}.
Compared to empirical DRL defenses (\eg adversarial training~\cite{kamalaruban2020robust,vinitsky2020robust}), RS provides guaranteed lower or upper bounds on the expected return (\ie expected cumulative discounted rewards going forward) with respect to a certain amount of changes in the observation.
Given that smoothing noise from previous steps influences both observations and subsequent smoothing noise, RS in DRL typically constructs a specially structured adversary for certification. This approach enables certification under the Neyman-Pearson lemma or other algorithmic variants~\cite{kumar2021policy, wu2022crop, mu2024reward}.

\noindent \textbf{Motivation and challenges.~} 
A direct motivation for our work is the absence of methods specifically designed to train DRL agents with RS-based provable robustness. 
Certified robustness through RS often presents challenges that existing methods cannot effectively address.
Specifically, there is a trade-off between the certified expected return and the certified radius of adversarial perturbations, leading to two key consequences:
1) First, certifying an expected return against adversaries using large perturbations becomes infeasible, since the certified expected return rapidly drops to zero as the certified radius increases.
2) Additionally, the certified expected return of a randomized agent is significantly lower than the standard return the same agent can achieve in a non-randomized environment, underscoring the need to improve the overall certified expected return.
These consequences hinder the deployment of policy smoothing in real-world DRL applications, where both certified robustness and certified utility are crucial.

Existing provable defenses train the policy for smoothing via applying Gaussian noise to the observations~\cite{kumar2021policy}.
The noise applied during training should match the smoothing noise used in the test phase to prevent degradation in the test-time reward. 
However, such a policy may not achieve optimal certification results, as the augmentation does not directly maximize the certified radius during training, leaving the correlation between the augmentation and the trade-off unclear. 
On the other hand, there also exist robust DRL training methods that enhance the robustness and smoothness of DRL agents operating in noisy or adversarial environments~\cite{shen2020deep, yang2022rorl, zhang2020robust, liu2023robustness}.
However, instead of improving the certified expectations of returns received through randomized policies, these methods primarily focus on the epistemic robustness of deterministic policies.
Moreover, the desired goals of robustness and utility cannot be easily achieved using existing methods for training classifiers with certified robustness~\cite{salman2019provably,zhai2020macer,jeong2021smoothmix,jeong2020consistency}.
The reasons for this are 1) DRL tasks are MDPs, which introduce optimization objectives that diverge from that of classification tasks.
2) The certified radius of perturbations is determined by numerical methods like binary search, which makes it difficult to directly optimize the radius in order to enhance the robustness. 
These two key characteristics of DRL necessitate novel paradigms in the construction and training of the policy.

\noindent \textbf{Our stance and contributions.~}
Motivated by these factors, we propose \texttt{C}ertified-r\texttt{A}dius-\texttt{M}aximizing \texttt{P}olicy (\texttt{CAMP}) training, a method for developing DRL agents that achieve better certified expected returns without compromising certified robustness.
Theoretically, we derive a substitute certified radius that can be maximized in the training, in parallel to the utility maximization process.
This enables \texttt{CAMP} to mitigate the trade-off between the certified expected return and the certified radius by offering a superior base function.
Our focus is on agents trained using deep Q-learning~\cite{mnih2015human} in environments with discrete action spaces.
Through a change of variables, we formally define the certified radius as a function of a target threshold for the certified expected return. 
This certified radius is then transformed into a differentiable format for further analysis.
Building on the insight that the decline in optimal certified expected return is due to accumulated per-step errors, we derive a local certified radius that serves as a surrogate loss for optimizing the certified radius.

Furthermore, to enhance the versatility and training stability of \texttt{CAMP}, we introduce a novel training paradigm called policy imitation. 
Since the application of \texttt{CAMP} can influence the Bellman error, it may result in the overestimation of Q-values during training. 
This issue can impede the convergence of the training process and often leads to suboptimal policies~\cite{van2016deep,peer2021ensemble}.
Previous solutions, such as Double DQN~\cite{van2016deep}, ensemble bootstrapping~\cite{peer2021ensemble}, and conservative smoothing~\cite{shen2020deep,yang2022rorl}, are not designed to directly address this problem within the \texttt{CAMP} framework. 
Consequently, a new training paradigm is needed to effectively integrate \texttt{CAMP} into DRL training pipelines.
Importantly, we aim for \texttt{CAMP} to be applicable across various learning environments. 
Policy imitation addresses the aforementioned challenges by using a reference network to model the oracle Q-values, which can then be used to constrain the primary policy during training.
The reference network is trained using conventional temporal-difference methods, while the primary network imitates the reference network, minimizing the difference between their predicted actions for the same observations. 
Additionally, the \texttt{CAMP} loss is applied to the primary policy to enhance its certified robustness radius. 
To summarize, our contributions are as follows:
\begin{itemize}[leftmargin=*]
    \item We analyze the certified radius for globally smoothed policies and reformulate it as a radius maximization objective, which can be directly optimized during the training process.
    \item We propose a novel policy imitation training paradigm that integrates certified radius maximization into deep Q-learning. 
    \item \texttt{CAMP} achieves higher certified expected returns at larger certified radii and demonstrates superior empirical robustness in individual game episodes.
\end{itemize}

In the following paper, we will first introduce the necessary background knowledge, along with the definitions of the problem and threat model. 
We will then proceed to present \texttt{CAMP} and discuss the associated experiments.

%%%%%%%%%%%%%%%%%%%%%%%%%%%%%%%%%%%%%%%%%%%%%%%%%%%%%%%%%%%%%%%%%%%%%%%%%%%%%%%%%%%%%%%%%%%%%%%%%%%%%%%%%%%%%%%%%%%%%%%%%%%%%%%%%%%%%%%%%%%%
\section{Preliminaries}
To begin, we provide a brief overview of deep Q-learning, adversarial attacks on DRL, and the certified robustness of DRL. 
This paper focuses on discrete action spaces and emphasizes robustness certification using policy smoothing and the Neyman-Pearson Lemma, as these methods offer a tight bound on the certified expected return.

%%%%%%%%%%%%%%%%%%%%%%%%%%%%%%%%%%%%%%%%%%%%%%%%%%%%%%%%%%%%%%%%%%%%%%%%%%%%%%%%%%%%%%%%%%%%%%%%%%%%%%%%%%%%%%%%%%%%%%%%%%
\subsection{Deep Q-Learning}\label{subsec:drl_background}
A reinforcement learning task is usually defined as a discrete MDP.
Specifically, an agent interacts with an environment $\tE = (\sS, \sA, \gT, \gR, \gamma, s_{0})$, where $\sS\in\R^d$ and $\sA\in\R^k$ are the space of states and the space of actions, respectively. 
$\gT(\cdot)$ is a state transition function and $\gR(\cdot)$ is a reward function.
$s_0$ comes from an initial distribution of states. 
$\gamma\in[0,1]$ is a discount factor for the reward at different steps.
Furthermore, the state at the $t+1$-th step is determined by $s_{t+1} = \gT(s_{t}, a_{t})$.
In this paper, following previous certification settings, the states are fully observable by a deterministic observation function.
Therefore, we use $s$ interchangeably to denote both the state and the observation.
Given the state $s_{t}$ and the corresponding action $a_{t}$ at the $t$-th step, the reward $r_{t+1}$ acquired by $a_{t}$ is computed by the reward function as $r_{t+1} = \gR(s_{t}, a_{t})$.
% and a finite-horizon return function $G_t = \sum_{k=0}^T \gamma^t \gR(s_{t+k+1}, a_{t+k+1})$ computes the discounted return.

In practice, the transition function is determined by nature and is therefore unknown, which encourages the development of model-free RL algorithms. 
Among these algorithms, Q-learning~\cite{watkins1992q} aims to model the action-value function (\ie Q-function) $Q_{\pi}(s, a)$ which estimates the expected future return for taking action $a$ in state $s$ and following a policy.
DQN models $Q_{\pi}(s, a)$ with a neural network (Q-network) and uses a greedy policy to select actions.
For simplicity, we use $\pi$ to denote both the \textit{Q-network} and its \textit{parameters} in this paper.
Thus, at the $t$ step, the action selection policy can be formulated as taking an observed state $s_t$, predicting a probability simplex $\{\pi(s_{t})_i\}_{i=1}^{|\sA|}$ over $\sA$, and then selecting $a = \argmax_{i} \pi(s_{t})_i$.

To update the network parameters $\pi$ towards an optimal Q-function, a Temporal-Difference (TD) loss is defined as the difference between the current Q-value and an estimated optimal value.
Specifically, the optimal Q-value is the solution to the Bellman equation, which can be calculated as:
\begin{equation}\small
    Q_{\pi^*}(s,a) = \gR(s, a) + \gamma \underset{s'\sim\gT(s,a)}{\E} \left[ \max_{a^*} Q_{\pi^*}(s', a^*) \right],
\end{equation}
where $\gamma \in [0,1]$ is a discount factor penalizing the rewards in the future.
$Q_{\pi^*}(s', a^*)$ is the optimal Q-value in which $s'$ is the next state according to the transition function $\gT(s, a)$ and $a^*$ is the best action estimated greedily for the next step by the optimal network denoted as $\pi^*$.
The overall training objective of the agent can be formulated as minimizing the mean squared Bellman error between the current Q-function and the optimal Q-function:
\begin{equation}\small\label{eq:MSBE_dqn}
    \begin{aligned}
        \min_{\pi} \underset{(s, a, s')\sim \rmZ}\E \left[ Q_{\pi}(s,a) - \left( \gR(s, a) + \gamma \max_{a'} Q_{\pi^*}(s', a') \right) \right]^2,
    \end{aligned}
\end{equation}
where $\rmZ$ is the space of state-action trajectories.
In practice, an experience replay buffer or an offline dataset stores a set of previously collected trajectories as samples from $\rmZ$, such that the above training objective can be solved by empirical risk minimization.

%%%%%%%%%%%%%%%%%%%%%%%%%%%%%%%%%%%%%%%%%%%%%%%%%%%%%%%%%%%%%%%%%%%%%%%%%%%%%%%%%%%%%%%%%%%%%%%%%%%%%%%%%%%%%%%%%%%%%%%%%%
\subsection{Expected Return Certification against Perturbed Observations}\label{subsec:cdf_smoothing}
\noindent \textbf{Adversarial observation perturbations against DRL.~}
An adversary targeting a DRL agent aims to perturb the agent's observations during test time, thereby minimizing the return within a finite horizon $T$.
A simple adversarial attack adapts the Fast Gradient Sign Method (FGSM) to perturb observations with a sequence of perturbations $\Delta=(\delta_0,\delta_1,...,\delta_{T-1})$, $\Delta\in\R^{d\times T}$ and alter the actions of the DRL agent, causing the agent to reject the correct action and instead select the one with the minimal Q-value~\cite{huang2017adversarial}.
On the other hand, another type of attack finds a perturbation sequence $\Delta$, as the solution of
\begin{equation}\small\label{eq:adv_objective}
    \begin{aligned}
        \argmin_{\Delta}\ & \underset{\delta_t\in\Delta, s_t}{\E} & \, \left[ \sum_{t=0}^{T-1} \gamma^t \gR\left(s_t, \argmax_{a}\pi(s_t+\delta_t)_a\right) \right],\\
    \end{aligned}
\end{equation}
In practice, the attacker can find per-state perturbation as $\delta_t = \argmin_{\delta_t} \gD_{J}\left( \pi(s_t) \| \pi(s_t+\delta_t)\right)$, where $\gD_{J}$ is the Jeffrey's Divergence~\cite{yang2022rorl}.
In both attacks, the adversarial perturbations in the sequence $\Delta$ are collectively constrained within an region $B(\Delta)=\{(\delta_0,\delta_1,...,\delta_{T-1}):\, \sqrt{\sum_{t=0}^{T-1}\|\delta_t\|_{2}^2}\leq \tau \}$ under a budget of $\tau$.

\noindent \textbf{Robustness certification via policy smoothing.~} 
In response, policy smoothing emerges as a provable defense against adversarial attacks. Policy smoothing involves adding noise sampled from a Gaussian distribution $\N(0, \sigma^2I)$ with variance $\sigma^2$ to state observations at each step, thereby randomizing the state-action trajectory of the DRL agent. 
By repeating this process across multiple runs, a set of randomized trajectories along with their corresponding rewards can be collected, allowing the expected return of the agent to be lower bounded.

Specifically, let the sequence of the smoothing noises across $T$ steps (\eg within an episode) be $\varepsilon = (\epsilon_0, \epsilon_1,...\epsilon_{T-1})$, the defender can sample random trajectories $\rvz = (s_0+\epsilon_0, a_0, ..., s_{T-1}+\epsilon_{T-1}, a_{T-1})$, $\rvz \in \rmZ$, where $\rmZ$ is the space of randomized state-action trajectories.
Furthermore, a randomized reward $r_{t+1} = \gR(s_t, \argmax_a \pi(s_t+\epsilon_t)_a)$ is received given the observed state and action output by the policy.
The return obtained by a random trajectory is $\rvr = \sum_{t=0}^{T-1} \gamma^t r_{t+1}$ and $\gamma$ is the discount factor.
Therefore, such a return can be viewed as a randomized return function $\rvr = F_{\pi}(\rvz)$ of the trajectory $\rvz$, parameterized by the policy parameters $\pi$.
Let us consider the probability $P_{\pi}^{\rvz}(C) = \Pr[ F_{\pi}(\rvz) \geq C ]$, and we can accordingly define $\Psi(C) = 1 - P_{\pi}^{\rvz}(C)$.
It is easy to notice that $\Psi(C)$ is a cumulative distribution function (CDF), \ie $\Psi(C) = \Pr[ F_{\pi}(\rvz) \leq C]$.
By sampling $F_{\pi}(\rvz)$ values for $m$ times via allowing the agent to play on random trajectories for $m$ runs, a set $\rmR = \{\rvr_1, ..., \rvr_m\}$ of return values sorted in ascending order can be obtained to compute an empirical cumulative distribution function (ECDF) $\tilde{\Psi}(C) = \sum_{i=1}^{m}\mathds{1}_{\{\rvr_i< C\}}/m, \forall \rvr_i\in \rmR$. 
Next, the interval $[\underline{\Psi}(C), \overline{\Psi}(C)]$ of the CDF ${\Psi}(C)$ can be computed from the ECDF by Dvoretzky–Kiefer–Wolfowitz Inequality~\cite{dvoretzky1956asymptotic}:
\begin{equation}\small
    \overline{\Psi}(C) = \tilde{\Psi}(C) + \sqrt{\frac{\ln(2/\alpha)}{2m}},
\end{equation}
with probability $1-\alpha$ and the draw count $m$.
Since $ {\Psi}(C) = 1 - P_{\pi}^{\rvz}(C)$, the upper bound $\overline{\Psi}(C)$ can thus be used to obtain a probabilistic lower bound of $P_{\pi}^{\rvz}(C)$,  which gives $\underline{P_{\pi}^{\rvz}}(C) = 1-\overline{\Psi}(C)$.

 Consider an adversary who adds a perturbation sequence $\Delta=(\delta_1,\delta_2,...,\delta_{T-1})$ under a $\ell_2$-norm budget of $\tau$ (\ie $\|\Delta\|_2=\sqrt{\sum_{t=0}^{T-1}\|\delta_t\|_2^2},\, \|\Delta\|_2 \leq \tau$) to the state observations and let the perturbed random trajectory be $\rvz' := \rvz + \Delta = (s'_0+\epsilon_0, a'_0, ..., s'_{T-1}+\epsilon_{T-1}, a'_{T-1})$ in which $s'_t = s_t + \delta_t$ and $a'_t = \argmax_{a}\pi(s_t + \epsilon_t + \delta_t)_a$.
 The following perturbation bound on the probability $P_{\pi}^{\rvz'}(C)$ can be established by an adaptive Neyman-Pearson Lemma based on a structured adversary~\cite{kumar2021policy}:
 \begin{equation}\small\label{eq:cdf_bound}
\begin{aligned}
    P_{\pi}^{\rvz'}(C) \geq \Phi \left(\Phi^{-1} \left( \underline{P_{\pi}^{\rvz}}(C) \right)-\frac{\tau}{\sigma} \right),\, \forall\, \|\Delta\|_2 \leq \tau.
\end{aligned}
\end{equation}
Furthermore, since the expectation of $F_{\pi}(\rvz)$ can be represented as the integral of $P_{\pi}^{\rvz}(C)$ over $C$ values~\cite{kumar2020certifying}, a lower bound on the expected return under adversarial perturbations $\Delta$ can be obtained, given $C\in\rmR$, as:
\begin{equation}\small\label{eq:robustness_bound}
    \begin{aligned}
        \E[F_{\pi}(\rvz')] \geq & \rvr_1 \cdot \Phi \left(\Phi^{-1} \left( \underline{P_{\pi}^{\rvz}}(\rvr_1) \right)-\frac{\tau}{\sigma} \right) \\
        + & \sum_{i=2}^{m}(\rvr_i - \rvr_{i-1}) \cdot \Phi \left(\Phi^{-1} \left( \underline{P_{\pi}^{\rvz}}(\rvr_i) \right)-\frac{\tau}{\sigma} \right), \\
        \forall\, & \|\Delta\|_2 \leq \tau.
    \end{aligned}
\end{equation}
On the right-hand side, $\Phi(\cdot)$ is the cumulative density function (CDF) of standard Gaussian and $\Phi^{-1}(\cdot)$ is its inverse.

\begin{figure*}[t]
    \centering
    \includegraphics[width=1.\linewidth]{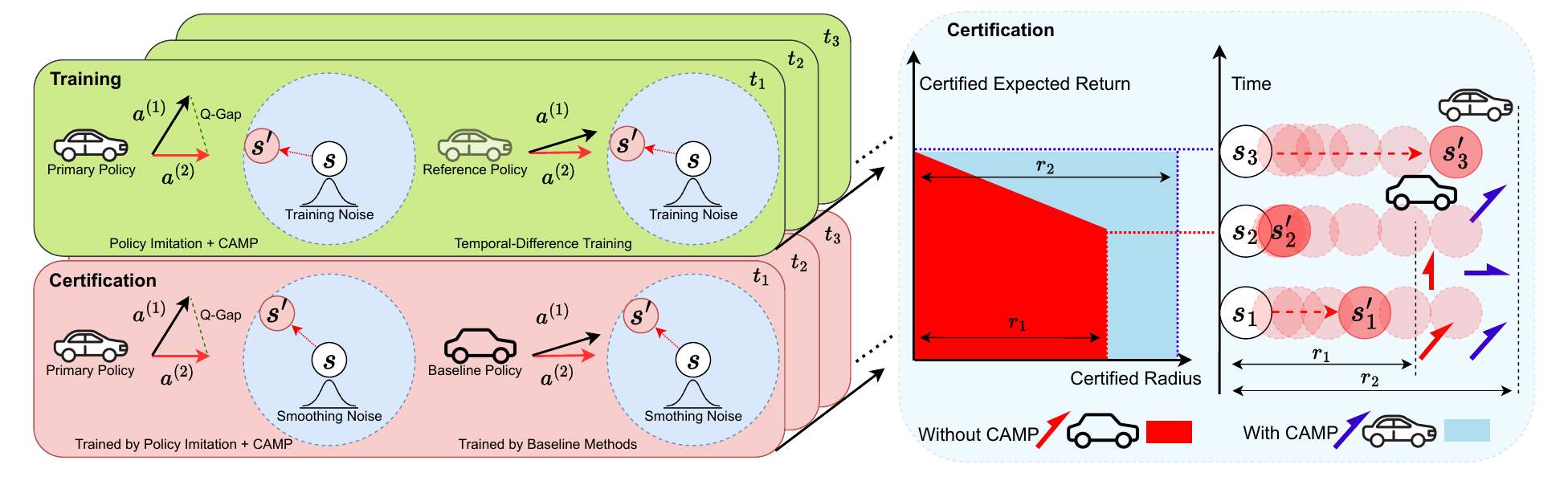}  
    \caption{
    An overview of \texttt{CAMP}. During training, the agents interact with the environments with observation noise and receive rewards.
    A reference policy has a reference Q-network learning through a vanilla temporal-difference loss while the Q-network of a primary policy is optimized by minimizing the \texttt{CAMP} loss to increase the gap between the top-1 and runner-up Q-values.
    The primary network also mimics the action predicted by the reference network during training with the imitation loss.
    The trained primary policy can then be certified by policy smoothing with better certified expected return at each certified radius.
    }
    \label{fig:overview}
\end{figure*}

%%%%%%%%%%%%%%%%%%%%%%%%%%%%%%%%%%%%%%%%%%%%%%%%%%%%%%%%%%%%%%%%%%%%%%%%%%%%%%%%%%%%%%%%%%%%%%%%%%%%%%%%%%%%%%%%%%%%%%%%%%%%%%%%%%%%%%%%%%%%
\section{Problem Formulation}
In this section, we present precise formulations of the research problem. 
We also define the threat model to reflect the settings for training and certifying DRL agents, as well as the key properties of the adversary.

%%%%%%%%%%%%%%%%%%%%%%%%%%%%%%%%%%%%%%%%%%%%%%%%%%%%%%%%%%%%%%%%%%%%%%%%%%%%%%%%%%%%%%%%%%%%%%%%%%%%%%%%%%%%%%%%%%%%%%%%%%
\subsection{Problem Statement}\label{sec:problem_statement}
We aim to maximize both the expected return and the robustness of the DRL agent. 
This desideratum can be decomposed into two sub-objectives: minimizing utility cost and minimizing robustness cost. 
Moreover, both cost minimizations should be seamlessly integrated into the training process of the DRL agent.

Following the above notations and Equations, we can observe that the lower bound of the expected return in Equation~\ref{eq:robustness_bound} has a positive correlation with the probabilities in the set $\{\underline{P_{\pi}^{\rvz}}(\rvr_i)\}_{i=1}^{m}$.
This observation implies that there is a monotonic connection between the utility and the probability of obtaining a return above each threshold $C_i$.
We can thus instantly formulate a utility cost as:
\begin{equation}\small\label{eq:utility_loss_intractable}
    \begin{aligned}
        & \ell_{ut}(\pi) = \sum_{i=1}^{m} \left( 1 - \underline{P_{\pi}^{\rvz}}(\rvr_i) \right).
    \end{aligned}
\end{equation}
However, this formulation makes minimizing $\ell_{ut}(\pi)$ an intractable problem due to non-differentiable $\underline{P_{\pi}^{\rvz}}(\rvr_i)$ and the sampling of $\rvz$ and $\rmR$.
To optimize $\ell_{ut}(\cdot)$, we will construct a differentiable surrogate loss in Section~\ref{sec:method} as an upper bound of Equation~\ref{eq:utility_loss_intractable}.

On the side of robustness, we also aim to derive a loss function indicating the extent of robustness.
It is natural to consider a closed-form representation of the certified radius as a function of $\pi$ to measure the robustness.
However, in policy smoothing, the certified radius (\ie the maximal $\tau$ making Equation~\ref{eq:robustness_bound} hold) is a variable rather than a function and is usually determined by binary search.
To address this challenge, it is noticeable that the certified lower bound in Equation~\ref{eq:robustness_bound} monotonically decreases with increasing $\tau$.
Therefore, the binary search can be viewed as finding a $\tau$ value such that the lower bound is no less than a predetermined value $\xi$.
To clearly formulate the process of optimizing $\tau$, our analysis stems from the following theorem:
\begin{restatable}
[\textit{Change of variable}]{theorem}{hardradius}\label{lemma:hardradius}
Given a target expected return threshold $\xi$ for the randomized policy, let the perturbed trajectory be $\rvz' = \rvz+\Delta$ and define $P_{\pi}^{\rvz}(C) := \Pr[ F_{\pi}(\rvz) \geq C ]$. 
Let $\rmR = \{\rvr_1,...,\rvr_m\}$ represent a set of sampled and sorted values of $F_{\pi}(\rvz)$ such that $\rvr_1 \leq \rvr_2 \leq ... \leq \rvr_m$.
If $\xi/\rvr_1 \leq \underline{P_{\pi}^{\rvz}}(\rvr_1))$ and
\small
\begin{equation}\label{eq:substitute_radius}
    \|\Delta\|_2 \leq \sigma \left[ \Phi^{-1}(\underline{P_{\pi}^{\rvz}}(\rvr_1)) -\Phi^{-1}(\xi/\rvr_1) \right],
\end{equation}
\normalsize
then $\E[F_{\pi}(\rvz')] \geq \xi$.
\end{restatable}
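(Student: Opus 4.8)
The plan is to derive the result directly from the expected-return lower bound already established in Equation~\ref{eq:robustness_bound}, reducing that multi-term bound to a single dominant term and then inverting the monotone Gaussian CDF to read off an admissible perturbation budget. Concretely, the change of variable amounts to solving, for $\tau$, the scalar inequality stating that the lower bound is at least the target $\xi$.

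First I would discard the tail of the summation in Equation~\ref{eq:robustness_bound}. Since $\rmR$ is sorted so that $\rvr_1 \le \rvr_2 \le \cdots \le \rvr_m$, every coefficient $\rvr_i - \rvr_{i-1}$ is nonnegative, and $\Phi(\cdot)$ is a CDF and hence nonnegative; therefore each summand for $i \ge 2$ is nonnegative and may be dropped to obtain the weaker bound
\[
\E[F_{\pi}(\rvz')] \ge \rvr_1 \cdot \Phi\Big(\Phi^{-1}(\underline{P_{\pi}^{\rvz}}(\rvr_1)) - \tfrac{\tau}{\sigma}\Big), \qquad \forall\, \|\Delta\|_2 \le \tau.
\]
It therefore suffices to choose $\tau$ so that this single term is at least $\xi$.

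Next I would substitute the candidate radius $\tau = \sigma[\Phi^{-1}(\underline{P_{\pi}^{\rvz}}(\rvr_1)) - \Phi^{-1}(\xi/\rvr_1)]$. With this choice the argument of $\Phi$ collapses exactly to $\Phi^{-1}(\xi/\rvr_1)$, so $\Phi(\cdot) = \xi/\rvr_1$ and the right-hand side equals $\rvr_1 \cdot (\xi/\rvr_1) = \xi$. Because $\Phi$ is strictly increasing, any strictly smaller perturbation $\|\Delta\|_2 < \tau$ makes the argument larger and hence the term exceeds $\xi$; thus the bound $\E[F_{\pi}(\rvz')] \ge \xi$ holds uniformly over the whole ball $\|\Delta\|_2 \le \tau$, which is precisely Equation~\ref{eq:substitute_radius}. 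Equivalently, one obtains the same radius by dividing the inequality $\rvr_1\,\Phi(\Phi^{-1}(\underline{P_{\pi}^{\rvz}}(\rvr_1)) - \tau/\sigma) \ge \xi$ through by $\rvr_1$, applying the increasing map $\Phi^{-1}$, and rearranging for $\tau$.

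The main obstacle — indeed the only delicate point — is ensuring the two inverse-CDF evaluations are well defined and that dividing by $\rvr_1$ preserves the inequality direction. I would resolve both using the hypotheses: the assumption $\xi/\rvr_1 \le \underline{P_{\pi}^{\rvz}}(\rvr_1)$ together with $\underline{P_{\pi}^{\rvz}}(\rvr_1) \le 1$ confines both $\xi/\rvr_1$ and $\underline{P_{\pi}^{\rvz}}(\rvr_1)$ to the domain $[0,1]$ of $\Phi^{-1}$, and for a positive target $\xi$ it forces $\rvr_1 \ge \xi > 0$, so the division is legitimate and orientation-preserving. The same assumption, via monotonicity of $\Phi^{-1}$, guarantees the bracket $\Phi^{-1}(\underline{P_{\pi}^{\rvz}}(\rvr_1)) - \Phi^{-1}(\xi/\rvr_1)$ is nonnegative, so the stated radius is a genuine (nonnegative) budget rather than a vacuous constraint.
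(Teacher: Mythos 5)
Your proposal is correct and follows essentially the same route as the paper's proof: drop the nonnegative tail terms of the bound in Equation~\ref{eq:robustness_bound} to isolate the $\rvr_1$ term, then solve the resulting scalar inequality $\rvr_1\,\Phi\bigl(\Phi^{-1}(\underline{P_{\pi}^{\rvz}}(\rvr_1)) - \tau/\sigma\bigr) \geq \xi$ for $\tau$ via the monotone map $\Phi^{-1}$. Your additional verification that the hypothesis $\xi/\rvr_1 \leq \underline{P_{\pi}^{\rvz}}(\rvr_1)$ keeps both arguments in the domain of $\Phi^{-1}$ and makes the radius nonnegative is a welcome bit of rigor the paper leaves implicit, but it does not change the argument.
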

\noindent
The proof of the theorem is in Appendix~\ref{append:proof}.
Theorem~\ref{lemma:hardradius} implies that $\tau = \sigma [ \Phi^{-1}(\underline{P_{\pi}^{\rvz}}(\rvr_1)) -\Phi^{-1}(\xi/\rvr_1) ]$,  thereby converting $\tau$ from a variable to a function of $\xi$.
In this way, it defines a substitute certified radius and characterizes the relationship between this radius, the Q-network $\pi$, the statistics $\underline{P_{\pi}^{\rvz}}(\rvr_1)$, and $\xi$.
However, the substitute certified radius remains non-differentiable at this stage.
Therefore, in Section~\ref{sec:method}, we propose a method to maximize the certified robustness radius by optimizing each local certified radius over a finite horizon $T$.

%%%%%%%%%%%%%%%%%%%%%%%%%%%%%%%%%%%%%%%%%%%%%%%%%%%%%%%%%%%%%%%%%%%%%%%%%%%%%%%%%%%%%%%%%%%%%%%%%%%%%%%%%%%%%%%%%%%%%%%%%%
\subsection{Threat Model}
\texttt{CAMP} operates during the training stage of DRL agents. 
The defender, responsible for both model training and robustness certification, follows the typical practices of DRL trainers. 
The defender must have access to the observations of the DRL agent and can modify these observations with random noise or perturbations. 
This is usually feasible, as the defender typically sets up the environment in which the DRL agent is trained. 
Additionally, as the model trainer, the defender has full access to the Q-network and the policy, including intermediate weights and predicted Q values based on observations. 
For certification purposes, the defender can test the trained agent in various environments over multiple rounds and record the actual returns from each round.

The adversaries against which we certify our defense are those who perturb the observed states of the DRL agent to mislead its actions. 
For instance, in classic control applications, such as robotic arm manipulation, the observed states may consist of sensor data related to the arm's kinematics. 
An attacker could manipulate this sensor data to induce abnormal movements in the robotic arm, while the underlying state used for transition and reward computation remains unchanged.
This observation-based adversary is more practical than attackers perturbing actions or policy parameters in real-world scenarios, as it is generally easier to tamper with sensor data than to manipulate agent actions or alter parameters in real time.
Furthermore, this type of adversary presents significant challenges for certified robustness due to the non-convex and black-box nature of policy functions, which motivates the adoption of RS for certification.

Additionally, adversaries can distribute perturbations across a finite number of transitions in the environment, provided that the total $\ell_2$ norm of the perturbations remains within a specified budget. 
This setup allows adversaries to flexibly distribute perturbations, leading to stronger attacks, measurable by reduced DRL agent returns.
The adversary can access the victim agent's Q-network in either a white-box or black-box manner, allowing them to optimize perturbations more effectively.
In white-box attacks, adversaries have access to the network's parameters, states, and actions, while black-box attacks rely on querying the Q-network with observed states to infer the corresponding actions.

%%%%%%%%%%%%%%%%%%%%%%%%%%%%%%%%%%%%%%%%%%%%%%%%%%%%%%%%%%%%%%%%%%%%%%%%%%%%%%%%%%%%%%%%%%%%%%%%%%%%%%%%%%%%%%%%%%%%%%%%%%
\subsection{Design Intuition}
In this paper, we propose \texttt{CAMP} as a method for training provably robust DRL agents in discrete action spaces through certified radius maximization. 
\texttt{CAMP} tackles this challenge by formulating the robustness radius and transforming it into a robustness cost that can be minimized in the deep Q-learning process.
To start with, we first convert the certified radius into a function of the Q-network $\pi$, the statistics $\underline{P_{\pi}^{\rvz}}(\rvr_1)$, and a threshold $\xi$.
Building on the Lipschitz continuity of the expected return function with respect to perturbations in the observations, we proceed to derive a soft radius. 
In addition, the observation noise perturbs the agent's action selection, thereby affecting the action-value function at each decision step. 
A robust policy is one that consistently selects the action yielding the highest expected Q-value in the presence of observation noise at every step. 
From this principle, we derive local radii that exhibit a positive correlation with the gap between the top-1 and runner-up Q-values.

Importantly, we find that the robustness cost cannot be minimized with the TD loss at the same time, as it destabilizes the training process.
To overcome this barrier, we propose adopting policy imitation as a training paradigm.
Policy imitation creates a policy based on a reference network to guide the training of the Q-network of the primary policy.
The primary network mimics the action selected by the reference network but does not directly approach the Q-value of the state-action pair.
On the other hand, the primary policy minimizes its robustness cost during the imitation, such that agents with this trained primary policy can be certified at a greater robustness radius given a fixed certified expected return.
The detailed design of \texttt{CAMP} will be introduced in the next section.

%%%%%%%%%%%%%%%%%%%%%%%%%%%%%%%%%%%%%%%%%%%%%%%%%%%%%%%%%%%%%%%%%%%%%%%%%%%%%%%%%%%%%%%%%%%%%%%%%%%%%%%%%%%%%%%%%%%%%%%%%%%%%%%%%%%%%%%%%%%%
\section{Provable Robustness via \texttt{CAMP}}\label{sec:method}
%%%%%%%%%%%%%%%%%%%%%%%%%%%%%%%%%%%%%%%%%%%%%%%%%%%%%%%%%%%%%%%%%%%%%%%%%%%%%%%%%%%%%%%%%%%%%%%%%%%%%%%%%%%%%%%%%%%%%%%%%%
This section introduces the design of \texttt{CAMP}. The core objective of \texttt{CAMP} is to train the agent to maximize the certified utility (e.g., achieve a higher certified expected return) while optimizing an approximated, differentiable certified radius.
We first describe the steps leading to the differentiable local certified radius and demonstrate how maximizing this radius improves certification performance. 
Next, to prevent overestimation of Q-values during the training of \texttt{CAMP}, we introduce \textit{policy imitation} as a novel training paradigm. 
Policy imitation employs a secondary Q-network as an oracle to guide the training of the primary policy.
Finally, we certify the expected returns of the \texttt{CAMP} agent through policy smoothing. 
An overview of the \texttt{CAMP} pipeline is provided in Figure~\ref{fig:overview}.

%%%%%%%%%%%%%%%%%%%%%%%%%%%%%%%%%%%%%%%%%%%%%%%%%%%%%%%%%%%%%%%%%%%%%%%%%%%%%%%%%%%%%%%%%%%%%%%%%%%%%%%%%%%%%%%%%%%%%%%%%%
\subsection{Expected Return Maximization}
Our first desideratum is to maximize the certified utility of the certifiably robust DRL agent. 
In reference to the objective outlined in Section~\ref{sec:problem_statement}, our goal is to find a differentiable upper bound for Equation~\ref{eq:utility_loss_intractable}.
The initial challenge is to address the sampling process of $\rvz$ and $\rmR$. 
Intuitively, increasing the probability that the return from random trajectories exceeds a certain threshold can be equated to maximizing the expectation of returns over randomness in some special cases.
First, we restate the following well-known theorem:
\begin{restatable}
[\textit{Correlation between CDF and expectation}]{theorem}{cdftoexpectation}\label{theorem:cdftoexpectation}
Given two random variables $X \geq 0$ and $Y \geq 0$, let their expectations be $\E[X]$ and $\E[Y]$, respectively. 
The following inequality holds if and only if $\E[X] \leq \E[Y]$:
\begin{equation}\small
    \int_{0}^{+\infty} [ \Psi_{X}(C) - \Psi_{Y}(C) ] \mathrm{d}C \geq 0,
\end{equation}
where $\Psi_{X}(C)$ and $\Psi_{Y}(C)$ are the CDFs of $X$ and $Y$, respectively.
\end{restatable}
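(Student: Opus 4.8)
The plan is to reduce the claim to the classical tail-integral (layer-cake) representation of the expectation of a nonnegative random variable. The key fact is that for any $X \geq 0$ one has $\E[X] = \int_{0}^{\infty} \Pr[X > C]\,\mathrm{d}C = \int_{0}^{\infty}\bigl(1-\Psi_X(C)\bigr)\,\mathrm{d}C$, where $\Psi_X(C) = \Pr[X \leq C]$. First I would establish this identity by writing the pointwise relation $X = \int_{0}^{\infty} \mathds{1}_{\{X > C\}}\,\mathrm{d}C$ and then interchanging the expectation with the integral over $C$. Because the integrand $\mathds{1}_{\{X>C\}}$ is nonnegative, Tonelli's theorem justifies the interchange without any integrability assumption, and the identity holds in $[0,+\infty]$.

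With this representation in hand for both $X$ and $Y$, I would subtract the two expressions. Since $\bigl(1-\Psi_Y(C)\bigr) - \bigl(1-\Psi_X(C)\bigr) = \Psi_X(C) - \Psi_Y(C)$, a direct term-by-term subtraction yields the exact equality
\[
\int_{0}^{\infty}\bigl[\Psi_X(C) - \Psi_Y(C)\bigr]\,\mathrm{d}C = \E[Y] - \E[X].
\]
The desired biconditional then follows immediately by reading off signs: the left-hand integral is nonnegative if and only if $\E[Y] - \E[X] \geq 0$, that is, if and only if $\E[X] \leq \E[Y]$. No further estimation is required, since the relationship is an equality rather than a one-sided bound.

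The only point demanding care — and thus the main obstacle, albeit a mild one — is ensuring that the difference $\E[Y]-\E[X]$ is well defined rather than an indeterminate $\infty-\infty$ when I perform the subtraction of the two tail integrals. I would therefore invoke the standing assumption that the returns are finite over the finite horizon $T$, so both $\E[X]$ and $\E[Y]$ are finite and the corresponding tail integrals converge; the subtraction of two convergent integrals is then legitimate. Alternatively, if one only needs the forward implication with possibly infinite means, the nonnegativity of each integrand allows the equality to be interpreted in the extended reals, and the conclusion persists. Every remaining step is elementary manipulation.
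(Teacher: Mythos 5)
Your proposal is correct and takes essentially the same route as the paper's proof: both reduce the claim to the tail-integral (layer-cake) identity $\E[X]=\int_{0}^{+\infty}\bigl(1-\Psi_{X}(C)\bigr)\,\mathrm{d}C$ for nonnegative random variables and then subtract the two representations to read off the equivalence. If anything, your version is slightly more careful than the paper's: the paper implicitly assumes $X$ and $Y$ admit densities and swaps a double integral, whereas your indicator-function/Tonelli argument requires no density, and you explicitly handle the potential $\infty-\infty$ issue in the subtraction, which the paper glosses over.
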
  

The theorem suggests that increasing the expectation of a random variable may increase the chance of the random variable exceeding a given threshold, thereby decreasing the integral of the CDF.
In our case, given the random variable $F_{\pi}(\rvz)$, we can increase its expected value such that Equation~\ref{eq:utility_loss_intractable} can be reduced, as it is positively correlated with the CDF.
This motivates us to formulate the utility maximization objective as minimizing the expected TD loss, $\ell_{ut}(\pi; s, \epsilon, s', \epsilon')$, over randomized trajectories. 
\begin{equation}\small
    \begin{aligned}
    \min_{\pi} \underset{(s, \epsilon, s',\epsilon') \sim \rmZ}{\E} \left[ \ell_{ut}(\pi; s, \epsilon, s', \epsilon') \right].
    \end{aligned}
\end{equation}
In the case of DQN, $\ell_{ut}(\cdot)$ is the mean squared Bellman error with noisy observations, as analogous to Equation~\ref{eq:MSBE_dqn}.
Therefore, the utility loss is defined as:
\begin{equation}\small\label{eq:utility_loss}
    \begin{aligned}
    &\ell_{ut}(\pi; s, \epsilon, s', \epsilon') = \\
    &\left[ Q_{\pi}(s+\epsilon,a) - \left( \gR(s, a) + \gamma \max_{a'} Q_{\pi^*}(s'+\epsilon', a') \right) \right]^2,
    \end{aligned}
\end{equation}
where $\epsilon,\, \epsilon' \sim\, \N(0, \sigma^2I)$, $a=\argmax_{i}\pi(s+\epsilon)_i$, and $\pi^*$ is the targeted optimal Q-network as defined in Section~\ref{subsec:drl_background}.

Unlike Equation~\ref{eq:MSBE_dqn}, the error here is computed not only over the state observations and actions, but also over the smoothing noise draws $\epsilon$ and $\epsilon'$. 
This loss function actually recovers the Gaussian-augmented training of DRL agents in the previous papers~\cite{kumar2021policy,wu2022crop}.
However, it does not directly address the robustness of the agent. 
Therefore, we will proceed to the next sections to explain how the certified radius can be enhanced by constructing specialized training objectives.

%%%%%%%%%%%%%%%%%%%%%%%%%%%%%%%%%%%%%%%%%%%%%%%%%%%%%%%%%%%%%%%%%%%%%%%%%%%%%%%%%%%%%%%%%%%%%%%%%%%%%%%%%%%%%%%%%%%%%%%%%%
\subsection{Certified Radius Maximization}\label{subsec:radius_max}  
According to the problem definition in Section~\ref{sec:problem_statement}, the crux of the radius maximization involves two key aspects.
First, we need to build the dependence between the certified radius and the Q-network parameters $\pi$.
Second, a differentiable loss of the certified radius with respect to $\pi$ should be devised.

We begin with a lemma, which is a generalization from Lemma 2 in SmoothAdv~\cite{salman2019provably} and further proved in Lemma 2 of CROP~\cite{wu2022crop}.
The lemma is restated as follows.
\begin{restatable}
[\textit{Lipschitz continuity of smoothed return function}]{lemma}{lipschitz}\label{lemma:lipschitz}
For any measurable function $F_{\pi}: \rvz \in (\sS\times\sA\times\R^d)^T \rightarrow [A, B]$, let $\E [F_{\pi}(\rvz)] = \E_{\epsilon\sim\N(0,\sigma^2I)}{F_{\pi}(\rvz + \varepsilon)}$. There is $\rvz \rightarrow \E [F_{\pi}(\rvz)]$ is $\frac{(B-A)}{\sigma}\sqrt{2/\pi}$-Lipschitz.
\end{restatable}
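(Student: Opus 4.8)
The plan is to establish the Lipschitz bound by controlling the gradient of the smoothed function $g(\rvz) := \E_{\epsilon \sim \N(0,\sigma^2 I)}[F_{\pi}(\rvz + \epsilon)]$ uniformly in $\rvz$, and then to integrate the gradient along a segment. First I would rewrite the expectation as a convolution against the isotropic Gaussian density, $g(\rvz) = \int F_{\pi}(t)\,\phi_{\sigma}(t - \rvz)\,\mathrm{d}t$, where $\phi_{\sigma}(u) = (2\pi\sigma^2)^{-n/2}\exp(-\|u\|_2^2/2\sigma^2)$ and $n$ denotes the ambient dimension of $\rvz$. Because $F_{\pi}$ is bounded and measurable while $\phi_{\sigma}$ is smooth with rapidly decaying derivatives, differentiation under the integral sign is justified by dominated convergence, and a change of variables back to $\epsilon = t - \rvz$ yields $\nabla g(\rvz) = \sigma^{-2}\,\E_{\epsilon}[\epsilon\,F_{\pi}(\rvz+\epsilon)]$.

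Next I would exploit the fact that the gradient is invariant under shifting $F_{\pi}$ by a constant, since $\E[\epsilon] = \vzero$ forces $\sigma^{-2}\E[\epsilon\,A] = \vzero$; hence $\nabla g(\rvz) = \sigma^{-2}\E_{\epsilon}[\epsilon\,(F_{\pi}(\rvz+\epsilon) - A)]$, where now the shifted integrand satisfies $0 \le F_{\pi}(\rvz+\epsilon) - A \le B - A$. For an arbitrary unit vector $u$, the directional derivative then obeys $|u^{\top}\nabla g(\rvz)| \le \sigma^{-2}\,\E_{\epsilon}[\,|u^{\top}\epsilon|\,|F_{\pi}(\rvz+\epsilon) - A|\,] \le \sigma^{-2}(B-A)\,\E_{\epsilon}[\,|u^{\top}\epsilon|\,]$ by the triangle inequality followed by the range bound.

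The remaining step is the scalar first-moment computation. Since $u$ is a unit vector and $\epsilon \sim \N(0,\sigma^2 I)$ is isotropic, the projection $u^{\top}\epsilon$ is distributed as $\N(0,\sigma^2)$, so the standard first absolute moment identity gives $\E[\,|u^{\top}\epsilon|\,] = \sigma\sqrt{2/\pi}$. Substituting produces $|u^{\top}\nabla g(\rvz)| \le \frac{(B-A)}{\sigma}\sqrt{2/\pi}$ for every unit $u$, and taking the supremum over $u$ bounds $\|\nabla g(\rvz)\|_2$ by the same constant at every $\rvz$. Finally, for any $\rvz_1,\rvz_2$ the fundamental theorem of calculus along the segment joining them gives $|g(\rvz_1) - g(\rvz_2)| = \left| \int_0^1 \nabla g(\rvz_2 + s(\rvz_1 - \rvz_2))^{\top}(\rvz_1 - \rvz_2)\,\mathrm{d}s \right| \le \frac{(B-A)}{\sigma}\sqrt{2/\pi}\,\|\rvz_1 - \rvz_2\|_2$, which is the asserted Lipschitz continuity.

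I expect the only delicate point to be the justification of differentiation under the integral sign for a merely measurable, bounded $F_{\pi}$: this is precisely where the smoothing kernel does the work, as the Gaussian's polynomially weighted exponential tails supply an integrable dominating function locally uniformly in $\rvz$, so no regularity of $F_{\pi}$ itself is needed. Everything else is a triangle-inequality estimate combined with the Gaussian absolute-moment identity. I would also remark that the directional-derivative bound can be sharpened to $\frac{(B-A)}{2\sigma}\sqrt{2/\pi}$ by centering the shift at $(A+B)/2$ and taking the worst-case sign-matched $F_{\pi}$, but the cruder triangle-inequality estimate already delivers the stated constant.
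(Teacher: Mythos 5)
Your proof is correct: the gradient identity $\nabla g(\rvz) = \sigma^{-2}\,\E_{\epsilon}[\epsilon\, F_{\pi}(\rvz+\epsilon)]$, the shift by $A$ using $\E[\epsilon]=\vzero$, and the first-absolute-moment bound $\E[|u^{\top}\epsilon|] = \sigma\sqrt{2/\pi}$ together yield exactly the stated constant. The paper itself gives no inline proof --- it defers to Lemma 2 of CROP (Wu et al.), whose argument (in turn adapted from Salman et al.) is precisely this gradient-of-Gaussian-convolution estimate, so your write-up is essentially a self-contained reconstruction of the cited proof; your closing remark that centering at $(A+B)/2$ sharpens the constant by a factor of $2$ is also correct, and the lemma as stated simply uses the looser triangle-inequality constant.
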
 
\noindent
The Lipschitz continuity immediately implies that $\frac{\sigma}{(B-A)\sqrt{2/\pi}} (\E[F(\rvz)] - \xi)$ serves as a certified radius, albeit a loosely certified one.
Although it does not directly bound the radius defined in Equation~\ref{eq:substitute_radius}, this soft radius is positively correlated with the previous radius. 
Specifically, when $\rvr_1$ is fixed, both radii monotonically increase with $\E [F_{\pi}(\rvz)] - \xi$.
Fortunately, this implication also suggests that this soft radius can be used as a surrogate for optimization. 
Building on the above lemma, we now proceed to the main theorem of deriving a differentiable soft certified radius.
\begin{restatable}
[\textit{Soft certified radius}]{theorem}{softradius}\label{theorem:softradius}
Given a target expected return $\xi \leq \E [F_{\pi}(\rvz)]$ where $F_{\pi}: \rvz \in (\sS\times\sA\times\R^d)^T \rightarrow [A, B]$, suppose an adversary perturbs the observed states by applying $\Delta=(\delta_0,\delta_1,...,\delta_{T-1})$.
The target expected return will not drop below $\xi$ if the perturbations satisfy:
\small
\begin{equation}\label{eq:soft_radius}
     \|\Delta\|_2 \leq \sigma\ \left[ \Phi^{-1}\left(\frac{\E [F_{\pi}(\rvz)]-A}{B-A}\right) - \Phi^{-1}\left(\frac{\xi-A}{B-A}\right) \right].
\end{equation}
\normalsize
\end{restatable}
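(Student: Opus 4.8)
The plan is to reduce the statement to the standard tight randomized-smoothing robustness bound applied to a normalized version of the return function. First I would rescale $F_{\pi}$ into the unit interval by defining $\tilde{F}_{\pi}(\rvz) = \frac{F_{\pi}(\rvz) - A}{B - A} \in [0,1]$, with smoothed counterpart $\bar{g}(\rvz) = \E_{\varepsilon \sim \N(0,\sigma^2 I)}[\tilde{F}_{\pi}(\rvz + \varepsilon)]$, so that $\bar{g}(\rvz) = \frac{\E [F_{\pi}(\rvz)] - A}{B - A}$. Because $\xi \leq \E [F_{\pi}(\rvz)]$ and returns lie in $[A,B]$, both arguments of $\Phi^{-1}$ in Equation~\ref{eq:soft_radius} lie in $(0,1)$ and the claimed radius is nonnegative, so $\Phi^{-1}$ is evaluated only at interior points throughout.

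The key step, and the main obstacle, is to upgrade the loose linear bound of Lemma~\ref{lemma:lipschitz} to the tight Gaussian bound: I would show that $\rvz \mapsto \Phi^{-1}(\bar{g}(\rvz))$ is $\frac{1}{\sigma}$-Lipschitz. This is the continuous (expectation-valued) analogue of the Neyman--Pearson perturbation bound already used in Equation~\ref{eq:cdf_bound}, and I would establish it following the argument underlying SmoothAdv~\cite{salman2019provably}. Writing $\bar{g}$ as the Gaussian convolution $\tilde{F}_{\pi} * \N(0,\sigma^2 I)$, Stein's identity gives $\nabla \bar{g}(\rvz) = \frac{1}{\sigma^2}\E[\tilde{F}_{\pi}(\rvz+\varepsilon)\,\varepsilon]$, and maximizing $\|\nabla \bar{g}(\rvz)\|_2$ over all $[0,1]$-valued functions with fixed value $\bar{g}(\rvz)=p$ yields $\|\nabla \bar{g}(\rvz)\|_2 \leq \frac{1}{\sigma}\phi(\Phi^{-1}(p))$, where $\phi$ is the standard normal density. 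The chain rule then gives $\nabla(\Phi^{-1}\circ\bar{g})(\rvz) = \frac{\nabla\bar{g}(\rvz)}{\phi(\Phi^{-1}(\bar{g}(\rvz)))}$, whose norm is at most $\frac{1}{\sigma}$, which is the desired Lipschitz estimate. Equivalently, one may quote this as the known $\Phi^{-1}$-tightening of Lemma~\ref{lemma:lipschitz} and use it as a black box.

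With this in hand, the remainder is a short chain of monotone substitutions. Setting $\rvz' = \rvz + \Delta$ and applying the Lipschitz property along the segment from $\rvz$ to $\rvz'$ gives
\begin{equation*}
    \Phi^{-1}(\bar{g}(\rvz')) \geq \Phi^{-1}(\bar{g}(\rvz)) - \frac{\|\Delta\|_2}{\sigma}.
\end{equation*}
I would then substitute the hypothesis $\frac{\|\Delta\|_2}{\sigma} \leq \Phi^{-1}(\bar{g}(\rvz)) - \Phi^{-1}\!\left(\frac{\xi - A}{B-A}\right)$ from Equation~\ref{eq:soft_radius} to obtain $\Phi^{-1}(\bar{g}(\rvz')) \geq \Phi^{-1}\!\left(\frac{\xi-A}{B-A}\right)$. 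Since $\Phi$ is strictly increasing, applying it to both sides yields $\bar{g}(\rvz') \geq \frac{\xi - A}{B - A}$, and de-normalizing via $\E [F_{\pi}(\rvz')] = A + (B-A)\bar{g}(\rvz')$ gives $\E [F_{\pi}(\rvz')] \geq A + (B-A)\frac{\xi-A}{B-A} = \xi$, completing the proof.
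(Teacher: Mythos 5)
Your proposal is correct and follows essentially the same route as the paper's proof: normalize the return to $[0,1]$, invoke the $\frac{1}{\sigma}$-Lipschitz property of $\Phi^{-1}$ composed with the Gaussian-smoothed function (the paper cites this directly from the proof of Lemma 2 of Salman et al., exactly the black-box option you mention), and finish by substituting the radius hypothesis and using monotonicity of $\Phi^{-1}$. Your added material—the Stein's-identity derivation of the Lipschitz bound and the check that both arguments of $\Phi^{-1}$ lie in the interior of $(0,1)$—merely expands details the paper leaves to the citation, rather than constituting a different argument.
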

\begin{proof}
Based on Lemma~\ref{lemma:lipschitz}, the mapping $\rvz \rightarrow \frac{\E[F_{\pi}(\rvz)] - A}{B - A}$ is $\sqrt{(2/\pi)}/\sigma$-Lipschitz.
Moreover, based on the same proof of Lemma 2 of Salman et al.~\cite{salman2019provably}, $\Phi^{-1}(\frac{\E[F_{\pi}(\rvz)] - A}{B - A})$ is $\frac{1}{\sigma}$-Lipschitz. 
We have
\small
\begin{equation}
\begin{aligned}
     \sigma\ [\Phi^{-1}(\frac{\E[F_{\pi}(\rvz)] - A}{B - A}) - \Phi^{-1}(\frac{\E[F_{\pi}(\rvz+\Delta)] - A}{B - A}) ]
     \leq  \|\Delta\|_2.  \\
\end{aligned}
\end{equation}
\normalsize
Thus, if the perturbation $\Delta$ satisfies:
\small
\begin{equation}
\begin{aligned}
    \|\Delta\|_2 \leq \sigma\ \left[ \Phi^{-1}(\frac{\E[F_{\pi}(\rvz)] - A}{B - A}) - \Phi^{-1}(\frac{\xi-A}{B-A}) \right],
\end{aligned}
\end{equation}
\normalsize
by the monotonicity of $\Phi^{-1}(\cdot)$, we obtain
\small
\begin{equation}
    \E[F_{\pi}(\rvz+\Delta)] \geq \xi.
\end{equation}
\normalsize
\end{proof}

%\noindent
This soft radius is computed through the expected return over clean trajectories, which means it can be efficiently sampled and calculated as a global radius for the $T$-step MDP.
Since Theorem~\ref{theorem:softradius} characterizes the globally certified radius for a trajectory of length $T$, it is necessary to decompose the global radius into local certified radii obtained at each time $t$.

The global certified radius can be decomposed into per-step action selection errors, which can, in turn, be bounded by a local robustness radius.
Specifically, since the radius positively correlates with $\E[F_{\pi}(\rvz)] - \xi$, given a fixed $\xi$, maximizing the soft radius is equivalent to increasing the expected return over randomized state-action trajectories. 
It is important to note that for discrete action spaces, if the optimal greedy policy is derived from a Q-network with parameters $\pi^*$, it is expected to select the action $a_t$ that maximizes the expectation of the action-value function at each time $t$.
In this context, the probability of a perturbation at step $t$ reducing the optimal return is equivalent to the probability that the perturbation causes the predicted action to shift from correct to incorrect.
We can thus formulate the optimal expected action-value function based on this optimal policy as $\bar{Q}_{\pi^*}(s_t, a_t) = \max_{\pi}\E_{\epsilon_t}Q_{\pi}(s_t+\epsilon_t, a_t)$.
Next, we have the following theorem:
\begin{restatable}
[\textit{Local certified radius}]{theorem}{localradius}\label{theorem:localradius}
Let $l_i$ and $u_i$ denote the lower and upper bounds of the expected action-value function at step $i$, respectively.
Let the perturbations from step $t$ to step $T-1$ be $\{\delta_i\}_{i=t}^{T-1}$.
Under a greedy policy, the optimal reward at step $t$ is obtained by taking action $a^{(1)}_i = \argmax_{a_i} \bar{Q}_{\pi^*}(s_i, a_i)$.
The optimal expected return from step $t$ to $T-1$ will not be reduced if the local perturbation $\delta_i$ at each step $i$ satisfies:
\small
\begin{equation}\label{eq:localradius}
    \begin{aligned}
        \|\delta_i\|_2 & \leq \frac{\sigma}{2} \left[  \Phi^{-1}\left( \frac{\bar{Q}_{\pi^*}(s_i, a^{(1)}_i) - l_i }{u_i - l_i} \right) \right.\\
         & - \left. \Phi^{-1}\left( \frac{\bar{Q}_{\pi^*}(s_i, a^{(2)}_i) - l_t }{u_i - l_i} \right)  \right],
    \end{aligned}
\end{equation}
\normalsize
where $a^{(2)}_i$ is the runner-up action $a^{(2)}_i = \argmax_{a_i:a_i\neq a^{(1)}_i} \bar{Q}_{\pi^*}(s_i, a_i)$.
\end{restatable}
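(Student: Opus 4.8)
The plan is to reduce Eq.~\ref{eq:localradius} to a single-step certified-radius argument of the Cohen--Salman type, applied to the \emph{normalized} expected action-value function, and then to lift the resulting per-step guarantee to the whole horizon by Bellman optimality. First I would fix a step $i$ and, for each action $a_i$, define the normalized expected Q-value $h_{a_i}(s) := (\bar{Q}_{\pi^*}(s,a_i) - l_i)/(u_i - l_i) \in [0,1]$. Since $\bar{Q}_{\pi^*}(\cdot,a_i)$ is itself a Gaussian-smoothed expectation of an action-value bounded in $[l_i,u_i]$, Lemma~\ref{lemma:lipschitz} (with $B-A=1$ after normalization) shows that $s \mapsto h_{a_i}(s)$ is $\frac{\sqrt{2/\pi}}{\sigma}$-Lipschitz. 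Reusing exactly the composition argument already invoked in the proof of Theorem~\ref{theorem:softradius} (Lemma~2 of Salman et al.~\cite{salman2019provably}), the map $g_{a_i}(s) := \Phi^{-1}(h_{a_i}(s))$ is then $\frac{1}{\sigma}$-Lipschitz in $s$.

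The core step is a margin argument. Because the greedy policy at step $i$ selects the action maximizing $\bar{Q}_{\pi^*}(s_i,\cdot)$ and $\Phi^{-1}$ is monotone increasing, the optimal action $a^{(1)}_i$ is preserved under the perturbed observation $s_i+\delta_i$ as soon as $g_{a^{(1)}_i}(s_i+\delta_i) \geq g_{a^{(2)}_i}(s_i+\delta_i)$, since the runner-up dominates every remaining action. The $\frac{1}{\sigma}$-Lipschitz bound gives $g_{a^{(1)}_i}(s_i+\delta_i) \geq g_{a^{(1)}_i}(s_i) - \|\delta_i\|_2/\sigma$ and $g_{a^{(2)}_i}(s_i+\delta_i) \leq g_{a^{(2)}_i}(s_i) + \|\delta_i\|_2/\sigma$, so a sufficient condition for the ordering to be retained is $2\|\delta_i\|_2/\sigma \leq g_{a^{(1)}_i}(s_i) - g_{a^{(2)}_i}(s_i)$, which is precisely Eq.~\ref{eq:localradius}. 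The factor $\sigma/2$ arises here exactly as in binary-class randomized smoothing: the certified margin must simultaneously absorb the worst-case decrease of the top score and increase of the runner-up.

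Finally I would lift this to the return. Having shown that, whenever $\|\delta_i\|_2$ obeys Eq.~\ref{eq:localradius} for every $i \in \{t,\dots,T-1\}$, the greedy choice $a^{(1)}_i$ (the $\bar{Q}_{\pi^*}$-maximizer) is preserved at each such step, I would invoke Bellman optimality: since the expected-Q-optimal policy attains the optimal expected return from step $t$ onward and that policy keeps selecting its optimal action at each step, the optimal expected return from $t$ to $T-1$ cannot decrease. I expect the lifting step to be the main obstacle and the part requiring the most care: one must justify identifying the behaviour of the \emph{randomized} greedy policy (which acts on a single noisy draw $s_i+\epsilon_i+\delta_i$) with the ordering of the \emph{expected} action-values $\bar{Q}_{\pi^*}$, and must ensure the per-step radii can be imposed independently without the inter-step coupling of perturbations and transitions breaking the recursive optimality argument. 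The single-step Lipschitz-margin computation, by contrast, is routine once the normalization and the $\Phi^{-1}$ composition are in place.
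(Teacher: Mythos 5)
Your proposal is correct and follows essentially the same route as the paper's proof: both normalize $\bar{Q}_{\pi^*}$ to $[0,1]$, invoke the lemma of Salman et al. to make $\Phi^{-1}$ of the normalized smoothed Q-value $\tfrac{1}{\sigma}$-Lipschitz, and run the two-sided margin argument (worst-case drop of the top-1 score, worst-case rise of any competitor, with all non-optimal actions dominated by the runner-up via monotonicity of $\Phi^{-1}$), which is exactly where the $\sigma/2$ factor comes from. The horizon-lifting step you flag as the main obstacle is treated just as briefly in the paper---it simply asserts that preserving the $\bar{Q}_{\pi^*}$-maximizing action at every step preserves the optimal expected return---so your caution there exceeds, rather than falls short of, the paper's own level of rigor.
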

\noindent
\begin{proof}[Proof sketch]
The proof stems from the Lipschitz continuity of $\bar{Q}_{\pi^*}(s_t, a_t)$ with respect to perturbations in the observed states.
Since $\bar{Q}_{\pi^*}(s_t, a_t)$ is Lipschitz continuous, by comparing the $\delta_t$-perturbed $\bar{Q}_{\pi^*}(s_t+\delta_t,a_t)$ with the unperturbed $\bar{Q}_{\pi^*}(s_t,a_t)$ for different actions, and using the monotonicity of $\Phi^{-1}(\cdot)$, we can determine the $\ell_2$ norm of the maximum allowable perturbation such that the action inducing the maximal reward remains selected by the greedy policy at each step $t$.
\end{proof}

This $\ell_2$ norm represents a local radius within which the perturbation does not compromise the optimal expected Q-value obtained by the optimal policy.
Preserving the optimal expected return can therefore be converted to a problem of increasing the gap between the top-1 and runner-up Q-values under the optimal policy.
We refer to this gap as the Q-gap in subsequent discussions.
The full proof of the theorem is in Appendix~\ref{append:proof}.
The local certified radius at each step can be obtained by iteratively applying Theorem~\ref{theorem:localradius} from $t=0$ to $t=T-1$.
Another implication of Theorem~\ref{theorem:localradius} is that the overall robustness budget $\E[F_{\pi}(\rvz)] - \xi$ can be distributed across per-step budgets, reflecting the errors in the actions.

Motivated by Theorem~\ref{theorem:localradius}, maximizing the global radius amounts to minimizing the occurrence of per-step errors in policy predictions.
From Equation~\ref{eq:localradius} in Theorem~\ref{theorem:localradius}, it follows that maximizing the local certified radius hinges on increasing the gap between the top-1 and the runner-up Q-values under randomized observations.
We can thus define a robustness loss such that the per-step action error is minimized:
\begin{equation}\small
\begin{aligned}
    & \ell_{ro}(\pi, \tilde{\pi}; s, \epsilon) = \lambda \cdot \mathds{1}_{\{Q_{\tilde{\pi}}(s+\epsilon, a^{(1)})\geq Q_{\tilde{\pi}}(s+\epsilon, a^{(2)})\}} \\
    & \max \{ 0, \eta - [ Q_{\pi}(s+\epsilon, a^{(1)}) - Q_{\pi}(s+\epsilon, a^{(2)} ) ] \}, \\
    & a^{(1)} = \argmax_a \pi(s+\epsilon)_a,\\
    & a^{(2)} = \argmax_{a: a\neq a^{(1)}} \pi(s+\epsilon)_a.
\end{aligned}
\end{equation}
Herein, $\eta$ and $\lambda$ represent the hinge loss offset and the robustness loss coefficient, respectively.
$s$ denotes an observation, which may be deterministic or randomized.
$a^{(1)}$ and $a^{(2)}$ are the predicted top-1 and runner-up actions in the action space $\sA$.
Note that the predicted action may differ from the optimal $a^{(1)}$ and $a^{(2)}$ required by Theorem~\ref{theorem:localradius}.
To mitigate this, $\tilde{\pi}$ is introduces as a \textit{reference network} to check whether $a^{(1)}$ indeed yields a higher reward than $a^{(2)}$.
The details of $\tilde{\pi}$ will be discussed in the following section.
This robustness loss regularizes policy training to maximize the gap between the Q-value induced by the top-1 action and that obtained by the runner-up action.

%%%%%%%%%%%%%%%%%%%%%%%%%%%%%%%%%%%%%%%%%%%%%%%%%%%%%%%%%%%%%%%%%%%%%%%%%%%%
\begin{mdframed}[backgroundcolor=grey!10,rightline=true,leftline=true,topline=true,bottomline=true,roundcorner=1mm,everyline=false,nobreak=false]
\noindent \textbf{Remark.~}
Our certified radii differ from existing approaches in several aspects. 
First, they are functions of optimizable variables instead of a fixed $\tau$ determined via grid search. 
Second, by framing the total robustness cost as the cumulative return loss from per-step action errors, we convert the global certified radius into a local certified radius linked to the Q-gap.
Maximizing the Q-gap at each step reduces action errors, thereby enhancing the global certified radius.
\end{mdframed}

%%%%%%%%%%%%%%%%%%%%%%%%%%%%%%%%%%%%%%%%%%%%%%%%%%%%%%%%%%%%%%%%%%%%%%%%%%%%%%%%%%%%%%%%%%%%%%%%%%%%%%%%%%%%%%%%%%%%%%%%%%
\subsection{Policy Imitation for \texttt{CAMP} Training}  
Given the objectives of improving both utility and robustness, the training objective of \texttt{CAMP} can be defined as minimizing the sum of the utility and robustness losses.
The remaining task is to determine the reference network $\tilde{\pi}$.
We found that attaching the robustness loss to the TD loss during training is unsuitable, as it leads to overestimation of Q-values.
Overestimating Q-values is a well-known issue that leads to sub-optimal convergence~\cite{yang2022rorl}.
Moreover, it is infeasible to constrain the maximal Q-value of the Q-network since the oracle Q-value is unknown. 
This dilemma motivates us to first approximate the oracle Q-value and use the approximated value as a reference to calibrate the Q-network.

To this end, we propose \textit{policy imitation} to stabilize the training process.
Specifically, we model the normal Q-function with a reference Q-network whose parameters are denoted as $\tilde{\pi}$.
$\tilde{\pi}$ is trained using the utility loss $\ell_{ut}(\tilde{\pi}; s, \epsilon, a, s', \epsilon')$.
The primary Q-network then mimics the behavior of this reference network to improve its utility while the robustness loss is applied for robustness enhancement. 
In this way, the reference network helps the primary Q-network predict more accurate Q-values. 
The imitation loss is defined as the cross-entropy loss between $Softmax(\pi(s+\epsilon))$ and $Softmax(\tilde{\pi}(s+\epsilon))$:
\begin{equation}\label{eq:imitation_loss}\small
\begin{aligned}
    \ell_{im}(\pi, \tilde{\pi}; s, \epsilon) = - \sum_{i=1}^{|\sA|} Softmax(\tilde{\pi}(s+\epsilon))_i \log Softmax(\pi(s+\epsilon))_i.
\end{aligned}
\end{equation}
Therefore, $\ell_{im}(\pi, \tilde{\pi}; s, \epsilon)$ measures the difference between the actions predicted by $\pi$ and $\tilde{\pi}$, given the same observation input $s+\epsilon$.

The robustness loss and the imitation loss take the same randomized observation $s+\epsilon$ as used in the utility loss for computation.
The final loss function of \texttt{CAMP} is as follows:
\begin{equation}\label{eq:CAMP_loss}\small
\begin{aligned}
    \ell_{CAMP}(\pi, \tilde{\pi}; s, \epsilon, s', \epsilon') & = \ell_{ut}(\tilde{\pi}; s, \epsilon, s', \epsilon') \\
    &+ \ell_{ro}(\pi, \tilde{\pi}; s, \epsilon) + \ell_{im}(\pi, \tilde{\pi}; s, \epsilon).
\end{aligned}
\end{equation}
Let $\gZ$ denote a replay buffer or a dataset of trajectories. 
The overall training objective is then defined as follows:
\begin{equation}\small\label{eq:camp_objective}
    \min_{\pi, \tilde{\pi}} \underset{(s, \epsilon, s', \epsilon') \sim \gZ}{\E} \ell_{CAMP}(\pi, \tilde{\pi}; s, \epsilon, s', \epsilon').
\end{equation}

In our implementation, we found that using two separate replay buffers for $\pi$ and $\tilde{\pi}$ can enhance the agent's performance and stability.
Consequently, $\pi$ and $\tilde{\pi}$ operate alternately in the same environment, storing the collected trajectories into two distinct replay buffers, $\gZ$ and $\tilde{\gZ}$.
The training algorithm samples trajectories from $\tilde{\gZ}$ to minimize the robustness loss and the imitation loss, while sampling from $\gZ$ to minimize the utility loss. 

To summarize the practical training steps of \texttt{CAMP}, the complete training procedure is presented in Algorithm~\ref{alg:CAMP_train}.
Note that $d_t$, $d_j$ and $\tilde{d}_j$ are binary variables representing whether the current states (\ie $s_t$, $s_j$, and $\tilde{s}_j$) are terminal states.
If yes, the Q-function should show that the agent gets no additional rewards after the current state.
Moreover, only the derivatives of $\ell_{ut}$ with respect to the reference network parameters are calculated in line 25.
Conversely, since $\ell_{ut}$ is independent of the primary Q-network, only $\ell_{ro}$ and $\ell_{im}$ induce gradients for optimizing the primary network in line 26.
We will validate the effectiveness of \texttt{CAMP} through experiments in the following section.

%%%%%%%%%%%%%%%%%%%%%%%%%%%%%%%%%%%%%
\begin{algorithm}[t]
\footnotesize
%\begin{breakablealgorithm}[h]
\caption{\texttt{CAMP} Training}\label{alg:CAMP_train}
\KwIn{Replay buffers $\gZ$ and $\tilde{\gZ}$, environment $\tE$ with state transition function $\gT$ and reward function $\gR$, discount factor $\gamma$, primary network $\pi$, reference network $\tilde{\pi}$, noise scale $\sigma$, polyak rate $k$, burn in step $t_b$, batch size $N$, learning rate $\alpha$, target update frequency $\bigtriangleup$.}
\KwOut{Trained $\pi$}
Initialize $s_0$\\
Initialize $\pi$, $\tilde{\pi}$\\
Initialize reference target network $\tilde{\pi}'\ \gets\ \tilde{\pi}$\\
\For{$t \in \{0, 1, 2,..., T-1\}$}
{   
    $\epsilon_t, \epsilon_{t+1} \sim \N(0, \sigma^2 I)$\\
    $\hat{s}_t\ \gets\ s_t + \epsilon_t$\\
    \If{$t \leq t_b$}
    {   
        $a_{t}\ \gets\ \textsc{RandomSample}(\sA) $\\
        $s_{t+1},\ d_t, r_{t+1}\ \gets \gT({s}_{t}, a_{t}),\ \gR({s}_{t}, a_{t})$ \\
        \If{$t \% 2 = 0$}
        {
            $\gZ\ \gets\  (s_{t}, \epsilon_t, s_{t+1}, \epsilon_{t+1}, d_t)$ \\
        }
        \Else
        {
            $\tilde{\gZ}\ \gets\  (s_{t}, \epsilon_t, s_{t+1}, \epsilon_{t+1}, d_t)$ \\
        }
    }
    \Else 
    {   
        \If{$t \% 2 = 0$}
        {
            $ a_{t}\ \gets\ \argmax_a \pi(\hat{s}_t)_a $ \\
            $s_{t+1},\ d_t, r_{t+1}\ \gets \gT({s}_{t}, a_{t}),\ \gR({s}_{t}, a_{t})$ \\
            $\gZ\ \gets\  (s_{t}, \epsilon_t, s_{t+1}, \epsilon_{t+1}, d_t)$ \\
        }
        \Else
        {
            $ a_{t}\ \gets\ \argmax_a \tilde{\pi}(\hat{s}_t)_a$ \\
            $s_{t+1},\ d_t, r_{t+1}\ \gets \gT({s}_{t}, a_{t}),\ \gR({s}_{t}, a_{t})$ \\
            $\tilde{\gZ}\ \gets\  (s_{t}, \epsilon_t, s_{t+1}, \epsilon_{t+1}, d_t)$ \\
        }
    }
    $\{(s_j,\ \epsilon_j, \ s'_j,\ \epsilon'_j,\ d_j)\}_{j=1}^{N}\ \gets\ \textsc{BatchSample}(\gZ, N)$ \\
    $\{(\tilde{s}_j,\ \tilde{\epsilon}_j,\ \tilde{s}'_j,\ \tilde{\epsilon'}_j,\ \tilde{d}_j)\}_{j=1}^{N}\ \gets\ \textsc{BatchSample}(\tilde{\gZ}, N)$ \\
    $\tilde{\pi}\ \gets\ \tilde{\pi} - \alpha (1 - \tilde{d}_j) \frac{d \sum_{j=1}^{N} \ell_{CAMP}(\pi, \tilde{\pi}; \tilde{s}_j, \tilde{\epsilon}_j, \tilde{s}'_j, \tilde{\epsilon}'_j )}{d\tilde{\pi}}$ \\
    $\pi\ \gets\ \pi - \alpha (1 - d_j) \frac{d \sum_{j=1}^{N} \ell_{CAMP}(\pi, \tilde{\pi}; s_j, \epsilon_j, s'_j, \epsilon'_j)}{d\pi}$ \\
    \If{$t \% \bigtriangleup == 0$}
    {
        $\tilde{\pi}'\ \gets\ k \tilde{\pi} + (1-k) \tilde{\pi}'$ \\
    }
}
\KwOut{$\pi$}
\end{algorithm}
%%%%%%%%%%%%%%%%%%%%%%%%%%%%%%%%%%%%%

%%%%%%%%%%%%%%%%%%%%%%%%%%%%%%%%%%%%%%%%%%%%%%%%%%%%%%%%%%%%%%%%%%%%%%%%%%%%%%%%%%%%%%%%%%%%%%%%%%%%%%%%%%%%%%%%%%%%%%%%%%%%%%%%%%%%%%%%%%%%
\section{Experiments}
We evaluate the performance of \texttt{CAMP} in this section.
Our experiments aim to answer the following research questions:
\begin{itemize}[leftmargin=*]
    \item Can \texttt{CAMP} improve the certified expected returns under the same perturbation budget?
    \item Is \texttt{CAMP} a broadly applicable enhancement for various environments and Q-networks?
    \item How robust is \texttt{CAMP} towards changes in its hyper-parameters?
\end{itemize}
To answer these questions, we systematically compare \texttt{CAMP} to baselines in different environments and conduct an ablation study on the hyper-parameters.
We will first introduce the environments, DRL algorithms, Q-networks, and certification methods used in our experiments.

%%%%%%%%%%%%%%%%%%%%%%%%%%%%%%%%%%%%%%%%%%%%%%%%%%%%%%%%%%%%%%%%%%%%%%%%%%%%%%%%%%%%%%%%%%%%%%%%%%%%%%%%%%
\noindent \textbf{Environments.~} 
We make evaluations and comparisons in five representative environments, namely \textit{Cartpole}, \textit{Highway}, \textit{Bank Heist}, \textit{Pong}, and \textit{Freeway}.
Cartpole simulates a classic problem in control dynamics.
The observed states in Cartpole are four kinematic signals representing the cart position, cart velocity, pole angle, and pole angular velocity. 
The action space contains two discrete actions (\ie push the cart to the left or right). 
Highway simulates an autonomous driving environment in which the observations are based on the kinematics of nearby cars and the actions are four control inputs.
Freeway, Pong and Bank Heist are Atari games that examine the performance of certification algorithms towards high-dimensional observations.
The observed states in both environments are frames of $84 \times 84$ greyscale images with pixel values in $[0, 255]$.
The available actions in Freeway, Pong, and Bank Heist consist of 3, 6, and 18 discrete options, respectively.
We adopt Pong with only one round (\ie Pong1r) in our experiments.

%%%%%%%%%%%%%%%%%%%%%%%%%%%%%%%%%%%%%%%%%%%%%%%%%%%%%%%%%%%%%%%%%%%%%%%%%%%%%%%%%%%%%%%%%%%%%%%%%%%%%%%%%%
\noindent \textbf{DRL algorithms.~}
We evaluated our method based on DQN in various environments.
DQN is employed as the DRL algorithm for evaluation.
For CartPole and Highway, the architecture of the Q-network is a multi-layer perceptron (MLP) network.
On the other hand, Nature CNN is employed as the Q-network for Freeway, Pong, and Bank Heist~\cite{mnih2015human}. 
The detailed architectures are also attached in Appendix~\ref{appendsub:arc}.
These settings follow the convention in the evaluation of DQN, which endorses fair comparisons with previous works.

%%%%%%%%%%%%%%%%%%%%%%%%%%%%%%%%%%%%%%%%%%%%%%%%%%%%%%%%%%%%%%%%%%%%%%%%%%%%%%%%%%%%%%%%%%%%%%%%%%%%%%%%%%
\noindent \textbf{Certification algorithms and baselines.~} 
We employ Policy-Smoothing (PS)~\cite{kumar2021policy} as the certification algorithm.
PS leverages the generalized Neyman-Pearson lemma towards the structure deterministic adversary and generates tight robustness certificates.
We apply the exact CDF smoothing as introduced in Section~\ref{subsec:cdf_smoothing} for agents obtaining continuous rewards in Highway, Freeway, and Bank Heist.
Following the PS paper, for agents obtaining 0/1 reward at each time step in Cartpole and Pong1r, we employ per-step point estimates of the CDF function based on the Clopper-Pearson method~\cite{cohen2019certified} and take the sum as the CDF.
This method is equivalent to CDF smoothing and is found to achieve slightly better results in practice.
We set two baselines by certifying using policies trained by Gaussian Augmentation (Gaussian)~\cite{kumar2021policy,wu2022crop} and NoisyNet~\cite{plappert2018parameter}.
Gaussian injects noises sampled from the same Gaussian distribution used for certification into the observations during training. 
On the other hand, NoisyNet randomizes the weights of the linear layers in the Q-network to aid efficient exploration during training and improve return obtained during tests.
We apply noise with the same scale to all the methods during training for comparison.

%%%%%%%%%%%%%%%%%%%%%%%%%%%%%%%%%%%%%%%%%%%%%%%%%%%%%%%%%%%%%%%%%%%%%%%%%%%%%%%%%%%%%%%%%%%%%%%%%%%%%%%%%%
\noindent \textbf{Hyper-parameters and detailed settings.~}
The discount factor in all training experiments is set to $\gamma=0.99$ except for Highway, where $\gamma = 0.8$.
The agents randomly sample actions according to a greedy probability which decreases linearly from $1$ to $0$ within the first $16\%$ training steps.
In Highway, the probability decreases linearly from $1$ to $0.05$ over the first $10\%$ of the training steps.
The target networks are updated softly using a Polyak rate of $1$.
In \texttt{CAMP}, we use adaptive $\eta$ values by setting $\eta = \max_{(s,a)\sim \gZ_t} Q_{\pi}(s,a) - \min_{(s,a)\sim \gZ_t} Q_{\pi}(s,a)$, where $\gZ_t$ represents the trajectory batch at the $t$-th step.
We also conduct an ablation study in Section~\ref{subsec:ablation} to analyze the impact of training with different $\lambda$ values on the certification performance.
In the certification, the return is computed with $\gamma=1$.
Each certification run includes return values from $10000$ game rounds in which the observation is randomized by Gaussian noise. 

\begin{itemize}[leftmargin=*]
\item \textbf{Cartpole:~} 
We train agents in the Gymnasium implementation of \textit{CartPole-v0} with single-frame observations (Cartpole-1) and five-frame observations (Cartpole-5), respectively.
In both cases, we use a batch size of $1024$, a learning rate of $5\times 10^{-5}$, and the training spans $500$k steps.
The training stops when the return reaches $200$ (\ie the highest possible reward). 
The agent starts training after $10000$ burn-in steps, and the target network is updated every $10$ steps.
\item \textbf{Highway:~}
We select \textit{highway-fast-v0} as the environment.
We use a batch size of $1024$, a learning rate of $5 \times 10^{-5}$, and train for $1000$k steps.
The training starts after $1000$ steps, and the target network is updated every $10$ steps.
\item \textbf{Atari:~} 
We use Gymnasium implementations of \textit{FreewayNoFrameskip-v0} in hard mode, \textit{PongNoFrameskip-v0}, and \textit{BankHeistNoFrameskip-v4} as the implementations.
The training takes $10$ million steps with a learning rate of $1\times 10^{-4}$.
We observed that applying policy imitation at a later training stage can enhance training stability.
Therefore, in our experiments, we first train the reference network until convergence. 
Subsequently, we duplicate the reference network as the primary network and apply policy imitation with \texttt{CAMP} to update the primary network while keeping the reference network fixed.
The reference network is trained under the same setting as the baselines while updating the primary policy takes an extra $1$ million steps.
\end{itemize}
Detailed hyper-parameter settings are given in Appendix~\ref{appendsub:param_setting}.

%%%%%%%%%%%%%%%%%%%%%%%%%%%%%%%%%%%%%
\begin{figure*}[t]
     \centering
     \begin{subfigure}
         \centering
         \includegraphics[width=.32\linewidth]{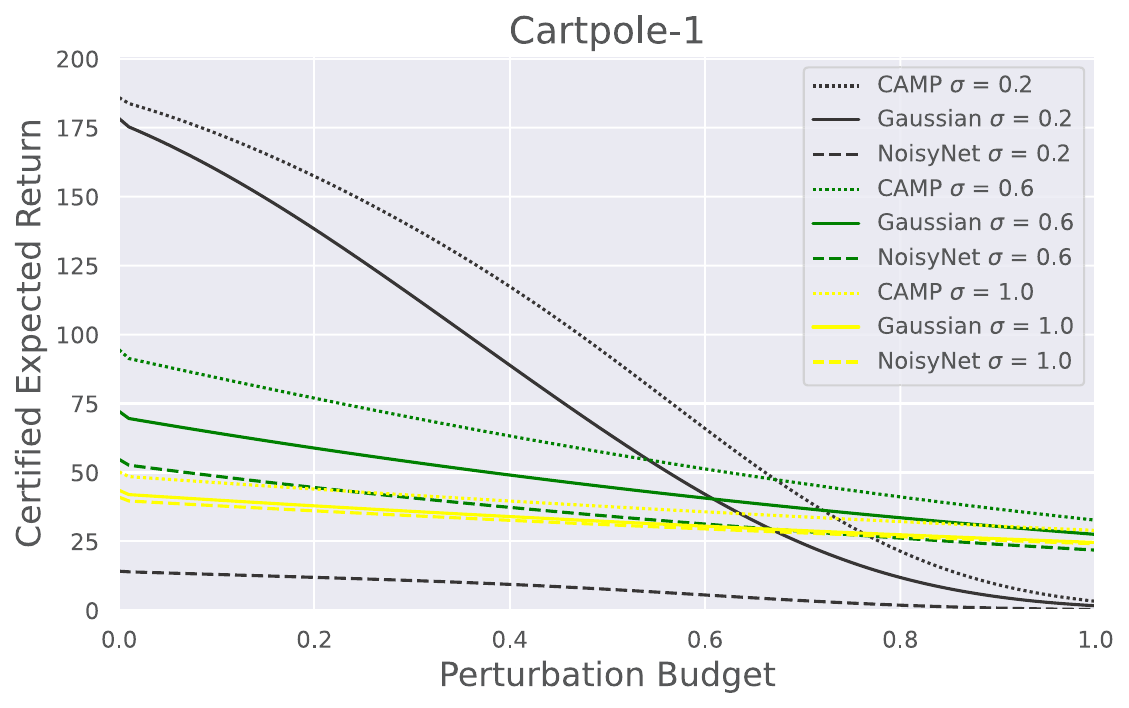}    
     \end{subfigure}
     \begin{subfigure}
         \centering
         \includegraphics[width=.32\linewidth]{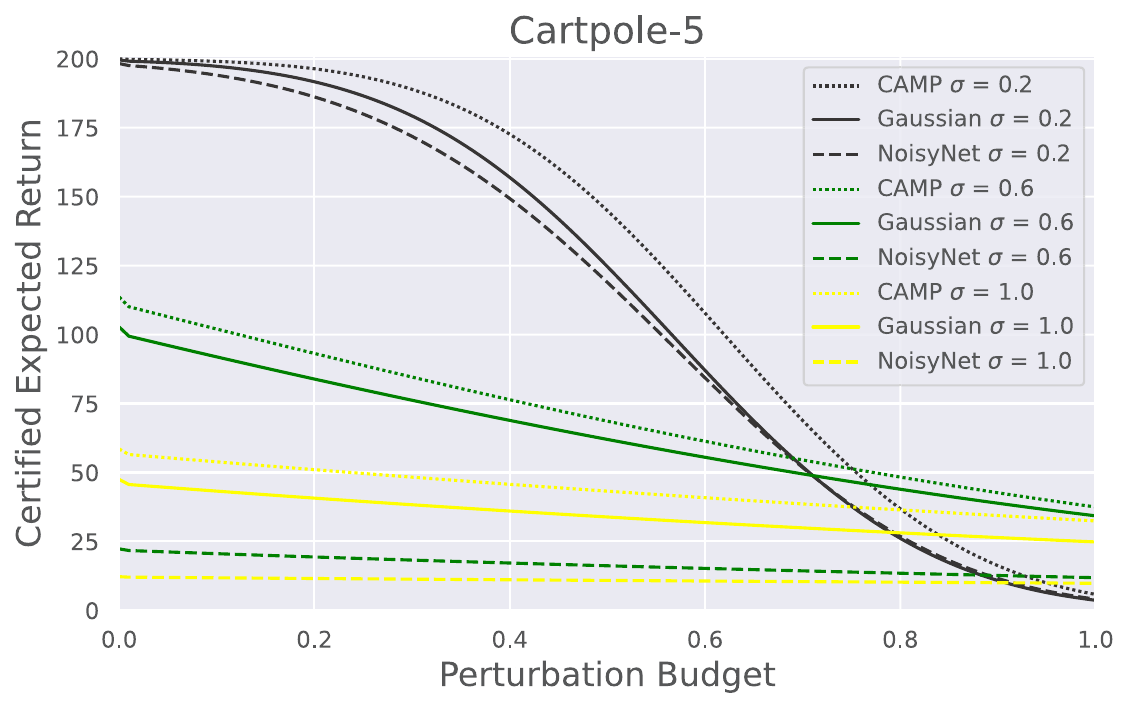}
     \end{subfigure}
     \begin{subfigure}
         \centering
         \includegraphics[width=.32\linewidth]{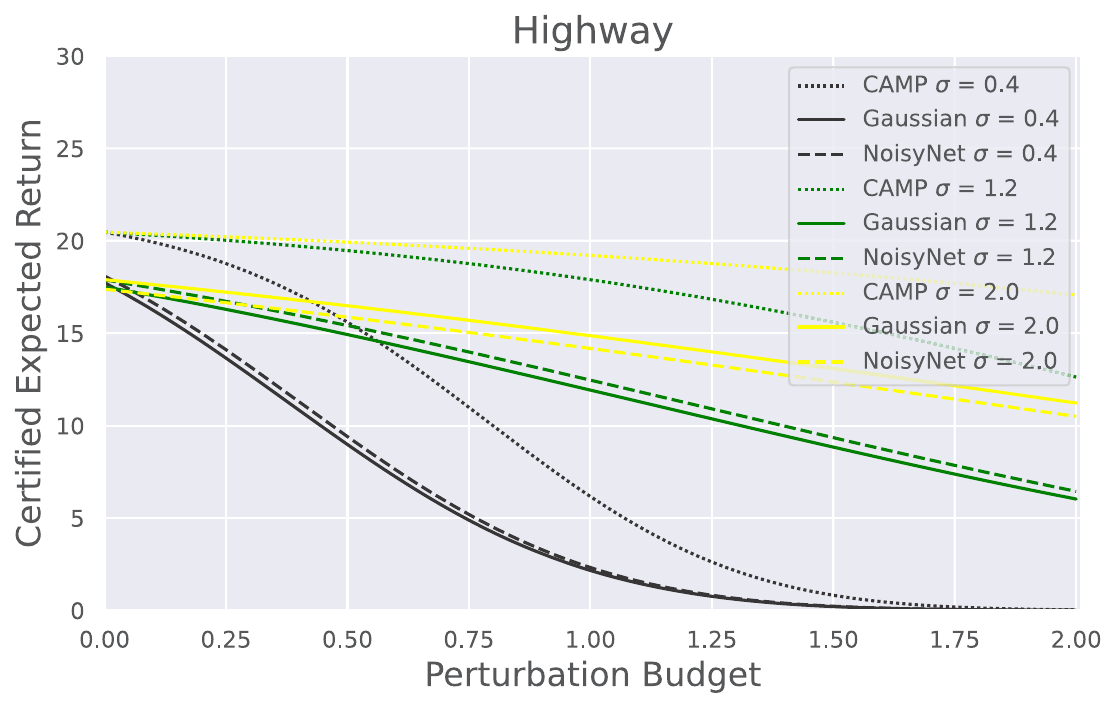}    
     \end{subfigure}
     \begin{subfigure}
         \centering
         \includegraphics[width=.32\linewidth]{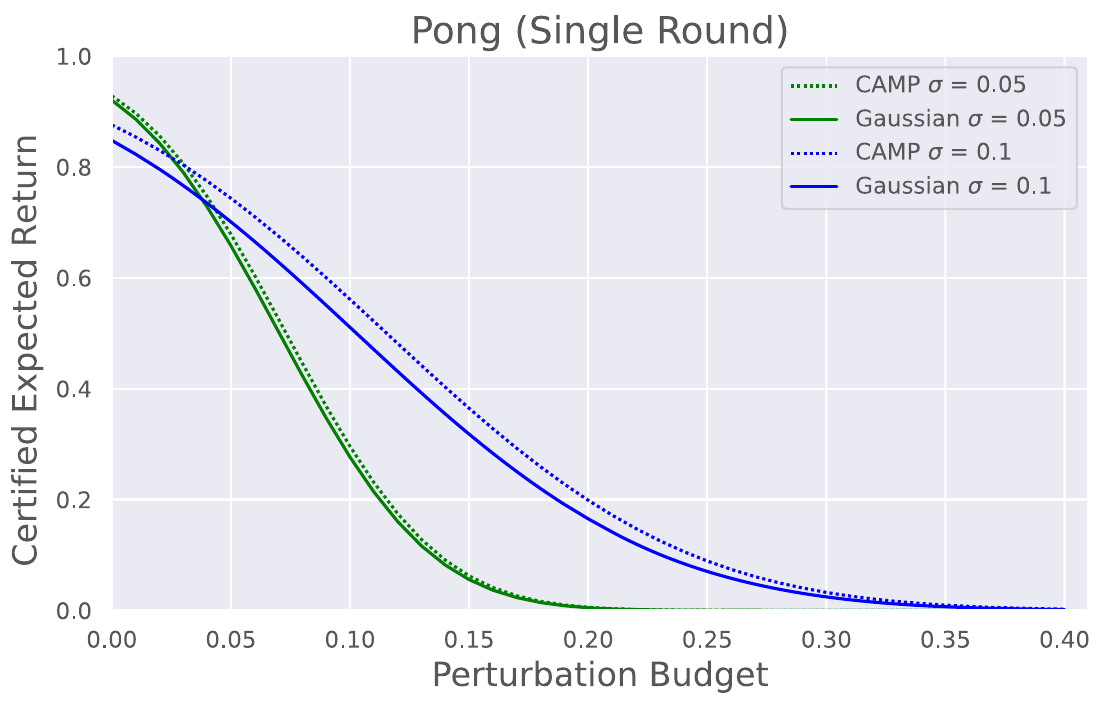}
     \end{subfigure}
     \begin{subfigure}
         \centering
         \includegraphics[width=.32\linewidth]{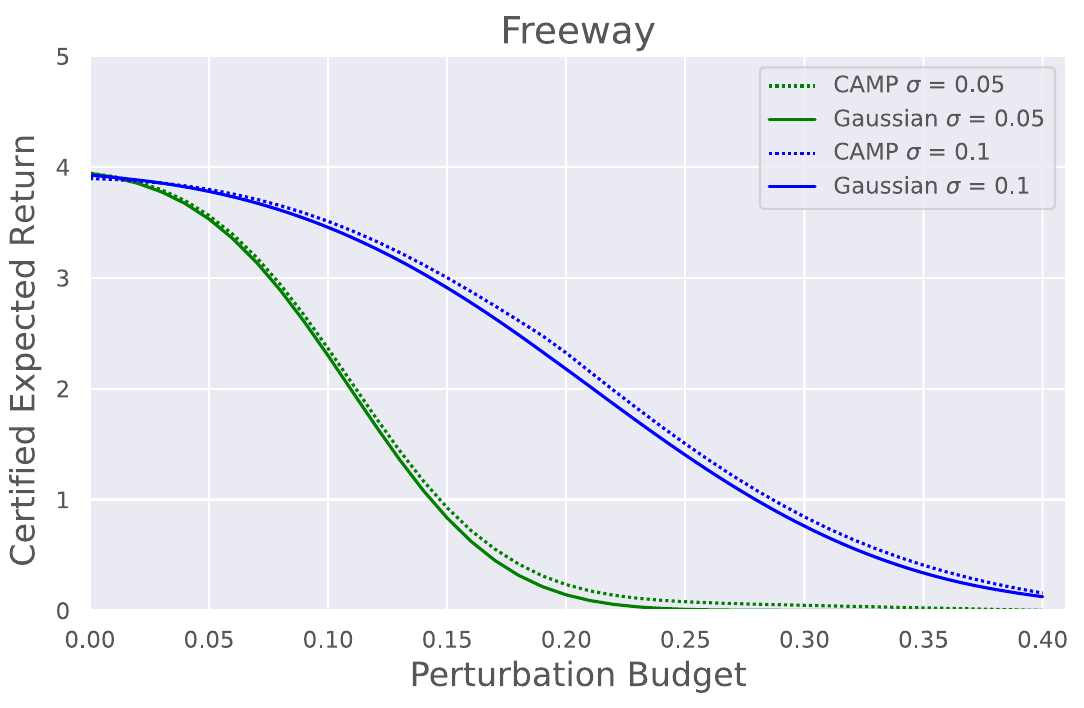}
     \end{subfigure}
     \begin{subfigure}
         \centering
         \includegraphics[width=.32\linewidth]{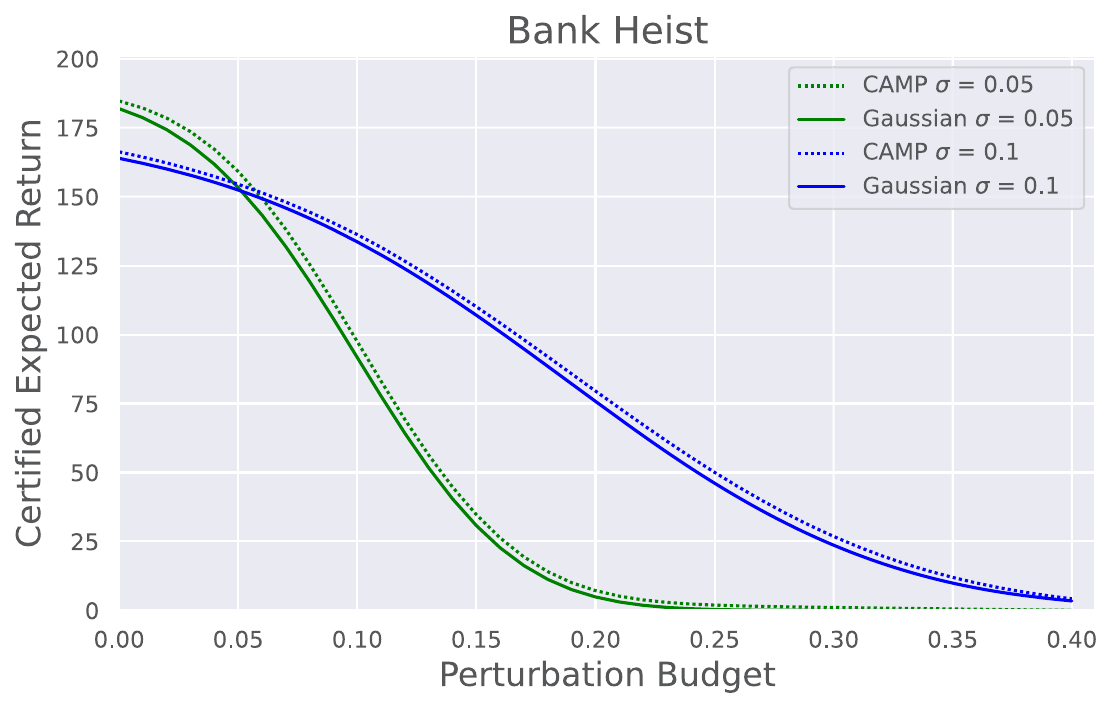}
     \end{subfigure}
\caption{Certification results on CartPole, Highway, Pong, Freeway, and Bank Heist. The perturbation budgets for Atari games (Freeway, Pong, and Bank Heist) are normalized by dividing by $255$.}
\label{fig:cert_comparison}
\end{figure*}
%%%%%%%%%%%%%%%%%%%%%%%%%%%%%%%%%%%%%

%%%%%%%%%%%%%%%%%%%%%%%%%%%%%%%%%%%%%
\begin{figure*}[t]
     \centering
     \begin{subfigure}
         \centering
         \includegraphics[width=.32\linewidth]{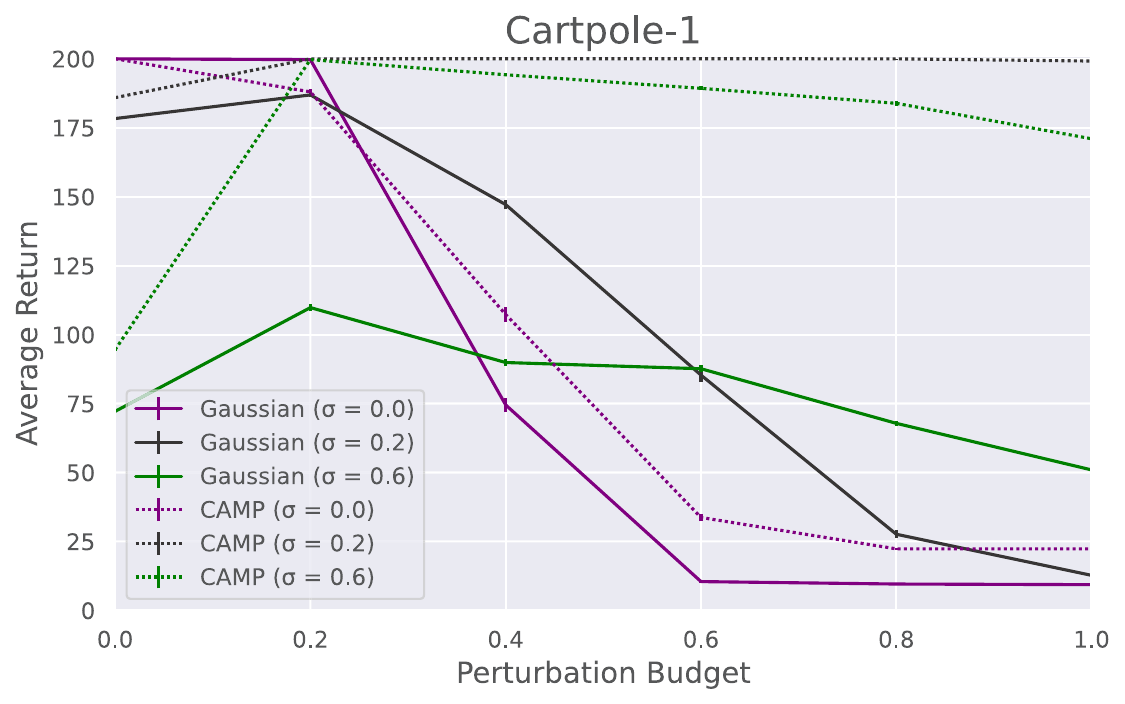}
     \end{subfigure}
     \begin{subfigure}
         \centering
         \includegraphics[width=.32\linewidth]{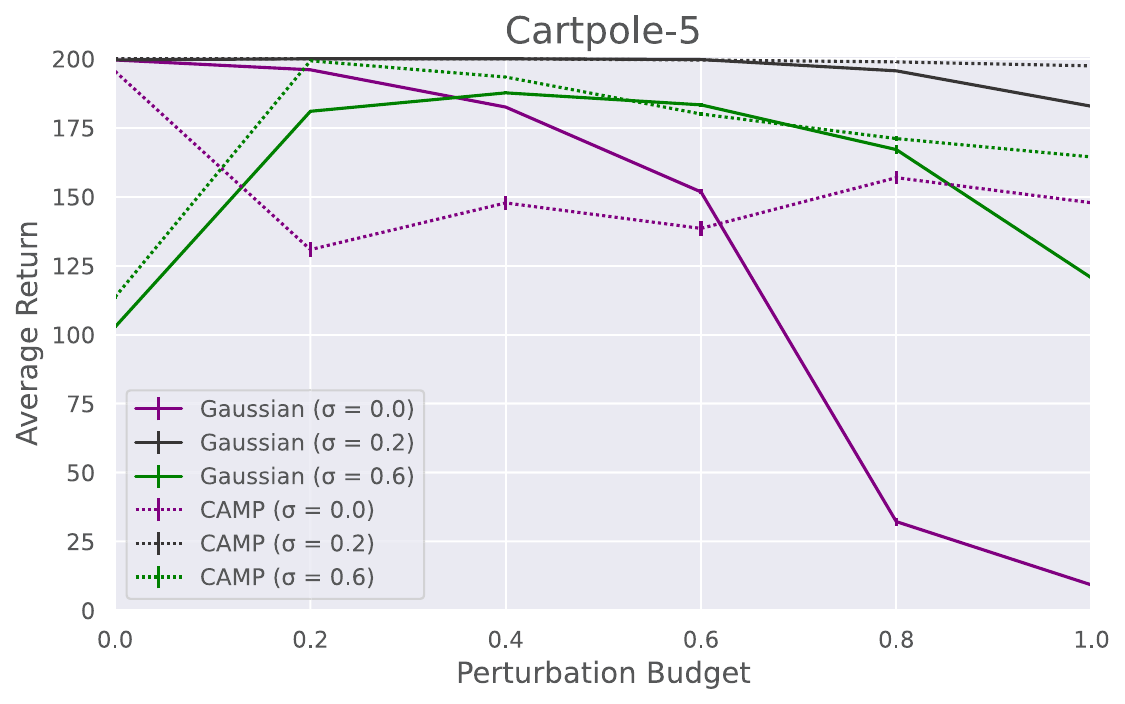}
     \end{subfigure}
     \begin{subfigure}
         \centering
         \includegraphics[width=.32\linewidth]{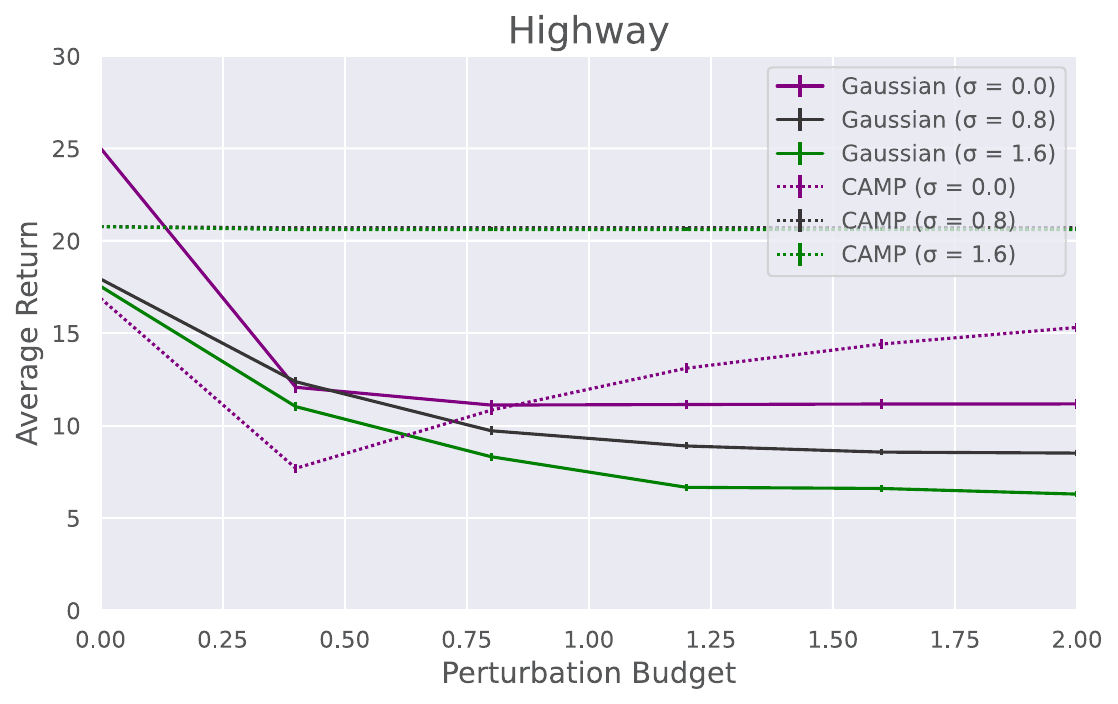}
     \end{subfigure}
     \begin{subfigure}
         \centering
         \includegraphics[width=.32\linewidth]{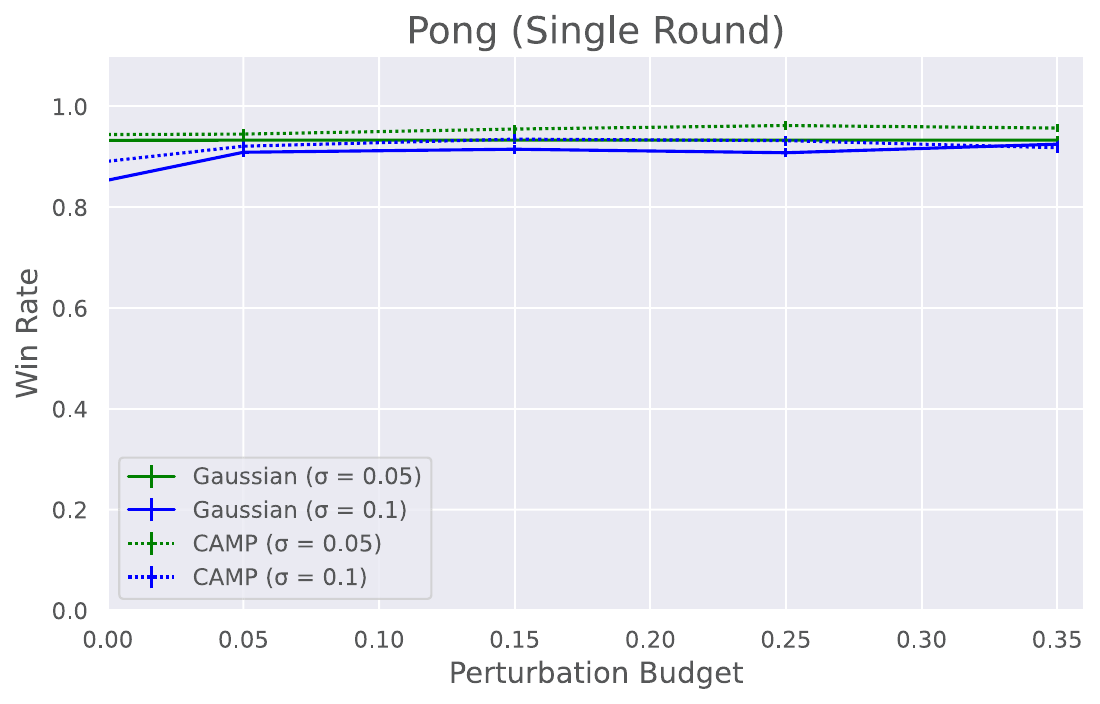}
     \end{subfigure}
     \begin{subfigure}
         \centering
         \includegraphics[width=.32\linewidth]{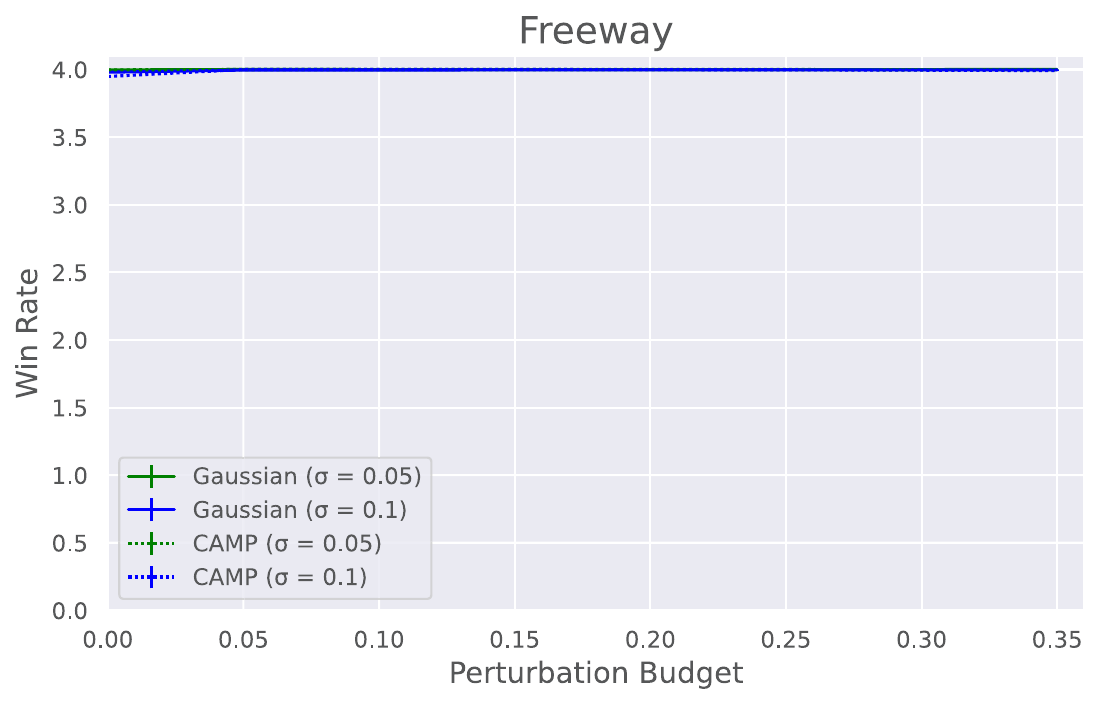}
     \end{subfigure}
     \begin{subfigure}
         \centering
         \includegraphics[width=.32\linewidth]{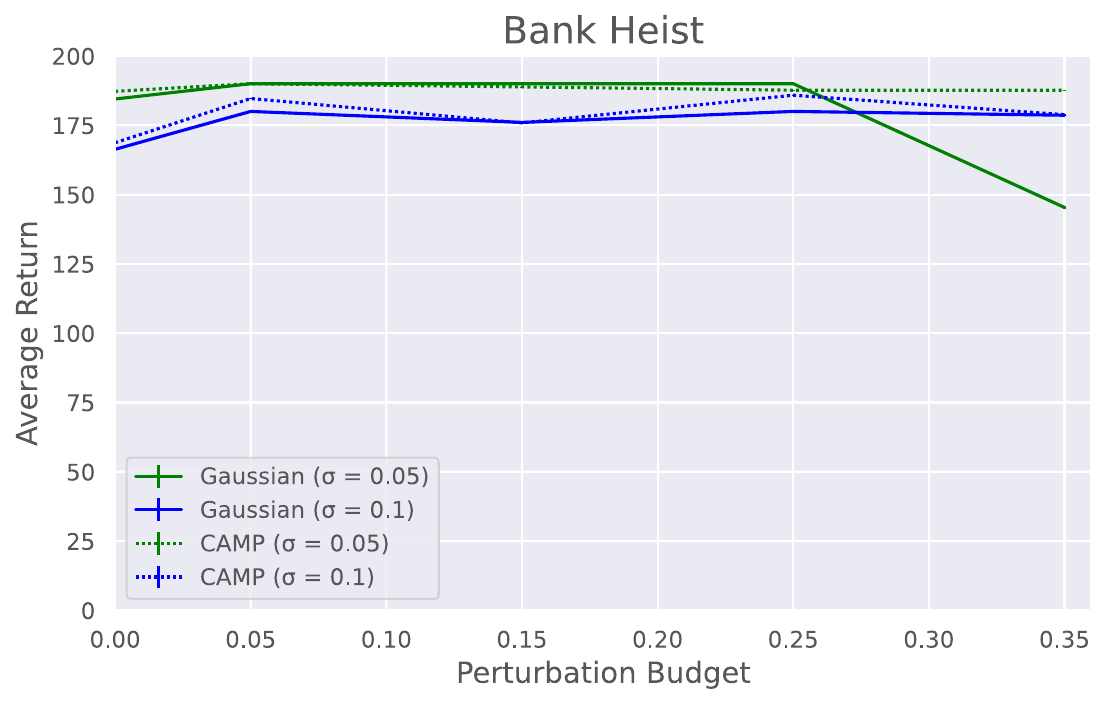}
     \end{subfigure}
\caption{Empirical robustness of agents against PGD in CartPole, Highway, Pong, Freeway, and Bank Heist. We use the same attack in PS~\cite{kumar2021policy} to evaluate the robustness of the agent in individual runs. The perturbation budgets for Freeway, Pong, and Bank Heist are normalized by dividing by $255$. In Pong, since agents either win or lose in each run, the expected return corresponds to the win rate.}
\label{fig:empirical_comparison}
\end{figure*}
%%%%%%%%%%%%%%%%%%%%%%%%%%%%%%%%%%%%%

%%%%%%%%%%%%%%%%%%%%%%%%%%%%%%%%%%%%%
\begin{figure*}[t]
     \centering
     \begin{subfigure}
         \centering
         \includegraphics[width=.32\linewidth]{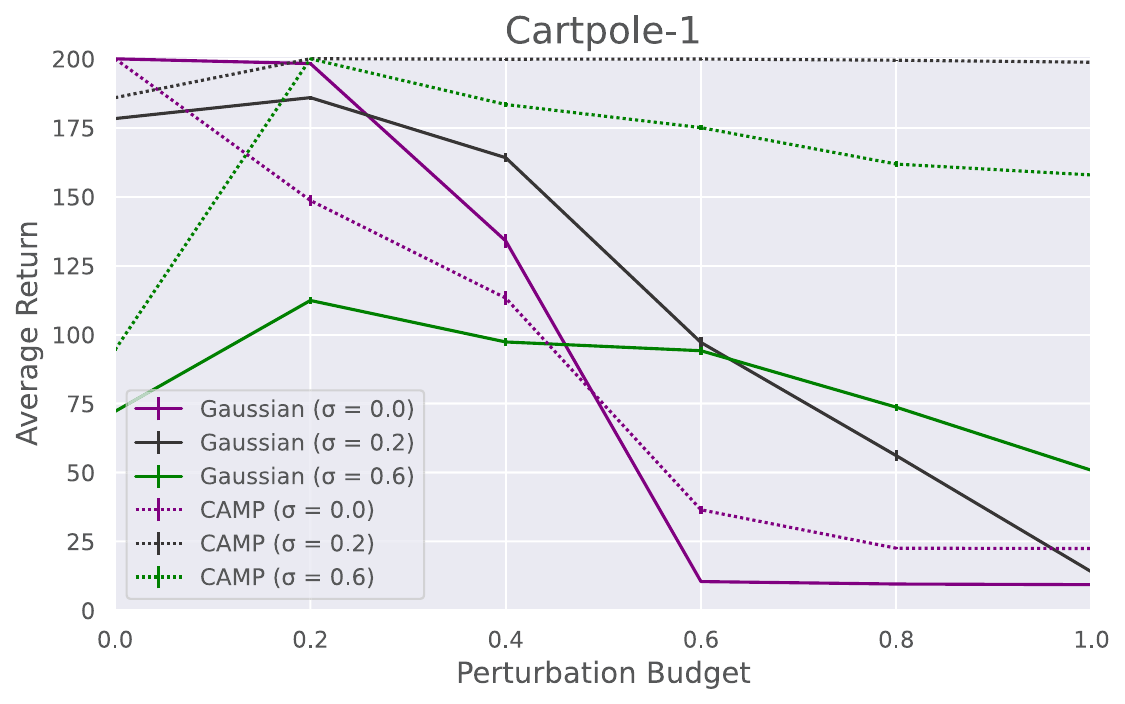}
     \end{subfigure}
     \begin{subfigure}
         \centering
         \includegraphics[width=.32\linewidth]{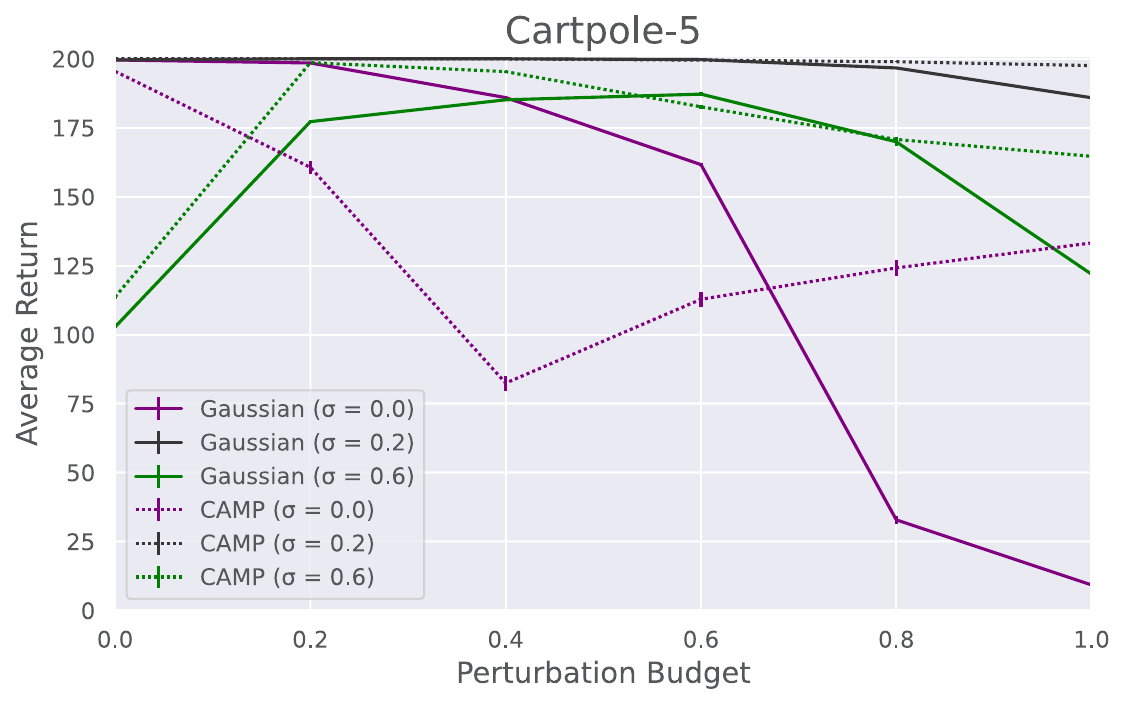}
     \end{subfigure}
     \begin{subfigure}
         \centering
         \includegraphics[width=.32\linewidth]{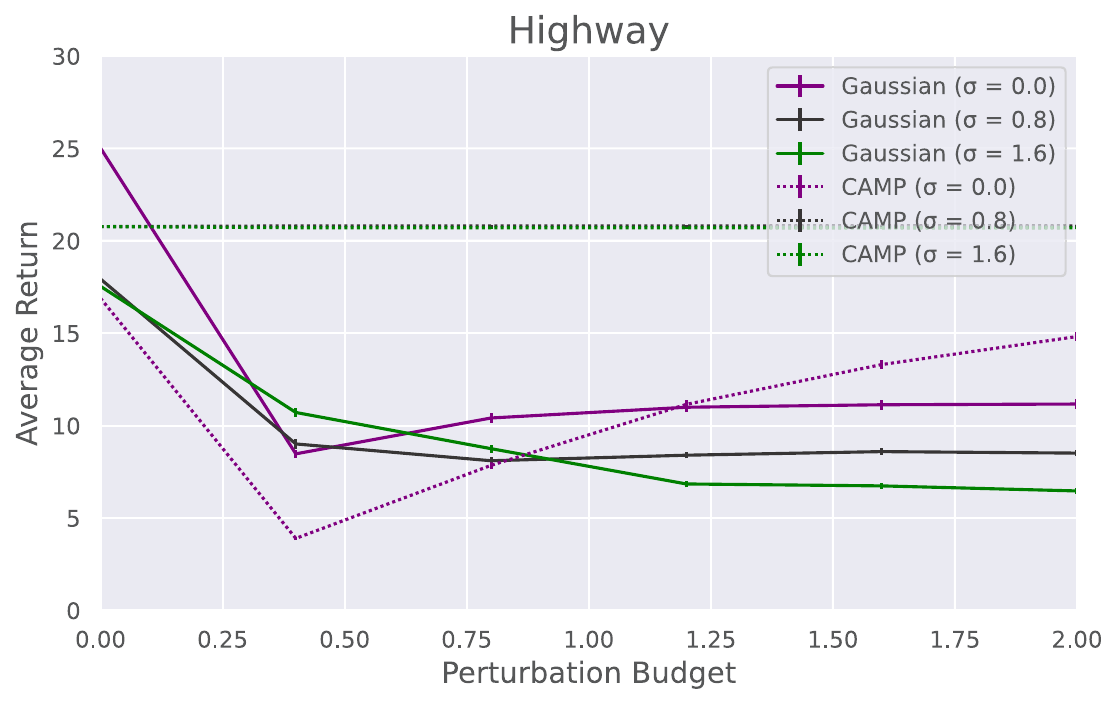}
     \end{subfigure}
     \begin{subfigure}
         \centering
         \includegraphics[width=.32\linewidth]{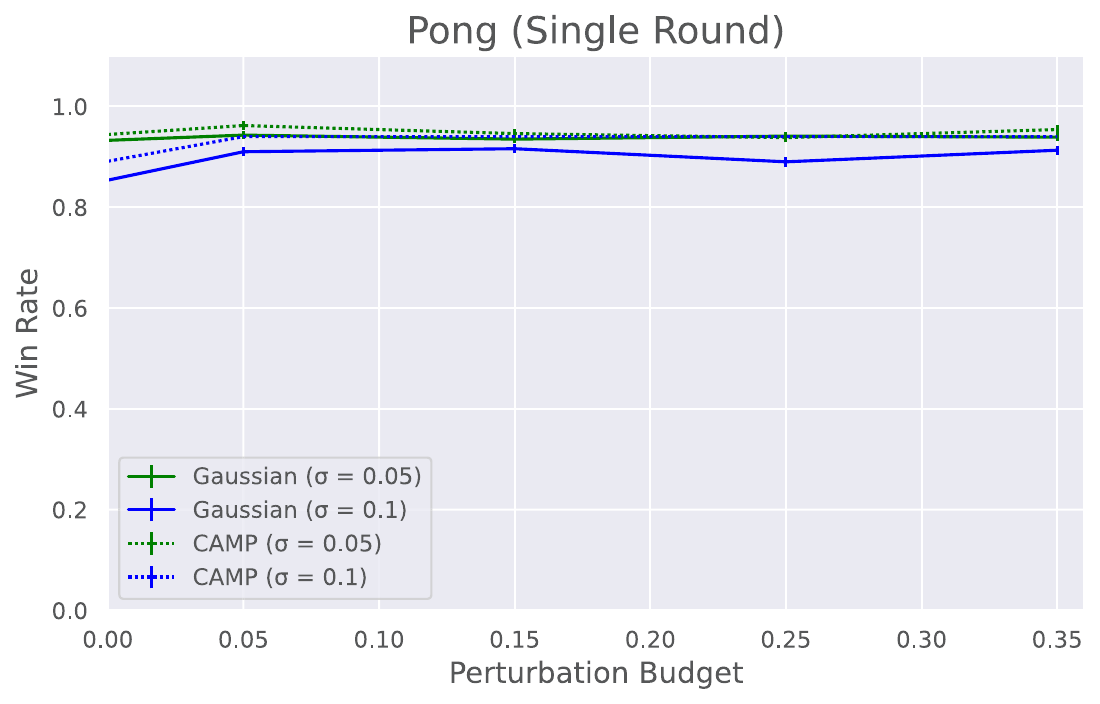}
     \end{subfigure}
     \begin{subfigure}
         \centering
         \includegraphics[width=.32\linewidth]{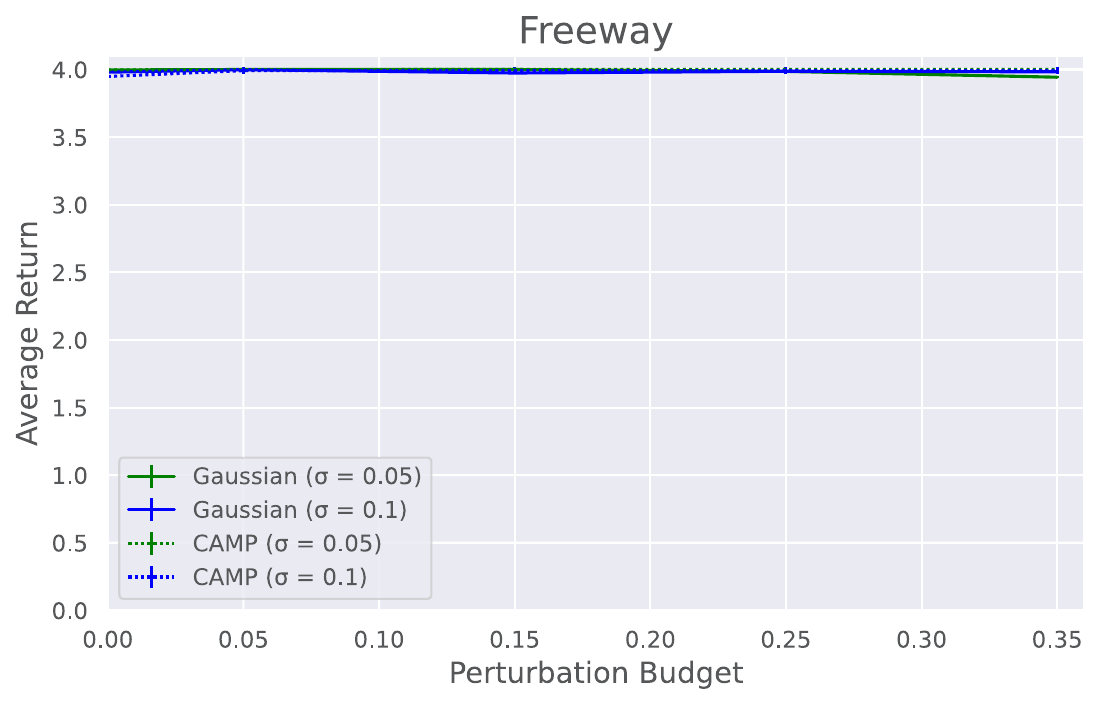}
     \end{subfigure}
     \begin{subfigure}
         \centering
         \includegraphics[width=.32\linewidth]{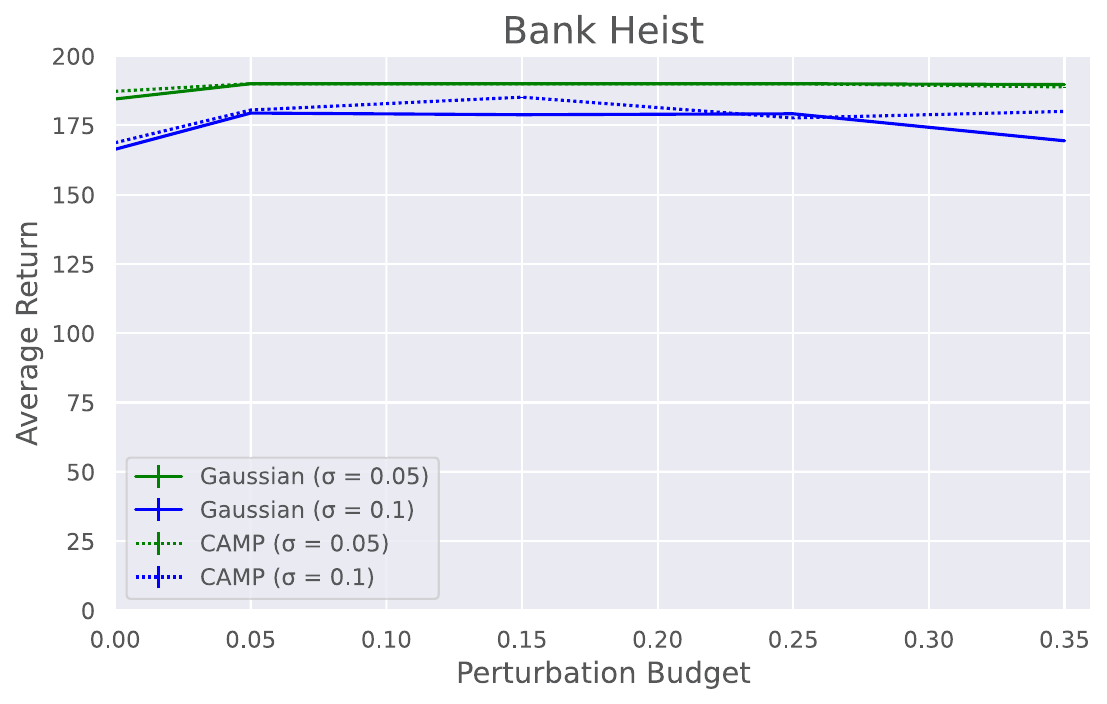}
     \end{subfigure}
\caption{Empirical robustness of agents against APGD in CartPole, Highway, Pong, Freeway, and Bank Heist. The average return values are evaluated under the same settings as PGD. APGD preserves the perturbation budget at each step, allowing it to perturb observations across more steps, which can result in more significant performance degradation for the agents.}
\label{fig:empirical_comparison_apgd}
\end{figure*}
%%%%%%%%%%%%%%%%%%%%%%%%%%%%%%%%%%%%%

%%%%%%%%%%%%%%%%%%%%%%%%%%%%%%%%%%%%%%%%%%%%%%%%%%%%%%%%%%%%%%%%%%%%%%%%%%%%%%%%%%%%%%%%%%%%%%%%%%%%%%%%%%%%%%%%%%%%%%%%%%
\subsection{Evaluation}\label{subsec:comparison}
\noindent \textbf{Comparison of certification results.~}
We benchmark the certified reward of our method with that from baselines in this section.
We unify the hyper-parameters in the certification and apply them to agents trained by Gaussian, NoisyNet, and \texttt{CAMP}, respectively.
For clarity, in Cartpole-1 and Cartpole-5, we compare the certified returns when $\sigma\in\{0.2, 0.6, 1.0\}$ and plot them in Figure~\ref{fig:cert_comparison}.
Meanwhile, the full Cartpole certification results based on $\sigma\in\{0.2, 0.4, 0.6, 0.8, 1.0\}$ are provided in Figure~\ref{fig:full_cert_comparison} of Appendix~\ref{appendsub:full_certification_results}.
We compare all three methods in these two environments. 

Based on the results, the certified expected returns from \texttt{CAMP} show universal increases across all combinations of certified radii and smoothing noise levels compared to the baselines. 
In both Cartpole-1 and Cartpole-5, the increase in certified expected returns with \texttt{CAMP} becomes more pronounced when certifying at an $\ell_2$ radius between $0.2$ and $0.8$.
According to Figure~\ref{fig:cert_comparison}, in Cartpole-1, the utility gain from \texttt{CAMP} is slightly more obvious than that in Cartpole-5.
However, the most significant improvement is observed in Cartpole-5 at $\sigma=0.4$, where \texttt{CAMP} doubles the certified return against adversaries with budgets between 0.2 and 1.0. 
These results indicate that \texttt{CAMP} can effectively enhance certified returns at fixed certified radii in environments with low-dimensional observations.
NoisyNet performs worse than both Gaussian and \texttt{CAMP}, suggesting that NoisyNet may be ineffective in handling observations with significant noise. 
Notably, when $\sigma=0.2$ in Cartpole-1, NoisyNet fails in certification, likely due to underfitting caused by its randomized parameters, which complicates learning the Q-function in conditions with less informative observations.

Additionally, the results for the Highway environment with $\sigma\in\{0.4, 1.2, 2.0\}$ are shown in Figure~\ref{fig:cert_comparison}.
It can be observed that \texttt{CAMP} significantly outperforms the baselines across all smoothing noise levels.
These results suggest that \texttt{CAMP} agents are relatively robust in the Highway environment and can tolerate more adversarial perturbations compared to agents trained by the baseline methods.
The full comparison with $\sigma\in\{0.4, 0.8, 1.2, 1.6, 2.0\}$ is included in Appendix~\ref{appendsub:full_certification_results}.

On the other hand, we certify agents in Freeway, Pong1r, and Bank Heist with $\sigma \in \{12.75, 25.5\}$ to evaluate the effectiveness of \texttt{CAMP} in high-dimensional observation spaces. 
Improving certification performance in these high-dimensional environments is considered a more challenging task. 
We include Gaussian and \texttt{CAMP} in the comparison, as NoisyNet consistently underperformed Gaussian in our experiments when training in noisy environments.
Similarly, the certification results are in Figure~\ref{fig:cert_comparison}.
In the plots, we normalize the perturbation budget and $\sigma$ values by dividing by 255. 
It is evident that \texttt{CAMP} consistently certifies better expected returns than Gaussian. 
While the improvements are more subtle compared to other environments, they are particularly noticeable when the attack budget ranges from $[11.25, 63.75]$ in Pong, $[38.25, 102]$ in Freeway, and $[0, 76.5]$ in Bank Heist.

Given these certification results on expected returns, we proceed to examine the empirical robustness of return values in each game round against adversarial perturbations.

\noindent \textbf{Empirical robustness verification.~}
In addition to comparing certification results, we also explore the empirical robustness of the trained DRL agent in each game episode. 
Recall that certification requires running the agent in the game numerous times to compute a lower bound on the expected return. 
Beyond this, we are also interested in the robustness of the agent in single game runs against adversaries. 
To this end, we extend the adversarial attack methods from previous literature~\cite{lutjens2020certified, kumar2021policy} to evaluate \texttt{CAMP}.

Given a trained policy $\pi$, the attack perturbs the observed state $s$ by a perturbation $\delta$ to generate a misleading action $a' = \argmax_{a} \pi(s+\delta)_a$.
Compared to the original action $a^* = \argmax_{a} \pi(s)_a$ returned based on the unperturbed observation, $a'$ is a minimizer of the original Q-value $Q_{\pi}(s, a')$.
Specifically, we use Projected Gradient Descent (PGD)~\cite{madry2017towards} and AutoAttack with AutoPGD (APGD)~\cite{croce2020reliable} to generate perturbations constrained by $\ell_2$-norm budgets.
In the attacks, we minimize the Cross-Entropy loss between $\pi(s+\delta)$ and (the one-hot vector of) a target action $a'$ from the action space $\sA$ and iteratively search for the $a'$ that produces the lowest $Q_{\pi}(s, a')$ value across the action space.
The total $\ell_2$ budget is fixed, and any remaining budget from the current time step rolls over to the next.
The attack at each time step terminates if $\argmax_{a}\pi(s+\delta)_a$ equals $a'$ or the perturbation budget is exhausted.
The detailed algorithms are presented in Appendix~\ref{append:algorithms}.

We measure empirical robustness by the average return from $1000$ independent episode plays. 
The average returns of different agents across various environments under different perturbations are illustrated in Figures~\ref{fig:empirical_comparison}-\ref{fig:empirical_comparison_apgd}, with error bars indicating variability.
Notably, agents under attack demonstrate higher performance than the certified results shown in Figures~\ref{fig:cert_comparison} and \ref{fig:full_cert_comparison}.
This discrepancy arises because 1) certified expected returns represent worst-case scenarios and serve as lower bounds for the average returns in Figures~\ref{fig:empirical_comparison}-\ref{fig:empirical_comparison_apgd}, and 2) empirical attacks consume the perturbation budget from the initial time step rather than strategically allocating perturbations to steps causing the most significant reward loss.

\texttt{CAMP} outperforms the baseline in most of the evaluated scenarios with non-zero Gaussian noise augmentations.
In both Figure~\ref{fig:empirical_comparison} and Figure~\ref{fig:empirical_comparison_apgd}, we first assess the Cartpole-1/Cartpole-5 agents trained with noisy observations, where $\sigma\in\{0, 0.2, 0.6\}$, and subjected to perturbations with $\ell_2$ budget caps $\tau$ from $\{0.2, 0.4, 0.6, 0.8, 1.0\}$.
A Gaussian agent with $\sigma=0$ serves as a baseline representing a completely undefended agent, while Gaussian agents with $\sigma\in\{0.2, 0.6\}$ are trained using the Gaussian baseline defense under moderate to high noise levels.
In comparison, \texttt{CAMP} agents with $\sigma \in \{0.2, 0.6\}$ are evaluated against their corresponding Gaussian baselines to highlight improvements in robustness.
In the undefended scenario, agents trained with \texttt{CAMP} still exhibit greater robustness when facing large perturbations.

Similarly, we apply attacks to Gaussian and \texttt{CAMP} agents in Highway, selecting $\sigma \in \{0, 0.8, 1.6\}$ and attack budgets $\tau \in \{0.4, 0.8, 1.2, 1.6, 2.0\}$.
Compared to Gaussian agents, \texttt{CAMP} agents are highly robust against both PGD and APGD attacks.
According to the results, all Gaussian agents experience significant performance degradation under attacks with $\tau \geq 0.4$, whereas \texttt{CAMP} agents trained with $\sigma \in \{0.2, 0.6\}$ remain unaffected. 
Even when $\sigma = 0$, the \texttt{CAMP} agent exhibits superior robustness under attack budgets of $\tau \geq 0.8$ against PGD and $\tau \geq 1.2$ against APGD.

In Atari games, we set $\sigma \in \{12.75, 25.5\}$ and the perturbation budgets are chosen from $\{12.75,38.25,63.75,89.25\}$.
Both $\sigma$ and perturbation budget values are normalized to the range $[0,1]$ in the plot for Atari games.
Agents in Atari games are generally more robust against both attacks.
Augmenting observations with noise enhances agent robustness in both the Gaussian and \texttt{CAMP} frameworks. 
However, the average return of Gaussian agents is more prone to observation noise and declines rapidly with increasing attack budgets in Bank Heist.
In contrast, \texttt{CAMP} effectively mitigates this impact and maintains a higher average return across diverse attack scenarios. 
These results indicate that \texttt{CAMP} not only advances reward certification but also improves the robustness of Gaussian-augmented DRL agents against empirical attacks in individual episodes.

%%%%%%%%%%%%%%%%%%%%%%%%%%%%%%%%%%%%%
\begin{table}[t]
\caption{Return of Agents Trained in Noise-Free Environments}
\label{table:zero_noise_utility}
\centering
\resizebox{.5\columnwidth}{!}{%
\begin{tabular}{cccc}
\toprule
\multirow{2}{*}{Game}   & \multicolumn{3}{c}{Method} \\
\cmidrule(r){2-4}
                        &\texttt{CAMP}&  Gaussian   &  NoisyNet   \\
\midrule
Cartpole-1              &   200.00    &    199.94   &   200.00    \\
Cartpole-5              &   195.81    &    199.50   &   199.68    \\
Pong1r                  &     0.91    &      0.90   &     -       \\
Freeway                 &     4.00    &      4.00   &     -        \\
\bottomrule
\end{tabular}
}
\end{table}
%%%%%%%%%%%%%%%%%%%%%%%%%%%%%%%%%%%%%

\noindent \textbf{Impact on training in normal environments.~}
We also evaluated the performance of \texttt{CAMP} when training DRL agents in environments without observation noise.
We test the trained agents and record their test returns in Table~\ref{table:zero_noise_utility}. 
Each return in the table is averaged from 100 independent game runs. 
It can be observed that \texttt{CAMP} does not degrade the training performance in noise-free environments, supporting the versatility of \texttt{CAMP}.

%%%%%%%%%%%%%%%%%%%%%%%%%%%%%%%%%%%%%
\begin{table}[t]
\caption{Minimal Q-gap under Varying $\lambda$ and $\sigma$ Values}
\label{table:q_gap}
\centering
\resizebox{1.0\linewidth}{!}{%
\begin{tabular}{cccccccc}
\toprule
\multirow{2}{*}{Game}        & \multirow{2}{*}{Method}     & \multicolumn{6}{c}{$\sigma$} \\
\cmidrule(r){3-8}
                             &                             & 0.0 & 0.2 & 0.4 & 0.6 & 0.8 & 1.0 \\
\midrule
\multirow{3}{*}{Cartpole-1}  & Gaussian                    &  0  &  0  &  0  &  0  &  0  &   0  \\
                             & NoisyNet                    &  0  &  0  &  0  &  0  &  0  &   0  \\
                             & \texttt{CAMP}($\lambda=0.5$)& $1.49\times 10^{-8}$ & $6.37\times 10^{-7}$ & $1.25\times 10^{-5}$ & $4.55\times 10^{-6}$ & $2.68\times 10^{-5}$ & $1.30\times 10^{-7}$ \\
                             & \texttt{CAMP}($\lambda=1$)  & $2.91\times 10^{-7}$ & $2.89\times 10^{-6}$ & $9.82\times 10^{-6}$ & $9.57\times 10^{-5}$ & $2.10\times 10^{-5}$ & $4.82\times 10^{-6}$ \\
                             & \texttt{CAMP}($\lambda=4$)  & $1.86\times 10^{-7}$ & $2.98\times 10^{-8}$ & $4.10\times 10^{-7}$ & $2.52\times 10^{-5}$ & $2.46\times 10^{-5}$ & $5.58\times 10^{-9}$ \\
                             & \texttt{CAMP}($\lambda=16$) & $0$                  & $2.24\times 10^{-8}$ & $1.86\times 10^{-9}$ & $0$                  & $9.31\times 10^{-9}$ & $3.73\times 10^{-9}$ \\
\midrule
\multirow{3}{*}{Cartpole-5}  & Gaussian                    &  0   &  0  &  0  &  0  &  0  &  0  \\
                             & NoisyNet                    &  0   &  0  &  0  &  0  &  0  &  0  \\
                             & \texttt{CAMP}($\lambda=0.5$)& $6.71\times 10^{-8}$ & $2.48\times 10^{-6}$ & $3.24\times 10^{-6}$ & $9.24\times 10^{-7}$ & $2.18\times 10^{-6}$ & $1.30\times 10^{-6}$ \\
                             & \texttt{CAMP}($\lambda=1$)  & $3.35\times 10^{-8}$ & $1.19\times 10^{-7}$ & $1.11\times 10^{-5}$ & $3.43\times 10^{-7}$ & $6.12\times 10^{-5}$ & $3.99\times 10^{-5}$ \\
                             & \texttt{CAMP}($\lambda=4$)  & $2.94\times 10^{-7}$ & $1.86\times 10^{-9}$ & $4.84\times 10^{-8}$ & $3.51\times 10^{-5}$ & $2.02\times 10^{-4}$ & $6.33\times 10^{-8}$ \\
                             & \texttt{CAMP}($\lambda=16$) & $7.45\times 10^{-9}$ & $5.59\times 10^{-9}$ & $0$                  & $5.59\times 10^{-9}$ & $1.10\times 10^{-6}$ & $4.47\times 10^{-8}$ \\
\bottomrule
\end{tabular}
}
\end{table}
%%%%%%%%%%%%%%%%%%%%%%%%%%%%%%%%%%%%%

%%%%%%%%%%%%%%%%%%%%%%%%%%%%%%%%%%%%%
\begin{table}[t]
\caption{Training Time for Different Methods}
\label{table:cost}
\centering
\resizebox{.5\columnwidth}{!}{%
\begin{tabular}{cccc}
\toprule
\multirow{2}{*}{Game}   & \multicolumn{3}{c}{Time (GPU Hour)} \\
\cmidrule(r){2-4}
                        &\texttt{CAMP}&  Gaussian   &  NoisyNet   \\
\midrule
Cartpole-1              &     0.98    &     0.47    &    0.62     \\
Cartpole-5              &     0.93    &     0.42    &    0.53     \\
Highway                 &    13.78    &     7.00    &    6.98     \\
Pong1r                  &    27.91    &    12.73    &      -      \\
Freeway                 &    29.75    &    14.60    &      -      \\
Bank Heist              &    20.11    &    19.63    &      -      \\
\bottomrule
\end{tabular}
}
\end{table}
%%%%%%%%%%%%%%%%%%%%%%%%%%%%%%%%%%%%%

%%%%%%%%%%%%%%%%%%%%%%%%%%%%%%%%%%%%%
\begin{figure*}[t]
     \centering
     \begin{subfigure}
        \centering
        \includegraphics[width=.19\linewidth]{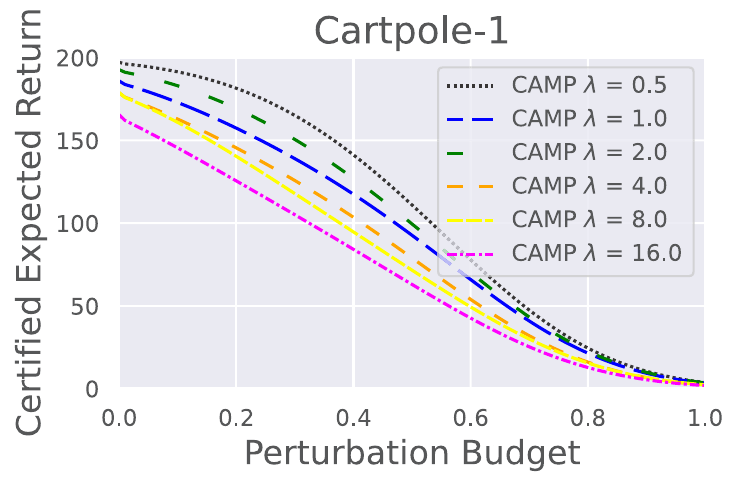}   
     \end{subfigure}
     \begin{subfigure}
        \centering
        \includegraphics[width=.19\linewidth]{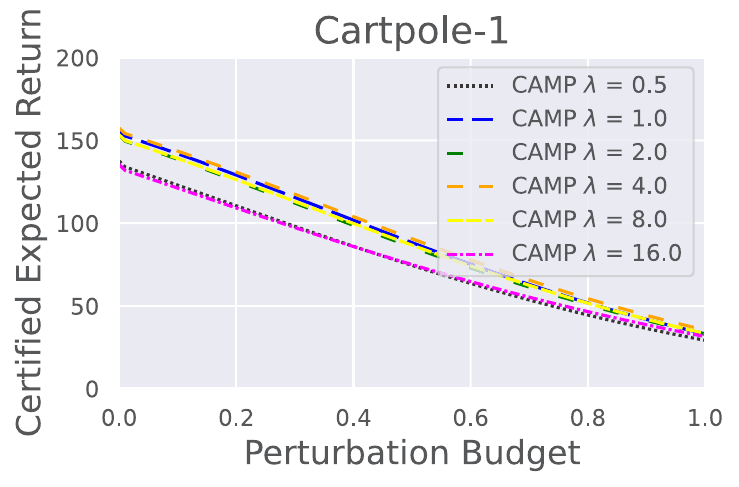}   
     \end{subfigure}
     \begin{subfigure}
        \centering
        \includegraphics[width=.19\linewidth]{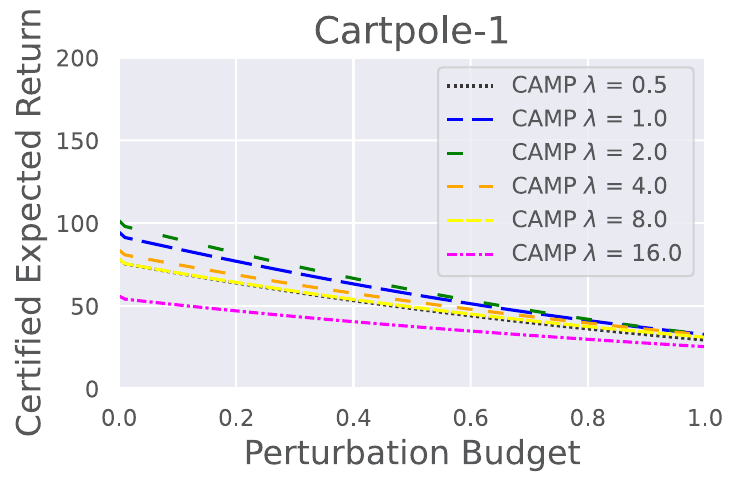}   
     \end{subfigure}
     \begin{subfigure}
        \centering
        \includegraphics[width=.19\linewidth]{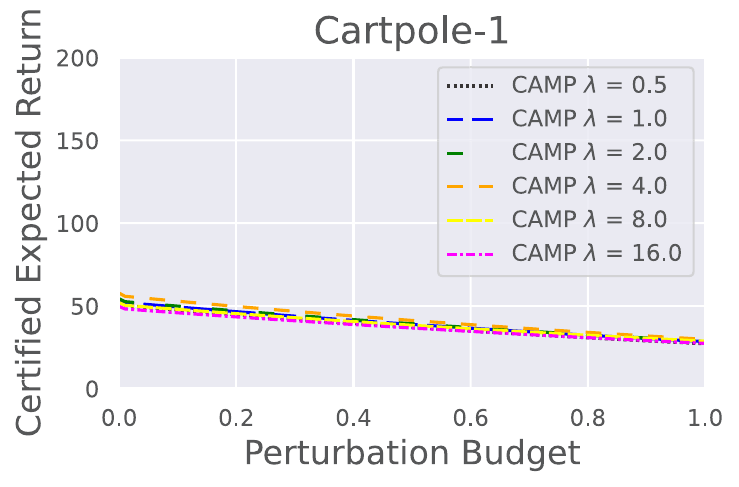}   
     \end{subfigure}
     \begin{subfigure}
        \centering
        \includegraphics[width=.19\linewidth]{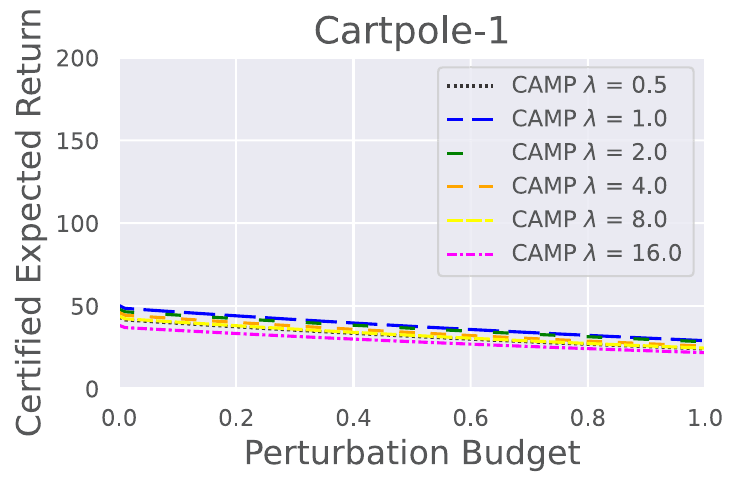}   
     \end{subfigure}
\caption{Ablation on $\lambda$ values in Cartpole-1. The certified expected returns obtained from various $\lambda$ values are shown in the figures. Each figure presents results based on a fixed smoothing noise scale applied to the observed states. From left to right, the smoothing noise scales are 0.2, 0.4, 0.6, 0.8, and 1.0, respectively.}
\label{fig:cartpole_1_ablation}
\end{figure*}
%%%%%%%%%%%%%%%%%%%%%%%%%%%%%%%%%%%%%

%%%%%%%%%%%%%%%%%%%%%%%%%%%%%%%%%%%%%
\begin{figure*}[t]
     \centering
     \begin{subfigure}
        \centering
        \includegraphics[width=.19\linewidth]{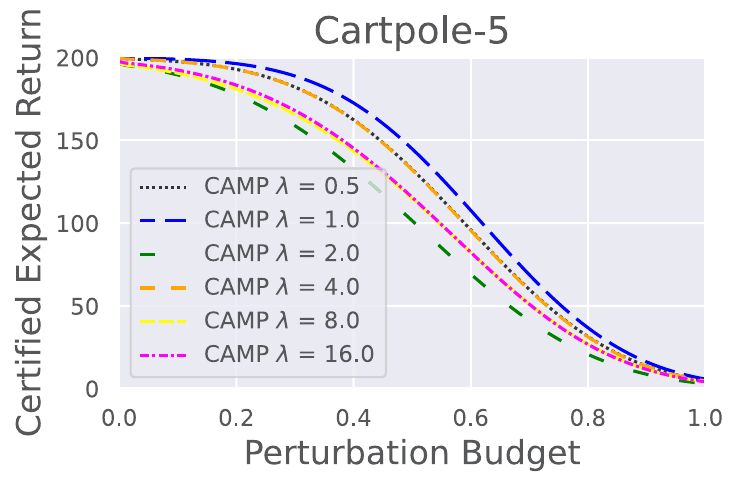}   
     \end{subfigure}
     \begin{subfigure}
        \centering
        \includegraphics[width=.19\linewidth]{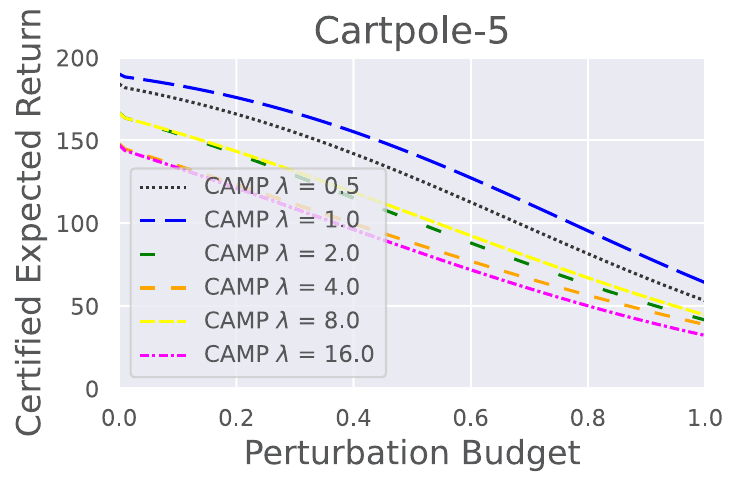}   
     \end{subfigure}
     \begin{subfigure}
        \centering
        \includegraphics[width=.19\linewidth]{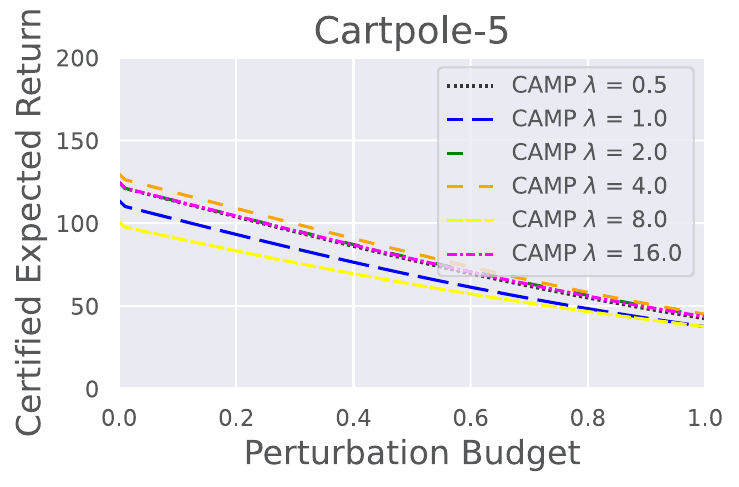}   
     \end{subfigure}
     \begin{subfigure}
        \centering
        \includegraphics[width=.19\linewidth]{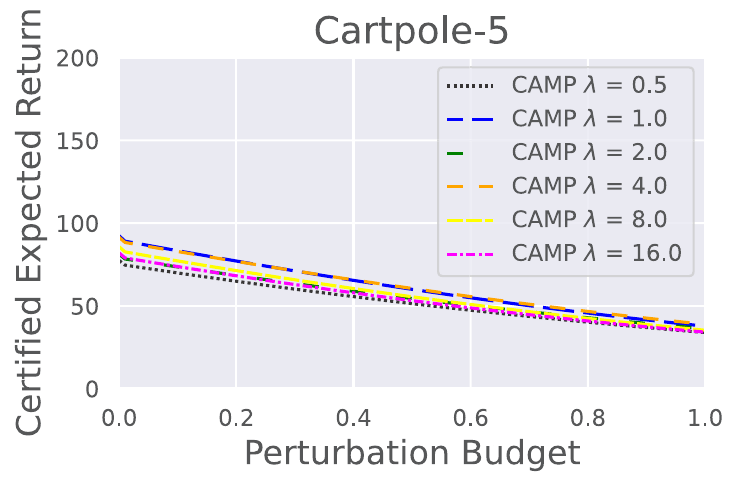}   
     \end{subfigure}
     \begin{subfigure}
        \centering
        \includegraphics[width=.19\linewidth]{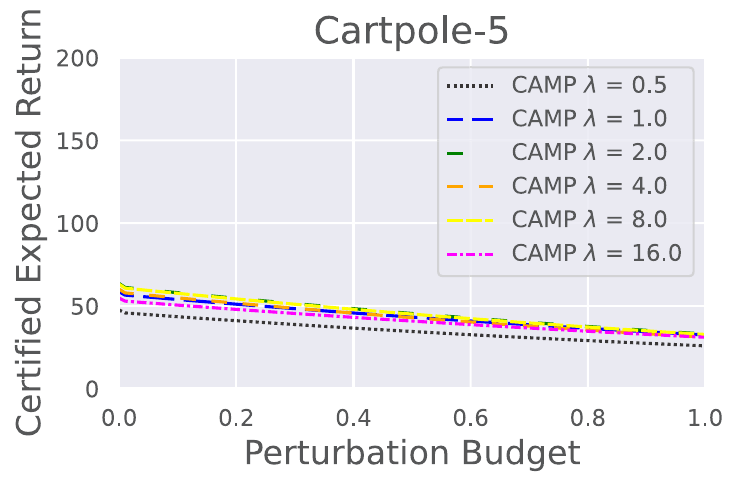}   
     \end{subfigure}
\caption{Ablation on $\lambda$ values in Cartpole-5. The certified expected returns obtained from various $\lambda$ values are shown in the figures. Each figure presents results based on a fixed smoothing noise scale applied to the observed states. From left to right, the smoothing noise scales are 0.2, 0.4, 0.6, 0.8, and 1.0, respectively.}
\label{fig:cartpole_5_ablation}
\end{figure*}
%%%%%%%%%%%%%%%%%%%%%%%%%%%%%%%%%%%%%

%%%%%%%%%%%%%%%%%%%%%%%%%%%%%%%%%%%%%
\begin{figure}[t]
     \centering
     \begin{subfigure}
         \centering
         \includegraphics[width=.49\linewidth]{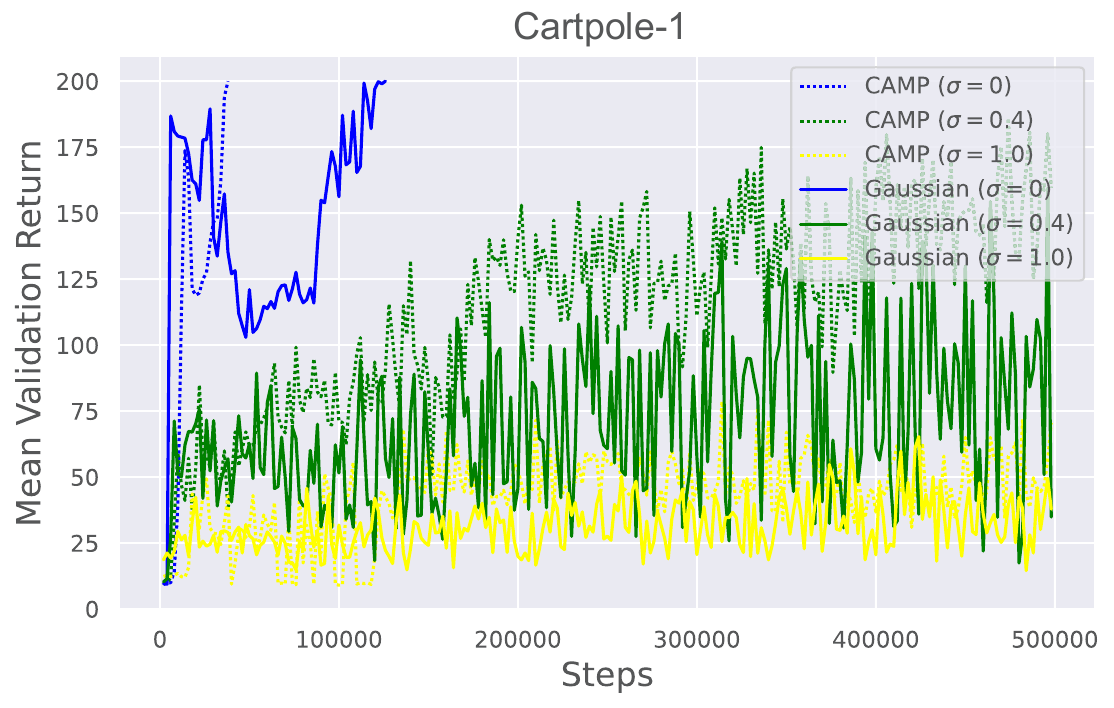}
     \end{subfigure}
     \begin{subfigure}
         \centering
         \includegraphics[width=.49\linewidth]{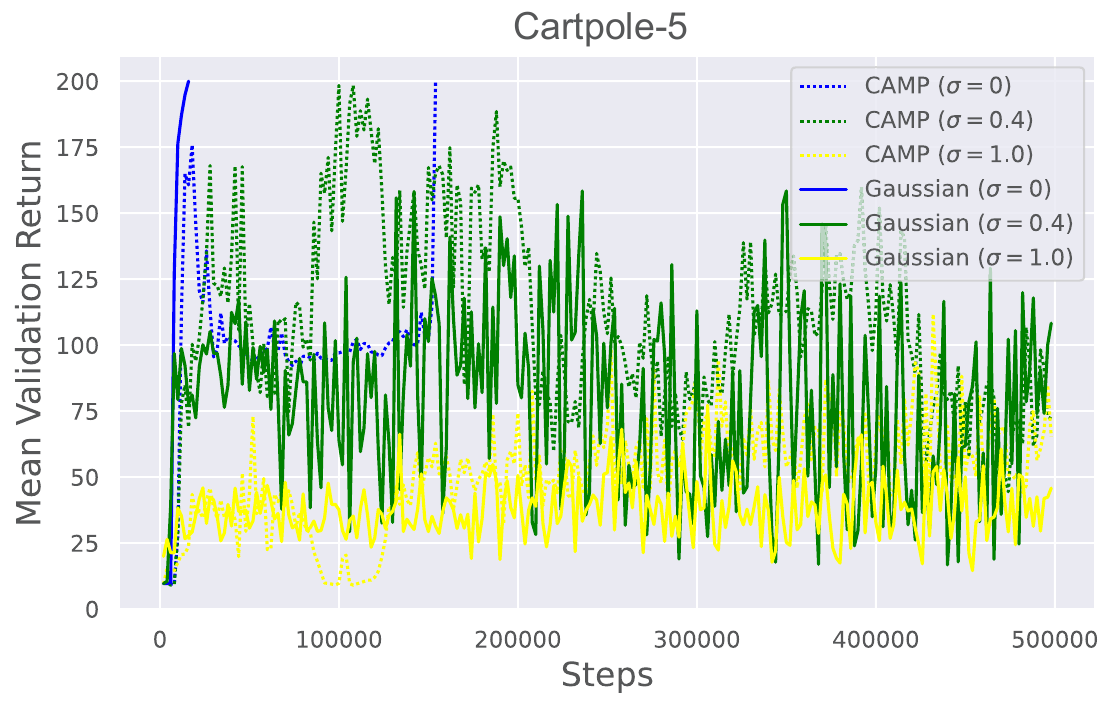}
     \end{subfigure}
     \begin{subfigure}
         \centering
         \includegraphics[width=.49\linewidth]{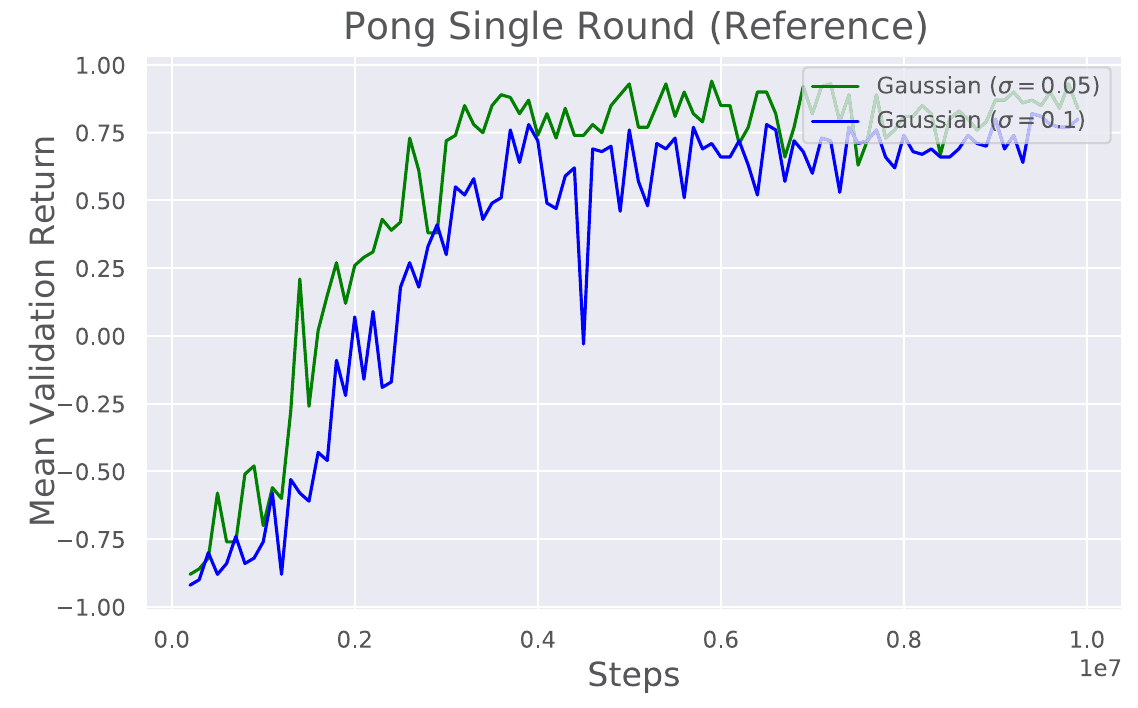}    
     \end{subfigure}
     \begin{subfigure}
         \centering
         \includegraphics[width=.49\linewidth]{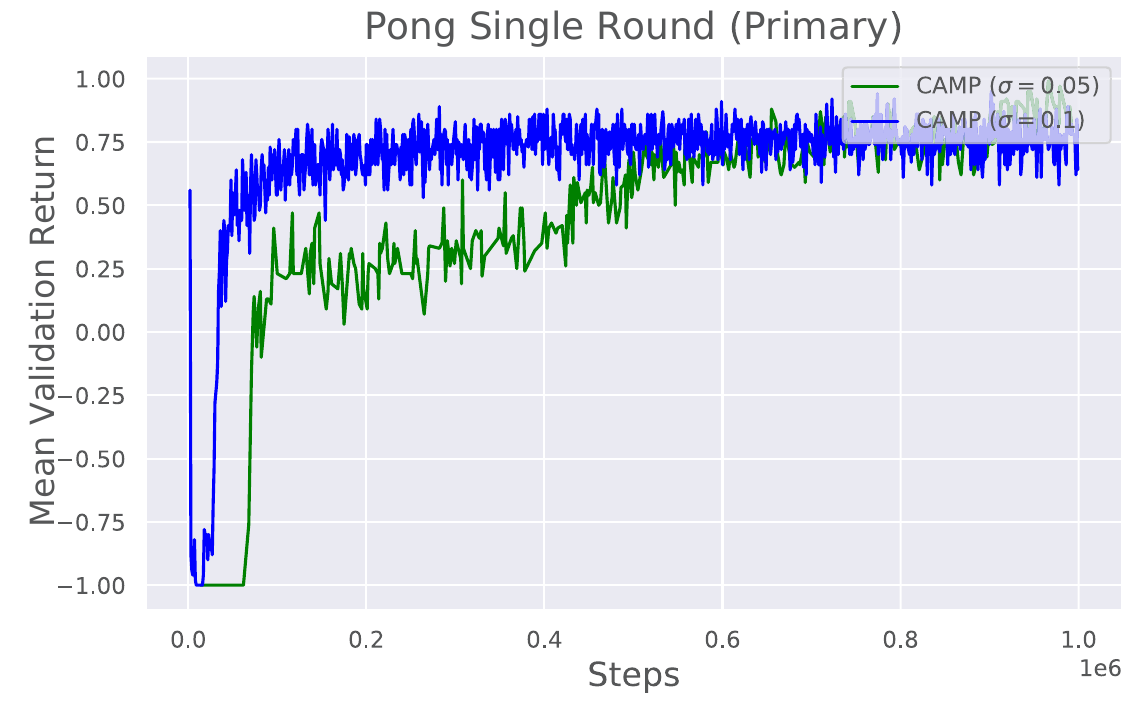}    
     \end{subfigure}
     \begin{subfigure}
         \centering
         \includegraphics[width=.49\linewidth]{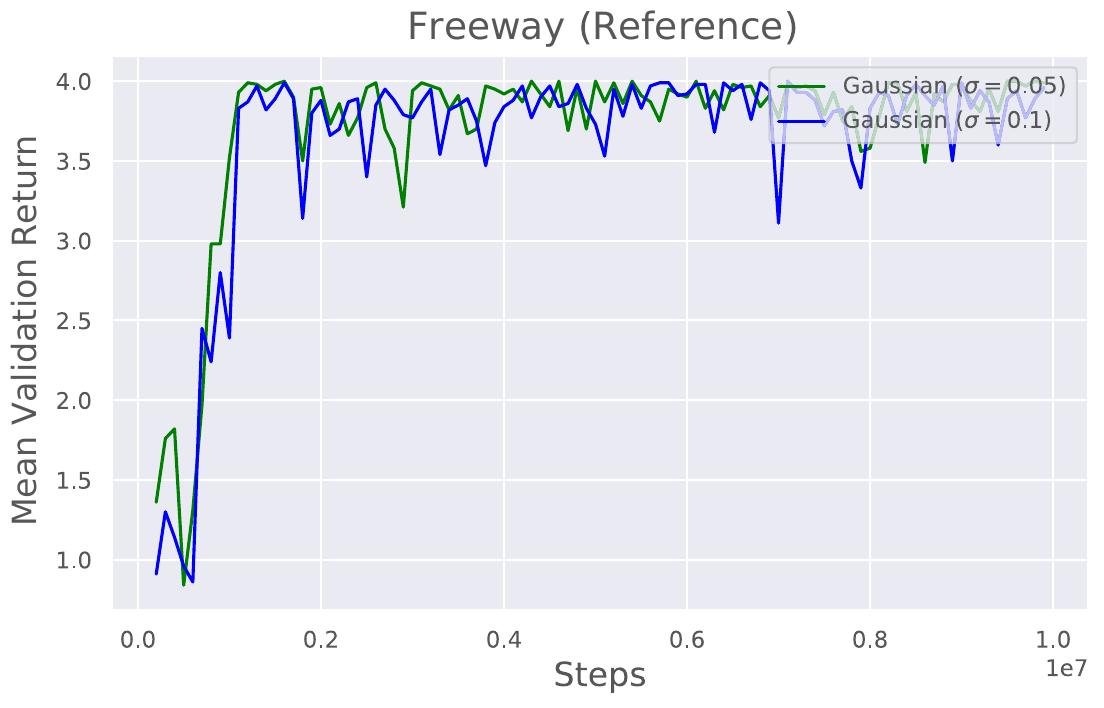}
     \end{subfigure}
     \begin{subfigure}
         \centering
         \includegraphics[width=.49\linewidth]{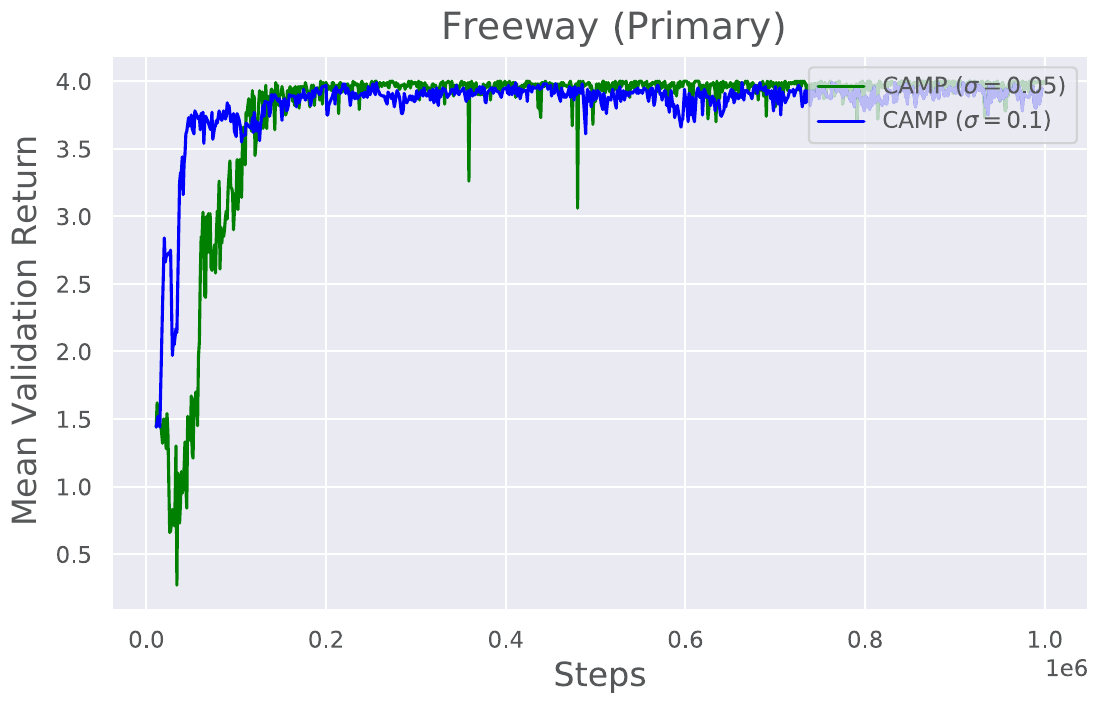}
     \end{subfigure}
\caption{Validation returns in different environments.}
\label{fig:eta_validation_rewards_convergence}
\end{figure}
%%%%%%%%%%%%%%%%%%%%%%%%%%%%%%%%%%%%%

%%%%%%%%%%%%%%%%%%%%%%%%%%%%%%%%%%%%%%%%%%%%%%%%%%%%%%%%%%%%%%%%%%%%%%%%%%%%%%%%%%%%%%%%%%%%%%%%%%%%%%%%%%%%%%%%%%%%%%%%%%
\subsection{Ablation Studies}\label{subsec:ablation}
In this section, we investigate the impact of $\lambda$ on the trade-off between the certified expected return and certified radius, the effect of \texttt{CAMP} on the policy's Q-gap, and the convergence of the training. 

\noindent \textbf{The effect of $\lambda$.~}
First, we are interested in investigating how the coefficient $\lambda$ impacts the training outcome.
Herein, we train DQN agents on Cartpole-1 and Cartpole-5 using $\lambda\in\{0.5, 1, 2, 4, 8, 16\}$ in the training.
The certification results based on the trained agents are illustrated in Figure~\ref{fig:cartpole_1_ablation} and Figure~\ref{fig:cartpole_5_ablation}.
It can be observed that, generally, the certified performance stays constant over varying $\lambda$ values.
However, the performance drops in a few cases.
Especially, small $\lambda$ for large noise scale or large $\lambda$ for small noise result in the decreases.

\noindent \textbf{Convergence of training .~}
We visualize the mean episode reward during the validation for \texttt{CAMP} and Gaussian, respectively.
The environment noise levels in our visualization are based on $\sigma\in\{0.0, 0.4, 1.0\}$ for Cartpole-1 and Cartpole-5, such that the trends of training in both noise-free and rather noisy environments can be observed.  
The validation rewards are averaged from playing $10$ runs of games after every $2000$ training steps, and the results are plotted in Figure~\ref{fig:eta_validation_rewards_convergence}.
According to the figure, \texttt{CAMP} reaches the maximal return at the same pace with or slightly slower than Gaussian when $\sigma=0$. 
However, when $\sigma$ increases, \texttt{CAMP} can converge to higher mean validation return values in earlier training stages.

On Atari games, the convergence of the reference and primary policies is visualized in separate sub-figures in Figure~\ref{fig:eta_validation_rewards_convergence}, based on Freeway and Pong1r.
The agent with the reference network undergoes validation after every $100k$ training steps, while the primary agent is validated every $1000$ training steps. 
Each validation reward is averaged from $100$ episodes.
The primary policies experience an initial drop in validation return due to the application of the new loss function. 
However, both policies quickly recover, with validation rewards stabilizing before 200k training steps and converging by 1 million steps.

\noindent \textbf{The gap between top-1 and runner-up Q-values.~}
Finally, we investigate the gap between the top-1 action score and the runner-up action score given state observation $s$ under different $\sigma$ settings.
According to Theorem~\ref{theorem:localradius}, larger Q-gaps lead to greater certified local radii and thus make it harder for the adversary at the current step to manipulate the action through observation perturbations.
Particularly, the minimum of the Q-gap is the most crucial statistic signifying the robustness of the agent in worst cases.
Therefore, we would like to examine whether \texttt{CAMP} can effectively widen this minimum of the Q-gap.
In our experiments, we select \texttt{CAMP} agents trained with $\sigma \in \{0.0, 0.2, 0.4, 0.6, 0.8, 1.0\}$ and $\lambda \in \{0.5, 1, 4, 16\}$ to compare with agents trained by Gaussian and NoisyNet.
Each trained agent plays the same $10000$ runs of games and records the minimum of its Q-gap values, as shown in Table~\ref{table:q_gap}.
It can be observed that \texttt{CAMP} effectively upscales the minimal Q-gap and thus makes the agent more robust to noise in the environments. 

%%%%%%%%%%%%%%%%%%%%%%%%%%%%%%%%%%%%%%%%%%%%%%%%%%%%%%%%%%%%%%%%%%%%%%%%%%%%%%%%%%%%%%%%%%%%%%%%%%%%%%%%%%%%%%%%%%%%%%%%%%
\subsection{Computational Cost and Scalability}\label{subsec:runtime}
We document the training costs of \texttt{CAMP} in Table~\ref{table:cost}, where all experiments were conducted using Nvidia H100 GPUs. 
We measured the GPU hours required for training \texttt{CAMP} and compared them to the baseline methods.
The time costs for Cartpole-1 and Cartpole-5 are from the cases where $\sigma=1.0$ and the costs for training in Atari games are based on $\sigma=0.1$.
It is important to note that training typically requires more time to converge under higher levels of observation noise, so the numbers reflect the longest training times for the various methods in each environment.
Note that the time duration includes the computational cost of validation steps.

Although \texttt{CAMP} increases the training duration of DRL agents, the additional time remains within acceptable limits.
First, data parallelism with multiple GPUs can effectively reduce the actual training time.
In particular, when training in environments with high-dimensional observations, \texttt{CAMP} distills its Q-network from a pretrained reference network with a batch size of $32$.
This process can be further accelerated by using larger batch sizes during distillation.
Furthermore, \texttt{CAMP} remains scalable to high-dimensional action spaces. 
Specifically, we use Cross-Entropy for the imitation loss, as it scales efficiently to large category sizes. 
Additionally, the robustness loss relies only on the top-2 Q values, ensuring mathematical scalability to high-dimensional action spaces.

%%%%%%%%%%%%%%%%%%%%%%%%%%%%%%%%%%%%%%%%%%%%%%%%%%%%%%%%%%%%%%%%%%%%%%%%%%%%
\begin{mdframed}[backgroundcolor=grey!10,rightline=true,leftline=true,topline=true,bottomline=true,roundcorner=1mm,everyline=false,nobreak=false]
\noindent \textbf{Main takeaways.~}
\texttt{CAMP} effectively enhances both the certified expected return and empirical robustness of DQN agents. 
Significant performance gains are observed in classic control and autonomous driving environments (e.g., Cartpole and Highway), while Atari agents, despite showing subtler improvements, are inherently more robust against empirical attacks. 
Further improvements in Atari games might be achieved by training the primary Q-network from reference network checkpoints instead of distilling from a fully trained reference Q-network.
Additionally, \texttt{CAMP} is resilient to hyper-parameter variations and scales effectively to large action spaces.
\end{mdframed}

%%%%%%%%%%%%%%%%%%%%%%%%%%%%%%%%%%%%%%%%%%%%%%%%%%%%%%%%%%%%%%%%%%%%%%%%%%%%%%%%%%%%%%%%%%%%%%%%%%%%%%%%%%%%%%%%%%%%%%%%%%%%%%%%%%%%%%%%%%%%
\section{Related Work}
%%%%%%%%%%%%%%%%%%%%%%%%%%%%%%%%%%%%%%%%%%%%%%%%%%%%%%%%%%%%%%%%%%%%%%%%%%%%%%%%%%%%%%%%%%%%%%%%%%%%%%%%%%
\noindent\textbf{Adversarial attack and robust DRL.~}
DRL has been demonstrated to be vulnerable to adversarial attacks. 
Early attacks adapted adversarial strategies from neural network classifiers to perturb the observations of DRL agents~\cite{behzadan2017vulnerability,huang2017adversarial,kos2018adversarial}.
Later, attacks were developed to degrade agent performance through interactive adversarial games~\cite{pinto2017robust,pattanaik2018robust,gleave2020adversarial}.
Other attacks have exploited vulnerabilities across different steps~\cite{sun2020stealthy} and used policy-independent perturbations~\cite{korkmaz2023adversarial}.
Beyond targeting a single DRL agent, adversaries have also sought to compromise multi-agent DRL systems~\cite{lin2020robustness}.
The robustness of DRL has garnered substantial attention in recent years. 
Building on early $H_{\infty}$ robust control theory for worst-case performance under deterministic dynamics~\cite{bacsar2008h}, methods have emerged that apply robust control to Markov Decision Processes (MDPs) by modeling uncertainty in transition matrices~\cite{nilim2003robustness,roy2017reinforcement,ho2018fast}.
Additionally, policy smoothing and observation smoothing have been advocated to improve the robustness of DRL~\cite{shen2020deep,sinha2022s4rl,yang2022rorl}.
More recently, some approaches have addressed adversarial robustness in offline DRL through action randomization~\cite{pinto2017robust,kamalaruban2020robust}, and several works have incorporated adversarial training as a defense mechanism~\cite{zhang2020robustwoa,zhang2020robust,liu2023robustness}.
Furthermore, there are defenses against adversarial policies in multi-agent DRL~\cite{guo2023patrol,liu2024rethinking}.
Importantly, significant efforts have been made towards providing theoretical robustness guarantees for DRL~\cite{zhou2021finite,yang2022toward,li2022policy}.
However, these approaches often rely on specific assumptions and are not easily scalable to high-dimensional DRL problems.

%%%%%%%%%%%%%%%%%%%%%%%%%%%%%%%%%%%%%%%%%%%%%%%%%%%%%%%%%%%%%%%%%%%%%%%%%%%%%%%%%%%%%%%%%%%%%%%%%%%%%%%%%%
\noindent\textbf{Randomized smoothing and certifiably robust DRL.~}
RS has emerged as a practical tool for providing certified robustness, building on previous works that relied on differential privacy and R\'enyi divergence~\cite{lecuyer2019certified,li2019certified,cohen2019certified}.
Numerous efforts have been made to certify $\ell_p$ robustness radii for classifiers, using approaches such as the Neyman-Pearson lemma and beyond~\cite{cohen2019certified,fischer2020certified,li2021tss, hao2022gsmooth,sukenik2022intriguing,cullen2022double,dvijotham2020framework,zhang2020black,salman2020denoised}.
RS has also been extended to certify the learnability of unlearnable data~\cite{wang2025provably}.
Additionally, the limitations of RS against $\ell_p$ ($p>2$) adversaries in high-dimensional input spaces have been identified and are being addressed~\cite{kumar20bcurse,yang2020randomized,li2022double,shu2024effects}.
In the context of reinforcement learning, three closely related works certify a lower bound on the returns of DRL agents through randomized policies~\cite{kumar2021policy,wu2022crop,mu2024reward}.
Several studies have also explored the certified robustness of multi-agent DRL~\cite{sun2022certifiably}.
A recent work applies denoising smoothing to improve the utility of provably robust DRL agents~\cite{sun2024break}.
However, the development of methods for obtaining policies specifically tailored for RS has been overlooked in the current literature.

%%%%%%%%%%%%%%%%%%%%%%%%%%%%%%%%%%%%%%%%%%%%%%%%%%%%%%%%%%%%%%%%%%%%%%%%%%%%%%%%%%%%%%%%%%%%%%%%%%%%%%%%%%%%%%%%%%%%%%%%%%%%%%%%%%%%%%%%%%%%
\section{Conclusion}
We propose \texttt{CAMP}, a method designed to train DRL policies with enhanced certified utility and robustness for policy-smoothing-based robustness certification. 
This approach is grounded in the analysis of certified lower bounds of expected returns obtained by smoothed policies. 
We reformulate the certification problem by decomposing it into two sub-problems which are maximizing expected return and maximizing certified radius. 
While maximizing the expected return is straightforward, maximizing the certified radius presents challenges due to the lack of a differentiable, closed-form expression for the radius. 
We address this challenge by converting the certified radius into a soft radius and further transforming it into a per-step radius, which characterizes the maximum perturbation that does not degrade the current optimal expected return.
Through experiments, we have verified the effectiveness of \texttt{CAMP} in improving certified expected returns at fixed certified radii in environments with both simple and high-dimensional observations. 
Additionally, \texttt{CAMP} agents demonstrate superior robustness against empirical adversarial perturbations in each game run.
Nevertheless, there are limitations to be addressed.
\texttt{CAMP} currently supports only discrete action spaces.
Additionally, it selects the top-1 and runner-up Q-values based on a presumed optimal policy, which may not be fully captured by the reference network.
Moreover, \texttt{CAMP} exhibits less pronounced improvements in environments with high-dimensional visual observations compared to control and autonomous driving scenarios.
To address these limitations, future work will extend \texttt{CAMP} to continuous action spaces, refine the training process, and reduce overhead to improve accessibility.

% %-------------------------------------------------------------------------------
% \section*{Acknowledgments}
% %-------------------------------------------------------------------------------
%%%%%%%%%%%%%%%%%%%%%%%%%%%%%%%%%%%%%%%%%%%%%%%%%%%%%%%%%%%%%%%%%%%%%%%%%%%%%%%%%%%%%%%%%%%%%%%%%%%%%%%%%%%%%%%%%%%%%%%%%%%%%%%%%%%%%%%%%%%%
\section*{Acknowledgments}
This work has been supported by the Cyber Security Research Centre Limited whose activities are partially funded by the Australian Government's Cooperative Research Centres Programme.

%\newpage
%%%%%%%%%%%%%%%%%%%%%%%%%%%%%%%%%%%%%%%%%%%%%%%%%%%%%%%%%%%%%%%%%%%%%%%%%%%%%%%%%%%%%%%%%%%%%%%%%%%%%%%%%%%%%%%%%%%%%%%%%%%%%%%%%%%%%%%%%%%
\section*{Ethics Considerations}
We attest that the research topic, research process, and possible future publication of the research content are ethical.
This research aims to facilitate the provable robustness of deep reinforcement learning agents by proposing a novel training method.
To the best of our knowledge, the proposed method neither introduces extra risks nor is exploitable for potential adversaries.
Moreover, the entire research process involves no experiments with live systems or human/animal participants.

\section*{Open Science}
This paper adheres to the principles of open science by ensuring that all research materials, data, and code used in the study are publicly accessible and fully documented. 
The experimental environments are available in open libraries, promoting transparency and reproducibility of the results. 
The methodology is described in detail, with the complete code and scripts permanently hosted on \href{https://zenodo.org/records/14729675}{https://zenodo.org/records/14729675} and Github to support replication and validation by other researchers.
Furthermore, the study's findings are shared under an open-access license, ensuring that the scientific community and the public can freely access, use, and build upon the work.
This commitment to open science promotes collaboration, enhances the credibility of the research, and contributes to the collective advancement of knowledge in the field.

%-------------------------------------------------------------------------------
\bibliographystyle{plain}
\bibliography{advref_abbrev}

\begin{thebibliography}{10}

\bibitem{andrychowicz2020learning}
OpenAI:~Marcin Andrychowicz, Bowen Baker, Maciek Chociej, Rafal Jozefowicz, Bob McGrew, Jakub Pachocki, Arthur Petron, Matthias Plappert, Glenn Powell, Alex Ray, et~al.
\newblock Learning dexterous in-hand manipulation.
\newblock {\em The International Journal of Robotics Research}, 39(1):3--20, 2020.

\bibitem{bander1999markov}
James~L Bander and Chelsea~C White~III.
\newblock Markov decision processes with noise-corrupted and delayed state observations.
\newblock {\em Journal of the Operational Research Society}, 50(6):660--668, 1999.

\bibitem{bacsar2008h}
Tamer Ba{\c{s}}ar and Pierre Bernhard.
\newblock {\em H-infinity optimal control and related minimax design problems: a dynamic game approach}.
\newblock Springer Science \& Business Media, 2008.

\bibitem{behzadan2017vulnerability}
Vahid Behzadan and Arslan Munir.
\newblock Vulnerability of deep reinforcement learning to policy induction attacks.
\newblock In {\em MLDM}, 2017.

\bibitem{cohen2019certified}
Jeremy Cohen, Elan Rosenfeld, and Zico Kolter.
\newblock Certified adversarial robustness via randomized smoothing.
\newblock In {\em ICML}, 2019.

\bibitem{croce2020reliable}
Francesco Croce and Matthias Hein.
\newblock Reliable evaluation of adversarial robustness with an ensemble of diverse parameter-free attacks.
\newblock In {\em ICML}, 2020.

\bibitem{cullen2022double}
Andrew Cullen, Paul Montague, Shijie Liu, Sarah Erfani, and Benjamin Rubinstein.
\newblock Double bubble, toil and trouble: Enhancing certified robustness through transitivity.
\newblock In {\em NeurIPS}, 2022.

\bibitem{dvijotham2020framework}
Krishnamurthy~Dj Dvijotham, Jamie Hayes, Borja Balle, Zico Kolter, Chongli Qin, Andras Gyorgy, Kai Xiao, Sven Gowal, and Pushmeet Kohli.
\newblock A framework for robustness certification of smoothed classifiers using f-divergences.
\newblock In {\em ICLR}, 2020.

\bibitem{dvoretzky1956asymptotic}
Aryeh Dvoretzky, Jack Kiefer, and Jacob Wolfowitz.
\newblock Asymptotic minimax character of the sample distribution function and of the classical multinomial estimator.
\newblock {\em The Annals of Mathematical Statistics}, pages 642--669, 1956.

\bibitem{fernandez2023deep}
Raul Fernandez-Fernandez, Juan~G Victores, and Carlos Balaguer.
\newblock Deep robot sketching: an application of deep q-learning networks for human-like sketching.
\newblock {\em Cognitive Systems Research}, 81:57--63, 2023.

\bibitem{fischer2020certified}
Marc Fischer, Maximilian Baader, and Martin Vechev.
\newblock Certified defense to image transformations via randomized smoothing.
\newblock In {\em NeurIPS}, 2020.

\bibitem{gleave2020adversarial}
Adam Gleave, Michael Dennis, Cody Wild, Neel Kant, Sergey Levine, and Stuart Russell.
\newblock Adversarial policies: Attacking deep reinforcement learning.
\newblock In {\em ICLR}, 2020.

\bibitem{guo2023patrol}
Wenbo Guo, Xian Wu, Lun Wang, Xinyu Xing, and Dawn Song.
\newblock $\{$PATROL$\}$: Provable defense against adversarial policy in two-player games.
\newblock In {\em USENIX Security}, 2023.

\bibitem{hao2022gsmooth}
Zhongkai Hao, Chengyang Ying, Yinpeng Dong, Hang Su, Jian Song, and Jun Zhu.
\newblock Gsmooth: Certified robustness against semantic transformations via generalized randomized smoothing.
\newblock In {\em ICML}, 2022.

\bibitem{ho2018fast}
Chin~Pang Ho, Marek Petrik, and Wolfram Wiesemann.
\newblock Fast bellman updates for robust mdps.
\newblock In {\em ICML}, 2018.

\bibitem{huang2017adversarial}
Sandy Huang, Nicolas Papernot, Ian Goodfellow, Yan Duan, and Pieter Abbeel.
\newblock Adversarial attacks on neural network policies.
\newblock In {\em ICLR}, 2017.

\bibitem{huang2022cleanrl}
Shengyi Huang, Rousslan Fernand~Julien Dossa, Chang Ye, Jeff Braga, Dipam Chakraborty, Kinal Mehta, and Jo{\~A}{\c{G}}o~GM Ara{\~A}{\v{s}}jo.
\newblock Cleanrl: High-quality single-file implementations of deep reinforcement learning algorithms.
\newblock {\em Journal of Machine Learning Research}, 23(274):1--18, 2022.

\bibitem{jeong2021smoothmix}
Jongheon Jeong, Sejun Park, Minkyu Kim, Heung-Chang Lee, Do-Guk Kim, and Jinwoo Shin.
\newblock Smoothmix: Training confidence-calibrated smoothed classifiers for certified robustness.
\newblock In {\em NeurIPS}, 2021.

\bibitem{jeong2020consistency}
Jongheon Jeong and Jinwoo Shin.
\newblock Consistency regularization for certified robustness of smoothed classifiers.
\newblock In {\em NeurIPS}, 2020.

\bibitem{kamalaruban2020robust}
Parameswaran Kamalaruban, Yu-Ting Huang, Ya-Ping Hsieh, Paul Rolland, Cheng Shi, and Volkan Cevher.
\newblock Robust reinforcement learning via adversarial training with langevin dynamics.
\newblock In {\em NeurIPS}, 2020.

\bibitem{korkmaz2023adversarial}
Ezgi Korkmaz.
\newblock Adversarial robust deep reinforcement learning requires redefining robustness.
\newblock In {\em AAAI}.

\bibitem{kos2018adversarial}
Jernej Kos, Ian Fischer, and Dawn Song.
\newblock Adversarial examples for generative models.
\newblock In {\em IEEE Security and Privacy Workshops}, 2018.

\bibitem{kumar2021policy}
Aounon Kumar, Alexander Levine, and Soheil Feizi.
\newblock Policy smoothing for provably robust reinforcement learning.
\newblock In {\em ICLR}, 2022.

\bibitem{kumar2020certifying}
Aounon Kumar, Alexander Levine, Soheil Feizi, and Tom Goldstein.
\newblock Certifying confidence via randomized smoothing.
\newblock In {\em NeurIPS}, 2020.

\bibitem{kumar20bcurse}
Aounon Kumar, Alexander Levine, Tom Goldstein, and Soheil Feizi.
\newblock Curse of dimensionality on randomized smoothing for certifiable robustness.
\newblock In {\em ICML}, 2020.

\bibitem{lecuyer2019certified}
Mathias Lecuyer, Vaggelis Atlidakis, Roxana Geambasu, Daniel Hsu, and Suman Jana.
\newblock Certified robustness to adversarial examples with differential privacy.
\newblock In {\em IEEE SP}, 2019.

\bibitem{levine2016end}
Sergey Levine, Chelsea Finn, Trevor Darrell, and Pieter Abbeel.
\newblock End-to-end training of deep visuomotor policies.
\newblock {\em Journal of Machine Learning Research}, 17(39):1--40, 2016.

\bibitem{li2019certified}
Bai Li, Changyou Chen, Wenlin Wang, and Lawrence Carin.
\newblock Certified adversarial robustness with additive noise.
\newblock In {\em NeurIPS}, 2019.

\bibitem{li2022policy}
Jialian Li, Tongzheng Ren, Dong Yan, Hang Su, and Jun Zhu.
\newblock Policy learning for robust markov decision process with a mismatched generative model.
\newblock In {\em AAAI}, 2022.

\bibitem{li2021tss}
Linyi Li, Maurice Weber, Xiaojun Xu, Luka Rimanic, Bhavya Kailkhura, Tao Xie, Ce~Zhang, and Bo~Li.
\newblock Tss: Transformation-specific smoothing for robustness certification.
\newblock In {\em CCS}, 2021.

\bibitem{li2022double}
Linyi Li, Jiawei Zhang, Tao Xie, and Bo~Li.
\newblock Double sampling randomized smoothing.
\newblock In {\em ICML}, 2022.

\bibitem{li2017end}
Xiujun Li, Yun-Nung Chen, Lihong Li, Jianfeng Gao, and Asli Celikyilmaz.
\newblock End-to-end task-completion neural dialogue systems.
\newblock In {\em ICNLP}, 2017.

\bibitem{lin2020robustness}
Jieyu Lin, Kristina Dzeparoska, Sai~Qian Zhang, Alberto Leon-Garcia, and Nicolas Papernot.
\newblock On the robustness of cooperative multi-agent reinforcement learning.
\newblock In {\em IEEE Security and Privacy Workshops}, 2020.

\bibitem{liu2024rethinking}
Xiangyu Liu, Souradip Chakraborty, Yanchao Sun, and Furong Huang.
\newblock Rethinking adversarial policies: A generalized attack formulation and provable defense in rl.
\newblock In {\em ICLR}, 2024.

\bibitem{liu2023robustness}
Zuxin Liu, Zijian Guo, Zhepeng Cen, Huan Zhang, Jie Tan, Bo~Li, and Ding Zhao.
\newblock On the robustness of safe reinforcement learning under observational perturbations.
\newblock In {\em ICLR}, 2023.

\bibitem{lutjens2020certified}
Bj{\"o}rn L{\"u}tjens, Michael Everett, and Jonathan~P How.
\newblock Certified adversarial robustness for deep reinforcement learning.
\newblock In {\em CoRL}, 2020.

\bibitem{madry2017towards}
Aleksander Madry, Aleksandar Makelov, Ludwig Schmidt, Dimitris Tsipras, and Adrian Vladu.
\newblock Towards deep learning models resistant to adversarial attacks.
\newblock In {\em ICLR}, 2018.

\bibitem{mnih2015human}
Volodymyr Mnih, Koray Kavukcuoglu, David Silver, Andrei~A Rusu, Joel Veness, Marc~G Bellemare, Alex Graves, Martin Riedmiller, Andreas~K Fidjeland, Georg Ostrovski, et~al.
\newblock Human-level control through deep reinforcement learning.
\newblock {\em Nature}, 518(7540):529--533, 2015.

\bibitem{mu2024reward}
Ronghui Mu, Leandro~Soriano Marcolino, Yanghao Zhang, Tianle Zhang, Xiaowei Huang, and Wenjie Ruan.
\newblock Reward certification for policy smoothed reinforcement learning.
\newblock In {\em AAAI}, 2024.

\bibitem{nilim2003robustness}
Arnab Nilim and Laurent Ghaoui.
\newblock Robustness in markov decision problems with uncertain transition matrices.
\newblock In {\em NeurIPS}, 2003.

\bibitem{pattanaik2018robust}
Anay Pattanaik, Zhenyi Tang, Shuijing Liu, Gautham Bommannan, and Girish Chowdhary.
\newblock Robust deep reinforcement learning with adversarial attacks.
\newblock In {\em AAMAS}, 2018.

\bibitem{peer2021ensemble}
Oren Peer, Chen Tessler, Nadav Merlis, and Ron Meir.
\newblock Ensemble bootstrapping for q-learning.
\newblock In {\em ICML}, 2021.

\bibitem{pinto2017robust}
Lerrel Pinto, James Davidson, Rahul Sukthankar, and Abhinav Gupta.
\newblock Robust adversarial reinforcement learning.
\newblock In {\em ICML}.

\bibitem{plappert2018parameter}
Matthias Plappert, Rein Houthooft, Prafulla Dhariwal, Szymon Sidor, Richard~Y Chen, Xi~Chen, Tamim Asfour, Pieter Abbeel, and Marcin Andrychowicz.
\newblock Parameter space noise for exploration.
\newblock In {\em ICLR}, 2018.

\bibitem{roy2017reinforcement}
Aurko Roy, Huan Xu, and Sebastian Pokutta.
\newblock Reinforcement learning under model mismatch.
\newblock In {\em NeurIPS}, 2017.

\bibitem{salman2019provably}
Hadi Salman, Jerry Li, Ilya Razenshteyn, Pengchuan Zhang, Huan Zhang, Sebastien Bubeck, and Greg Yang.
\newblock Provably robust deep learning via adversarially trained smoothed classifiers.
\newblock In {\em NeurIPS}, 2019.

\bibitem{salman2020denoised}
Hadi Salman, Mingjie Sun, Greg Yang, Ashish Kapoor, and J~Zico Kolter.
\newblock Denoised smoothing: A provable defense for pretrained classifiers.
\newblock In {\em NeurIPS}, 2020.

\bibitem{shen2020deep}
Qianli Shen, Yan Li, Haoming Jiang, Zhaoran Wang, and Tuo Zhao.
\newblock Deep reinforcement learning with robust and smooth policy.
\newblock In {\em ICML}, 2020.

\bibitem{shu2024effects}
Youwei Shu, Xi~Xiao, Derui Wang, Yuxin Cao, Siji Chen, Jason Xue, Linyi Li, and Bo~Li.
\newblock Effects of exponential {G}aussian distribution on ({D}ouble sampling) randomized smoothing.
\newblock In {\em ICML}, 2024.

\bibitem{silver2017mastering}
David Silver, Julian Schrittwieser, Karen Simonyan, Ioannis Antonoglou, Aja Huang, Arthur Guez, Thomas Hubert, Lucas Baker, Matthew Lai, Adrian Bolton, et~al.
\newblock Mastering the game of go without human knowledge.
\newblock {\em Nature}, 550(7676):354--359, 2017.

\bibitem{sinha2022s4rl}
Samarth Sinha, Ajay Mandlekar, and Animesh Garg.
\newblock S4rl: Surprisingly simple self-supervision for offline reinforcement learning in robotics.
\newblock In {\em CoRL}, 2022.

\bibitem{sukenik2022intriguing}
Peter S{\'u}ken{\i}́k, Aleksei Kuvshinov, and Stephan G{\"u}nnemann.
\newblock Intriguing properties of input-dependent randomized smoothing.
\newblock In {\em ICML}, 2022.

\bibitem{sun2024break}
Chung-En Sun, Sicun Gao, and Tsui-Wei Weng.
\newblock Breaking the barrier: Enhanced utility and robustness in smoothed {DRL} agents.
\newblock In {\em ICML}, 2024.

\bibitem{sun2020stealthy}
Jianwen Sun, Tianwei Zhang, Xiaofei Xie, Lei Ma, Yan Zheng, Kangjie Chen, and Yang Liu.
\newblock Stealthy and efficient adversarial attacks against deep reinforcement learning.
\newblock In {\em AAAI}, 2020.

\bibitem{sun2022certifiably}
Yanchao Sun, Ruijie Zheng, Parisa Hassanzadeh, Yongyuan Liang, Soheil Feizi, Sumitra Ganesh, and Furong Huang.
\newblock Certifiably robust policy learning against adversarial multi-agent communication.
\newblock In {\em ICLR}, 2023.

\bibitem{van2016deep}
Hado Van~Hasselt, Arthur Guez, and David Silver.
\newblock Deep reinforcement learning with double q-learning.
\newblock In {\em AAAI}, 2016.

\bibitem{vinitsky2020robust}
Eugene Vinitsky, Yuqing Du, Kanaad Parvate, Kathy Jang, Pieter Abbeel, and Alexandre Bayen.
\newblock Robust reinforcement learning using adversarial populations.
\newblock In {\em arXiv preprint arXiv:2008.01825}, 2020.

\bibitem{wang2025provably}
Derui Wang, Minhui Xue, Bo~Li, Seyit Camtepe, and Liming Zhu.
\newblock Provably unlearnable data examples.
\newblock In {\em NDSS}, 2025.

\bibitem{watkins1992q}
Christopher~JCH Watkins and Peter Dayan.
\newblock Q-learning.
\newblock {\em Machine learning}, 8:279--292, 1992.

\bibitem{wu2022crop}
Fan Wu, Linyi Li, Yevgeniy Vorobeychik, Ding Zhao, and Bo~Li.
\newblock Crop: Certifying robust policies for reinforcement learning through functional smoothing.
\newblock In {\em ICLR}, 2022.

\bibitem{yang2020randomized}
Greg Yang, Tony Duan, J~Edward Hu, Hadi Salman, Ilya Razenshteyn, and Jerry Li.
\newblock Randomized smoothing of all shapes and sizes.
\newblock In {\em ICML}, 2020.

\bibitem{yang2020deep}
Hongyang Yang, Xiao-Yang Liu, Shan Zhong, and Anwar Walid.
\newblock Deep reinforcement learning for automated stock trading: An ensemble strategy.
\newblock In {\em ICAIF}, 2020.

\bibitem{yang2022rorl}
Rui Yang, Chenjia Bai, Xiaoteng Ma, Zhaoran Wang, Chongjie Zhang, and Lei Han.
\newblock Rorl: Robust offline reinforcement learning via conservative smoothing.
\newblock In {\em NeurIPS}, 2022.

\bibitem{yang2022toward}
Wenhao Yang, Liangyu Zhang, and Zhihua Zhang.
\newblock Toward theoretical understandings of robust markov decision processes: Sample complexity and asymptotics.
\newblock {\em The Annals of Statistics}, 50(6):3223--3248, 2022.

\bibitem{zhai2020macer}
Runtian Zhai, Chen Dan, Di~He, Huan Zhang, Boqing Gong, Pradeep Ravikumar, Cho-Jui Hsieh, and Liwei Wang.
\newblock Macer: Attack-free and scalable robust training via maximizing certified radius.
\newblock In {\em ICLR}, 2020.

\bibitem{zhang2020black}
Dinghuai Zhang, Mao Ye, Chengyue Gong, Zhanxing Zhu, and Qiang Liu.
\newblock Black-box certification with randomized smoothing: A functional optimization based framework.
\newblock In {\em NeurIPS}, 2020.

\bibitem{zhang2020robustwoa}
Huan Zhang, Hongge Chen, Duane~S Boning, and Cho-Jui Hsieh.
\newblock Robust reinforcement learning on state observations with learned optimal adversary.
\newblock In {\em ICLR}, 2020.

\bibitem{zhang2020robust}
Huan Zhang, Hongge Chen, Chaowei Xiao, Bo~Li, Mingyan Liu, Duane Boning, and Cho-Jui Hsieh.
\newblock Robust deep reinforcement learning against adversarial perturbations on state observations.
\newblock In {\em NeurIPS}, 2020.

\bibitem{zhou2021finite}
Zhengqing Zhou, Zhengyuan Zhou, Qinxun Bai, Linhai Qiu, Jose Blanchet, and Peter Glynn.
\newblock Finite-sample regret bound for distributionally robust offline tabular reinforcement learning.
\newblock In {\em AISTATS}, 2021.

\end{thebibliography}

%%v%%%%%%%%%%%%%%%%%%%%%%%%%%%%%%%%%%%%%%%%%%%%%%%%%%%%%%%%%%%%%%%%%%%%%%%%%%%%%
%%%%%%%%%%%%%%%%%%%%%%%%%%%%%%%%%%%%%%%%%%%%%%%%%%%%%%%%%%%%%%%%%%%%%%%%%%%%%%%
% APPENDIX
%%%%%%%%%%%%%%%%%%%%%%%%%%%%%%%%%%%%%%%%%%%%%%%%%%%%%%%%%%%%%%%%%%%%%%%%%%%%%%%
%%%%%%%%%%%%%%%%%%%%%%%%%%%%%%%%%%%%%%%%%%%%%%%%%%%%%%%%%%%%%%%%%%%%%%%%%%%%%%%

\appendix

%%%%%%%%%%%%%%%%%%%%%%%%%%%%%%%%%%%%%
\begin{figure*}[t]
     \centering
     \begin{subfigure}
         \centering
         \includegraphics[width=.32\linewidth]{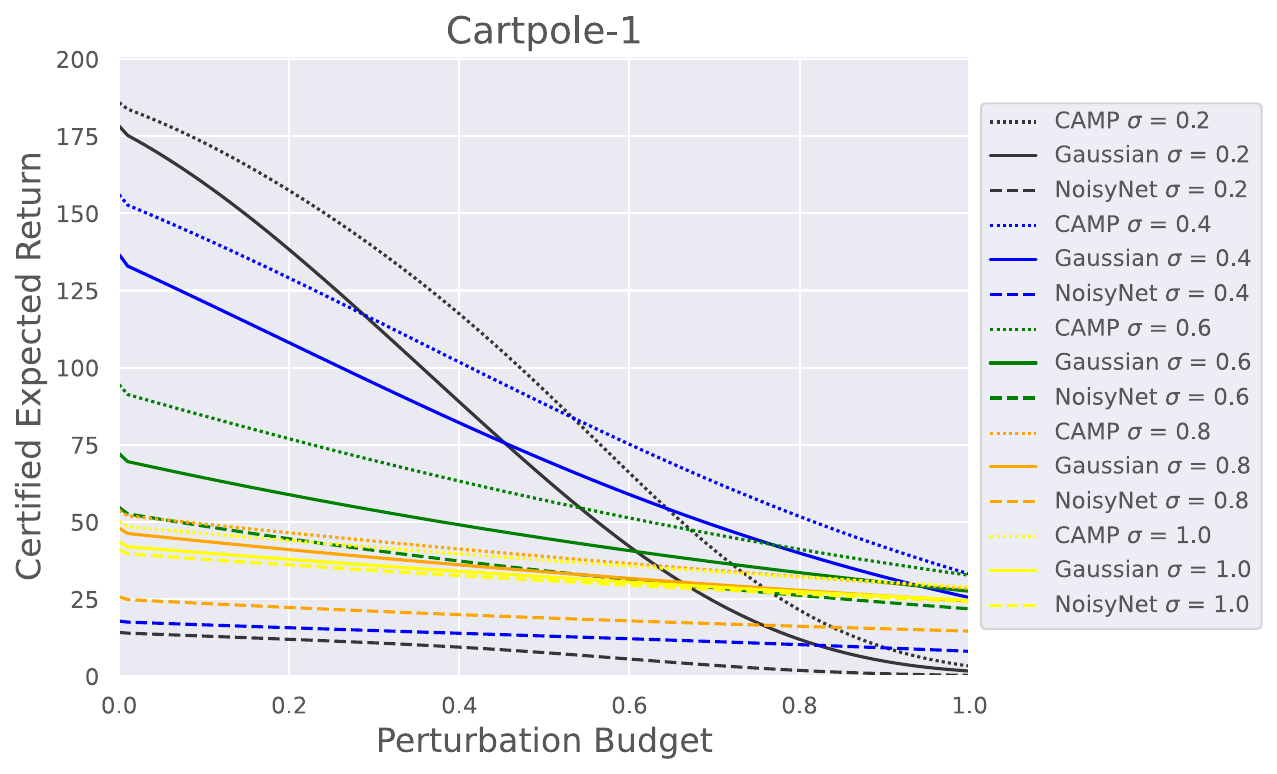}    
     \end{subfigure}
     \begin{subfigure}
         \centering
         \includegraphics[width=.32\linewidth]{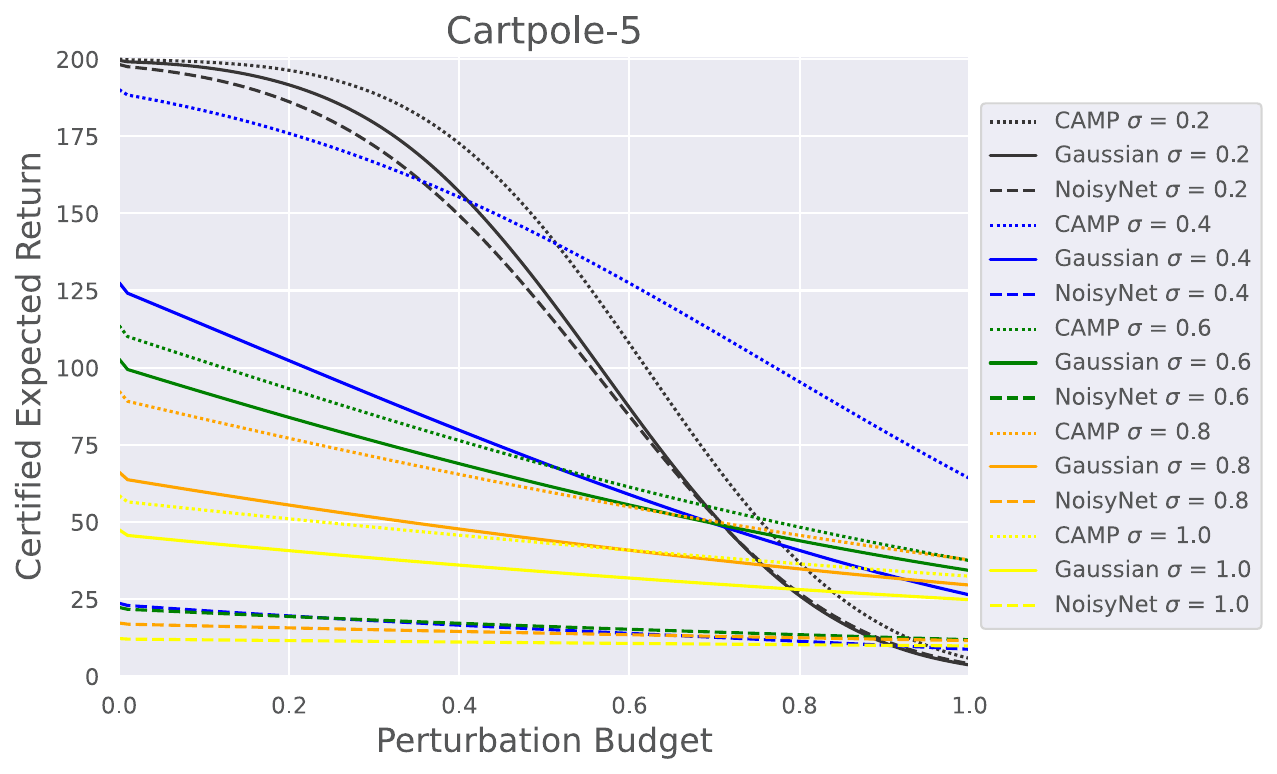}
     \end{subfigure}
     \begin{subfigure}
         \centering
         \includegraphics[width=.32\linewidth]{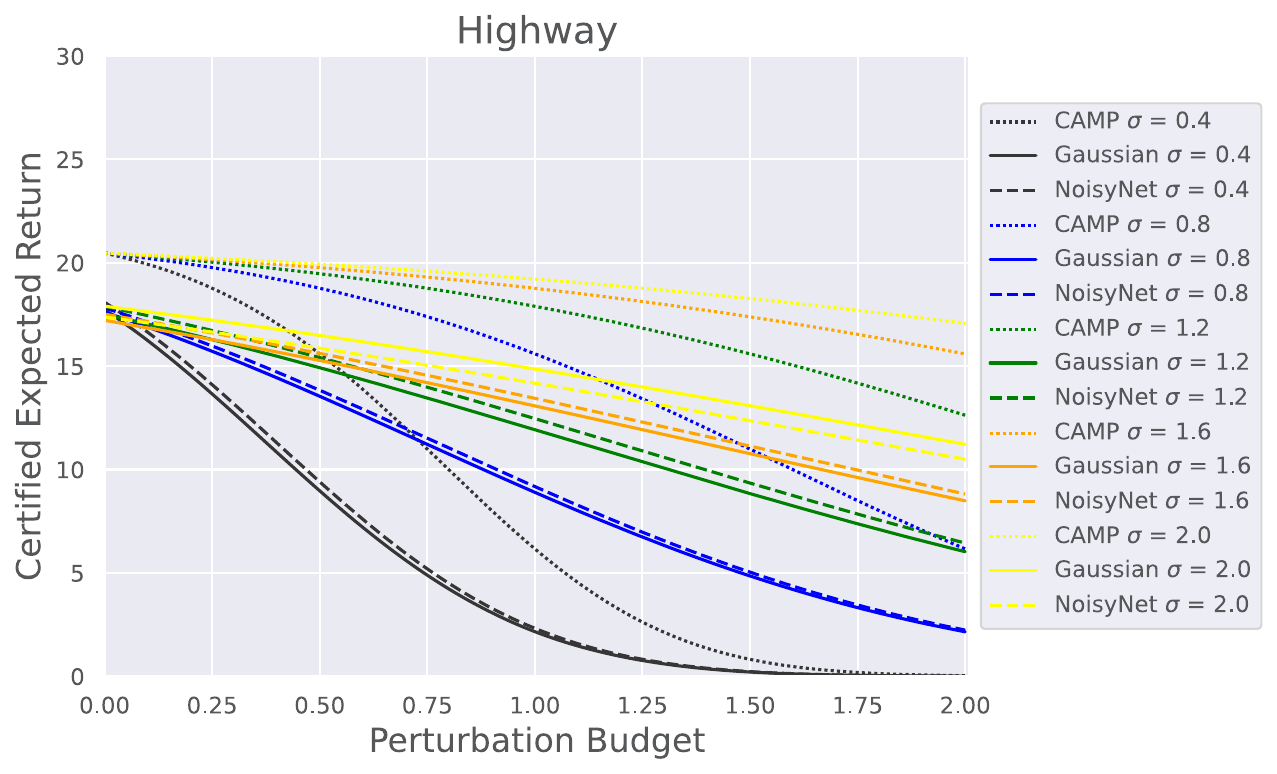}
     \end{subfigure}
\caption{Full certification results on CartPole-1, Cartpole-5, and Highway.}
\label{fig:full_cert_comparison}
\end{figure*}
%%%%%%%%%%%%%%%%%%%%%%%%%%%%%%%%%%%%%

%%%%%%%%%%%%%%%%%%%%%%%%%%%%%%%%%%%%%%%%%%%%%%%%%%%%%%%%%%%%%%%%%%%%%%%%%%%%%%%%%%
\section{Proofs}\label{append:proof}
\hardradius*
\begin{proof}
By the lower bound on $\E[F_{\pi}(\rvz)]$ and algebra,
\footnotesize
\begin{equation}
    \begin{aligned}
        \E[F_{\pi}(\rvz')] \geq & \rvr_1 \cdot \Phi \left(\Phi^{-1} \left( \underline{P_{\pi}^{\rvz}}(\rvr_1) \right)-\frac{\tau}{\sigma} \right) \\
        + & \sum_{i=2}^{m}(\rvr_i - \rvr_{i-1}) \cdot \Phi \left(\Phi^{-1} \left( \underline{P_{\pi}^{\rvz}}(\rvr_i) \right)-\frac{\tau}{\sigma} \right), \\
        \geq & \rvr_1 \cdot \Phi \left(\Phi^{-1} \left( \underline{P_{\pi}^{\rvz}}(\rvr_1) \right)-\frac{\tau}{\sigma} \right) \\
        \forall\, & \|\Delta\|_2 \leq \tau.
    \end{aligned}
\end{equation}
\normalsize
Therefore to make $\E[F_{\pi}(\rvz')] \geq \xi$, it is sufficient if
\footnotesize
\begin{equation}
    \begin{aligned}
        & \rvr_1 \cdot \Phi \left(\Phi^{-1} \left( \underline{P_{\pi}^{\rvz}}(\rvr_1) \right)-\frac{\tau}{\sigma} \right) \geq \xi, \\
    \iff& \tau \leq \sigma \left[ \Phi^{-1} \left( \underline{P_{\pi}^{\rvz}}(\rvr_1) \right) -  \Phi^{-1} \left( \xi/\rvr_1 \right) \right].
    \end{aligned}
\end{equation}
\normalsize
\end{proof}

\cdftoexpectation*
\begin{proof}
Since $\Psi_{X}(C)$ and $\Psi_{Y}(C)$ are CDFs, by definition:
\footnotesize
\begin{equation}
\begin{aligned}
     \Psi_{X}(C) & = 1 - \Pr[X \geq C] \\
                 & = 1 - \int_{C}^{+\infty} f_{X}(t) \mathrm{d}t.   
\end{aligned}
\end{equation}
\normalsize
Note that given $X \geq 0$ and $ Y \geq 0 $,
\footnotesize
\begin{equation}
    \begin{aligned}
        \int_{0}^{+\infty} [1 - \Psi_{X}(C)] \mathrm{d} C & = \int_{0}^{+\infty} \Pr[X \geq C] \mathrm{d}C \\
        & = \int_{0}^{+\infty} \int_{x}^{+\infty} f_{X}(t) \mathrm{d}t \mathrm{d}x \\
        & = \int_{0}^{+\infty} \int_{0}^{t} f_{X}(t) \mathrm{d}x \mathrm{d}t  \\
        & = \int_{0}^{+\infty} t f_{X}(t) \mathrm{d}t.
    \end{aligned}
\end{equation}
\normalsize
Let $t = C$ and thus $\mathrm{d}t = \mathrm{d}C$, there is
\footnotesize
\begin{equation}
    \begin{aligned}
        \int_{0}^{+\infty} (1 - \Psi_{X}(C)) \mathrm{d} C & = \int_{0}^{+\infty} t f_{X}(t) \mathrm{d}t \\
        & = \int_{0}^{+\infty} C f_{X}(C) \mathrm{d}C\\
        & = \E [X]. 
    \end{aligned}
\end{equation}
\normalsize
In the same spirit, we can obtain
\footnotesize
\begin{equation}
    \E[Y] = \int_{0}^{+\infty}[1 - \Psi_{Y}(C)] \mathrm{d} C.
\end{equation}
\normalsize
Therefore we have
\footnotesize
\begin{equation}
    \begin{aligned}
         & \E[X] \leq \E[Y] \\
    \iff & \int_{0}^{+\infty} [1 - \Psi_{X}(C)] \mathrm{d} C \leq \int_{0}^{+\infty}[1 - \Psi_{Y}(C)] \mathrm{d} C\\
    \iff & \int_{0}^{+\infty} [\Psi_{Y}(C) - \Psi_{X}(C)] \mathrm{d} C \geq 0.
    \end{aligned}
\end{equation}
\normalsize
\end{proof}

\lipschitz*
\begin{proof}
    See proofs of Lemma 2 in Fan \textit{et al.}~\cite{wu2022crop}.
\end{proof}

\softradius*
Refer to the proof in the Section~\ref{subsec:radius_max}.

\localradius*
\begin{proof}
According to the Lemma 1 of Salman et al.~\cite{salman2019provably}, $\E_{\epsilon_t}Q_{\pi}(s_t+\epsilon_t, a_t)$ is $\sqrt{2/\pi}\frac{u_i - l_i}{\sigma}$-Lipschitz and $\Phi^{-1}\left( \frac{\E_{\epsilon_t}Q_{\pi}(s_t+\epsilon_t, a_t) -l_i }{u_i - l_i} \right)$ is 1-Lipschitz.
Since $\bar{Q}_{\pi^*}(s_t, a_t) = \max_{\pi}\E_{\epsilon_t}Q_{\pi}(s_t+\epsilon_t, a_t)$, without loss of generality, $\bar{Q}_{\pi^*}(s_t, a_t)$ has the same Lipschitz continuity.
Let $a^{(1)}_i = \argmax_{a_i} \bar{Q}_{\pi^*}(s_i, a_i)$ and $a^{(2)}_i = \argmax_{a_i:a_i\neq a^{(1)}_i} \bar{Q}_{\pi}(s_i, a_i)$.
Based on Lemma~\ref{lemma:lipschitz}, there are
\footnotesize
\begin{equation}\label{eq:perturb_gap1}
    \begin{aligned}
        \Phi^{-1}\left( \frac{\bar{Q}_{\pi^*}(s_i, a^{(1)}_i) -l_i }{u_i - l_i} \right) - 
        \Phi^{-1}\left( \frac{\bar{Q}_{\pi^*}(s_i+\delta_i, a^{(1)}_i) -l_i }{u_i - l_i} \right)
        \leq \|\delta_i\|_2,
    \end{aligned}
\end{equation}
\normalsize
and for all $a'_i: a'_i\neq a^{(1)}_i$,
\footnotesize
\begin{equation}\label{eq:perturb_gap2}
    \begin{aligned}
        \Phi^{-1}\left( \frac{\bar{Q}_{\pi^*}(s_i+\delta_i, a^{'}_i) -l_i }{u_i - l_i} \right) -
        \Phi^{-1}\left( \frac{\bar{Q}_{\pi^*}(s_i, a^{'}_i) -l_i }{u_i - l_i} \right)
        \leq \|\delta_i\|_2.
    \end{aligned}
\end{equation}
\normalsize
If the perturbation $\delta_i$ satisfies
\footnotesize
\begin{equation}\label{cond:within_radius}
    \begin{aligned}
        \|\delta_i\|_2 \leq \frac{\sigma}{2} \left[  \Phi^{-1}\left( \frac{\bar{Q}_{\pi^*}(s_i, a^{(1)}_i) - l_i }{u_i - l_i} \right) \right.
        - \left. \Phi^{-1}\left( \frac{\bar{Q}_{\pi^*}(s_i, a^{'}_i)  - l_i }{u_i - l_i} \right)  \right],
    \end{aligned}
\end{equation}
\normalsize
we have:
\footnotesize
\begin{equation}
    \begin{aligned}
        &\Phi^{-1}\left( \frac{\bar{Q}_{\pi^*}(s_i+\delta_i, a^{(1)}_i) -l_i }{u_i - l_i} \right) \\
        \overset{(a)}{\geq} & \Phi^{-1}\left( \frac{\bar{Q}_{\pi^*}(s_i, a^{(1)}_i) -l_i }{u_i - l_i} \right) \\
        & - \frac{1}{2}\left[  \Phi^{-1}\left( \frac{\bar{Q}_{\pi^*}(s_i, a^{(1)}_i) -l_i }{u_i - l_i} \right) \right.\\
        & - \left. \Phi^{-1}\left( \frac{\bar{Q}_{\pi^*}(s_i, a^{'}_i) -l_i }{u_i - l_i} \right)  \right]
    \end{aligned}
\end{equation}
\normalsize
and
\footnotesize
\begin{equation}
    \begin{aligned}
        &\Phi^{-1}\left( \frac{\bar{Q}_{\pi^*}(s_i+\delta_i, a^{'}_i) -l_i }{u_i - l_i} \right) \\
        \overset{(b)}{\leq} & \Phi^{-1}\left( \frac{\bar{Q}_{\pi^*}(s_i, a^{'}_i) -l_i }{u_i - l_i} \right) \\
        & + \frac{1}{2}\left[  \Phi^{-1}\left( \frac{\bar{Q}_{\pi^*}(s_i, a^{(1)}_i) -l_i }{u_i - l_i} \right) \right.\\
        & - \left. \Phi^{-1}\left( \frac{\bar{Q}_{\pi^*}(s_i+\epsilon_i, a^{(2)}_i) -l_i }{u_i - l_i} \right)  \right] \\
        \overset{(c)}{\leq} & \Phi^{-1}\left( \frac{\bar{Q}_{\pi^*}(s_i, a^{(2)}_i) -l_i }{u_i - l_i} \right) \\
        & + \frac{1}{2}\left[  \Phi^{-1}\left( \frac{\bar{Q}_{\pi^*}(s_i, a^{(1)}_i) -l_i }{u_i - l_i} \right) \right.\\
        & - \left. \Phi^{-1}\left( \frac{\bar{Q}_{\pi^*}(s_i, a^{(2)}_i) -l_i }{u_i - l_i} \right)  \right]
    \end{aligned}
\end{equation}
\normalsize
Herein, $(a)$ and $(b)$ hold according to Equation~\ref{eq:perturb_gap1}, Equation~\ref{eq:perturb_gap2}, and Condition~\ref{cond:within_radius}. 
$(c)$ is due to $\bar{Q}_{\pi^*}(s_i, a^{'}_i) \leq \bar{Q}_{\pi^*}(s_i, a^{(2)}_i) $ and the monotonicity of $\Phi^{-1}(\cdot)$.

Therefore, we have
\footnotesize
\begin{equation}
    \begin{aligned}
        & \Phi^{-1}\left( \frac{\bar{Q}_{\pi^*}(s_i+\delta_i, a^{(1)}_i) -l_i }{u_i - l_i} \right)
   \geq \Phi^{-1}\left( \frac{\bar{Q}_{\pi^*}(s_i+\delta_i, a^{'}_i) -l_i }{u_i - l_i} \right) \\
   \iff & \bar{Q}_{\pi^*}(s_i+\delta_i, a^{(1)}_i) \geq \bar{Q}_{\pi^*}(s_i+\delta_i, a^{'}_i),\, \forall a^{'}_i\neq a^{(1)}_i.
    \end{aligned}
\end{equation}
\normalsize
This means that the top-1 action recommended by the policy retains the highest expected return at step $t$.
If each perturbation in $\{\delta_i\}_{i=t}^{T-1}$ follows the above condition, the optimal expected return will not be decreased.
\end{proof}

%%%%%%%%%%%%%%%%%%%%%%%%%%%%%%%%%%%%%%%%%%%%%%%%%%%%%%%%%%%%%%%%%%%%%%%%%%%%%%%%%%%%%%%%%%%%%%%%%%%%%%%%%%%%%%%%%%%%%%%%%%%%%%%%%%%%%%%%%%%%
\section{Experiment Settings}\label{append:exp}

% %%%%%%%%%%%%%%%%%%%%%%%%%%%%%%%%%%%%%%%%%%%%%%%%%%%%%%%%%%%%%%%%%%%%%%%%%%%%%%%%%%
\subsection{Hyper-Parameters and Settings}\label{appendsub:param_setting}
\noindent\textbf{Cartpole.~}
For training on Cartpole-1 and Cartpole-5 using \texttt{CAMP}, we employ a buffer size of 100k, a total of 500k training steps, a batch size of 1024, and a learning rate of $5 \times 10^{-5}$. 
Training begins after 1000 burn-in steps. 
The training frequency is set to 256, with each step involving 128 gradient updates. 
The target Q-network is updated every 10 steps with a Polyak averaging rate of 1. 
An $\epsilon$-greedy policy is used during training, where $\epsilon$ follows a linear schedule starting from 1 and decreasing to 0 over the first 16\% of training steps. 
Validation occurs every 2000 training steps, with the validation return averaged over 10 validation episodes.

\noindent\textbf{Highway.~}
A buffer size of 15k is used for Highway. 
The training lasts for 1000k steps, including 1k burn-in steps.
$\epsilon$ follows a linear schedule starting from 1 and decreasing to 0.05 over the first 10\% of training steps.
The other hyper-parameters are the same as those in the Cartpole environments.

\noindent\textbf{Atari.~}
The reference network training involves a buffer size of 10k, a total of 10 million training steps, a batch size of 32, and a learning rate of $1 \times 10^{-4}$.
Training begins after a burn-in period of 100k steps. 
The training frequency is set to 4, with each training step involving a single gradient step. 
The target Q-network is updated every 1000 steps with a Polyak rate of 1. 
The $\epsilon$-greedy policy's $\epsilon$ value follows a linear schedule, starting at 1 and gradually decreasing to 0.01 over the first 10\% of the training steps. 
Validation occurs every 100k training steps, with the validation return averaged over 100 episodes. 
To prevent excessively long gameplay, the frame number is capped at 250 for all games.
The primary Q-networks are trained for 1 million steps using a 0.01-greedy policy, with validation performed every 1,000 training steps. 
All other settings remain consistent with those used for training reference policies.

We also observed a discrepancy between the certification and empirical robustness results of our Gaussian baseline and those reported in the PS paper. 
The discrepancy may stem from differences in training settings. 
First, our method is implemented using the Clean-RL library~\cite{huang2022cleanrl}, whereas the other implementation is based on Stable Baselines 3. 
Second, during Atari game training, we normalize the observed pixel values to the range $[0,1]$, while the other implementation uses observations in the range $[0,255]$.

%%%%%%%%%%%%%%%%%%%%%%%%%%%%%%%%%%%%%
\begin{table}[t]
\caption{Network Architectures}
\label{table:arc}
\centering
\resizebox{.65\columnwidth}{!}{%
\begin{tabular}{cc}
\toprule
        MLP        &           Nature CNN \\
\midrule
      $input\ dim$   &          $84 \times 84$         \\
      Linear $input\ dim \times 256$    &       $Conv2d(4, 32, 8, stride=4)$ \\
            ReLU    &       ReLU  \\
      Linear $256 \times 256$   &       $Conv2d(32, 64, 4, stride=2)$ \\
            ReLU    &       ReLU  \\
 Linear $256 \times action\ dim$  &       $Conv2d(64, 64, 3, stride=1)$ \\
        -           &       ReLU \\      
        -           &       Flatten() \\      
        -           &       Linear $3136 \times 512$ \\  
        -           &       ReLU \\           
        -           &       Linear $512 \times action\ dim$ \\           
\bottomrule
\end{tabular}
}
\end{table}
%%%%%%%%%%%%%%%%%%%%%%%%%%%%%%%%%%%%%

%%%%%%%%%%%%%%%%%%%%%%%%%%%%%%%%%%%%%
\begin{algorithm}[t]
\footnotesize
\caption{Empirical Attack}\label{alg:emp_attack}
\KwIn{Budget $\tau$, agent with Q-network $\pi$, attack step size $\alpha$, multiplier $\beta$, episode length $T$, Q value gap $q$, environment $\tE$ with state transition function $\gT$ and reward function $\gR$.}
Initialize currently available budget $c = \tau$\\
Initialize $s_0$\\
Initialize $\Delta = \emptyset$\\
\For{$t \in \{0, 1, 2,..., T-1\}$}
{   
    $a_t^* \gets\ \argmax_{a} \pi(s_t)_a$\\
    $Q_t^* \gets\ \pi(s_t)_{a_t^*}$\\
    ${s'}^*_t \gets\ s_t$\\
    $step\_count \gets\ \floor{\beta * c / \alpha}$\\
    \For{$a'\in\sA, a'\neq a_t^* and |a' - a_t| \geq q$}
    {   
        ${s}'_t \gets\ s_t$\\
        \For{$i\in\{1, 2, ..., step\_count\}$}
        {   
            $a_{predicted} \gets\ \argmax_{a}{\pi({s}'_t)_a}$\\
            \If{$a_{predicted} == a'$}
            {
                \If{$\pi(s_t)_{a_{predicted}} < Q_t^*$}
                {
                    $Q_t^* \gets\ \pi(s_t)_{a_{predicted}}$\\
                    ${s'}^*_t \gets\ {s}'_t$\\
                    \textbf{break}\\
                }
            }
            $l_{CE} \gets\ CE(\pi({s}'_t), One\_Hot(a'))$\\
            ${s}'_t \gets\ Perturb\_Step({s}'_t; c, l_{CE}, \alpha)$\\
        }
    }
    $c \gets\ \sqrt{c^2 - \|{s'}^*_t - s_t\|_2^2}$\\
    \If{$c < 0$}
    {
        \textbf{break}\\
    }
    $\Delta \gets\ \Delta \cup\{{s'}^*_t - s_t\}$\\
    $ a_{t}\ \gets\ \argmax_a \tilde{\pi}({s'}^*_t)_a$ \\
    $s_{t+1},\ d_t, r_{t+1}\ \gets \gT({s}_{t}, a_{t}),\ \gR({s}_{t}, a_{t})$ \\
}
\KwOut{Observation perturbations $\Delta$}
\end{algorithm}
%%%%%%%%%%%%%%%%%%%%%%%%%%%%%%%%%%%%%

%%%%%%%%%%%%%%%%%%%%%%%%%%%%%%%%%%%%%%%%%%%%%%%%%%%%%%%%%%%%%%%%%%%%%%%%%%%%%%%%%%
\subsection{Additional Results}\label{appendsub:full_certification_results}
The full certification results in Cartpole-1/5 and Highway across different $\sigma$ setups are presented in Figure~\ref{fig:full_cert_comparison}.
It is evident that \texttt{CAMP} consistently outperforms the baseline methods across all $\sigma$ levels, with the most significant improvement occurring in Cartpole-5 when $\sigma=0.4$.

%%%%%%%%%%%%%%%%%%%%%%%%%%%%%%%%%%%%%%%%%%%%%%%%%%%%%%%%%%%%%%%%%%%%%%%%%%%%%%%%%%
\subsection{Model Architectures}\label{appendsub:arc}
We list the architectures of the Q-networks used in this paper.
For classic control problems, we use MLP following previous papers. 
Nature CNN is used elsewhere for Atari games.
Table~\ref{table:arc} summarizes the architectures of these two networks.

%%%%%%%%%%%%%%%%%%%%%%%%%%%%%%%%%%%%%%%%%%%%%%%%%%%%%%%%%%%%%%%%%%%%%%%%%%%%%%%%%%%%%%%%%%%%%%%%%%%%%%%%%%%%%%%%%%%%%%%%%%%%%%%%%%%%%%%%%%%%
\section{Additional Algorithmic Details}\label{append:algorithms}
We provide the details of the empirical attacks used to evaluate the robustness of the trained policies in Algorithm~\ref{alg:emp_attack}.
$Perturb\_Step({s}'_t; c, l_{CE}, \alpha)$ represents either a single PGD or APGD step as defined in their original literature(\ie PGD step~\cite{madry2017towards}, or Line 7-Line 16, Algorithm 1 in APGD~\cite{croce2020reliable}).
$l_{CE}$ is the Cross-Entropy loss between predicted logits and the one-hot encoding of the target action. 

We slightly modify the APGD attack to attack DRL agents. 
Specifically, we omit random initialization of adversarial examples since it quickly runs out of perturbation budget and leads to unsuccessful attacks.
Moreover, we decrease the initial step size $\alpha$ to $0.05*\tau$ ($0.5*\tau$ in Bank Heist), such that the budget can be better preserved and allocated across the episode.
In both attacks, the step count at each time step is calculated as $\beta * c / \alpha$, where $c$ is the perturbation budget available at the current step and $\beta=2$.
On the side of PGD, we apply the normalized gradients rather than gradient signs in each attack step.
The step size $\alpha$ in PGD is 0.01.

APGD halves its step size during the attack process, allowing it to meet the attack criteria (i.e., causing observations to be classified into target actions by the Q-network) with smaller perturbations at each step.
As a result, APGD may induce a more significant drop in the average return.
Nevertheless, across all environments, \texttt{CAMP} agents consistently demonstrate more robust performance compared to baseline agents.

\newpage
\includepdf[pages=-]{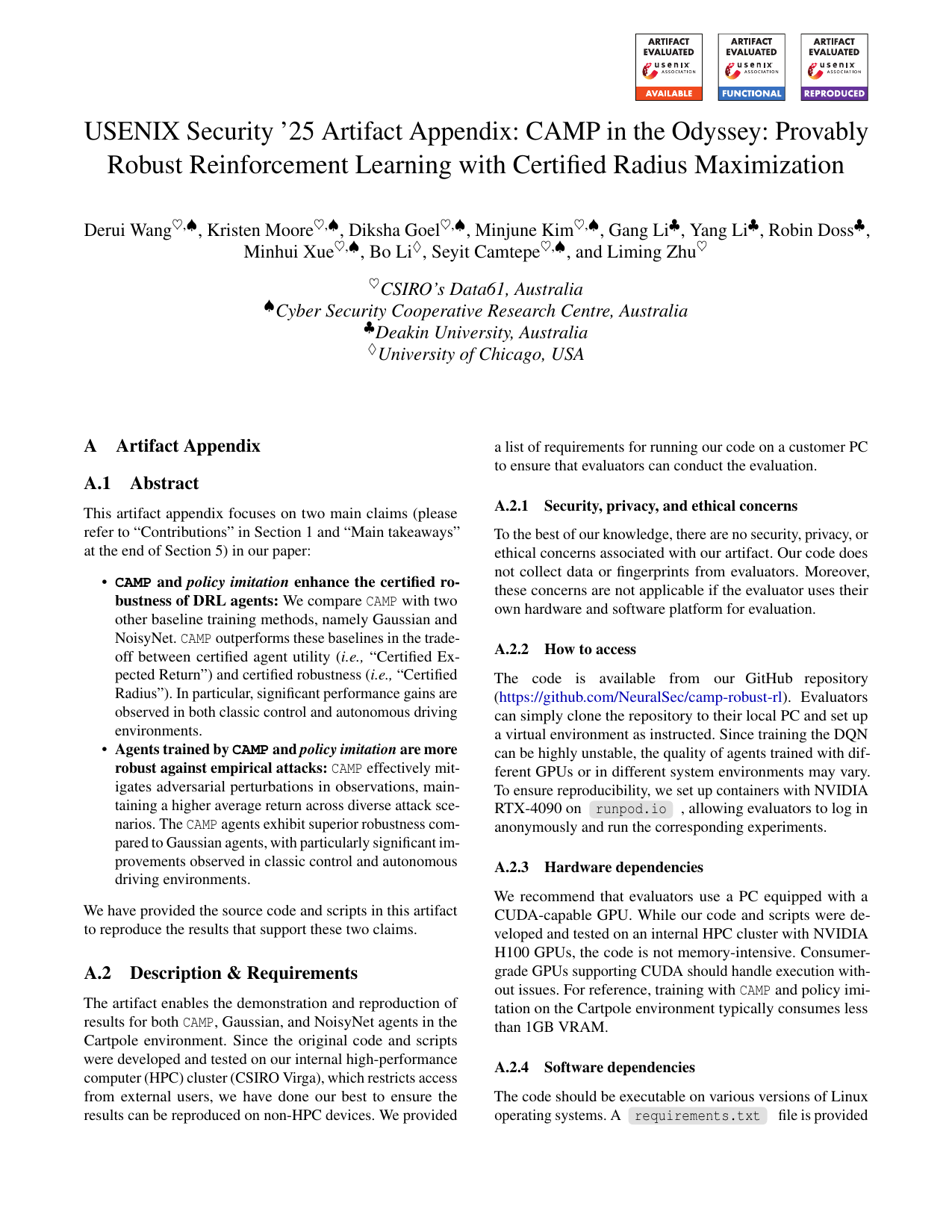}

%%%%%%%%%%%%%%%%%%%%%%%%%%%%%%%%%%%%%%%%%%%%%%%%%%%%%%%%%%%%%%%%%%%%%%%%%%%%%%%%
\end{document}